\newcommand{\cmark}{\ding{51}}%
\newcommand{\xmark}{\ding{55}}%
\def\prob{\mathbf{prob}}
\renewcommand\prob[1]{\mathbf{prob}(\!\!(#1)\!\!)}
\newcommand{\Mleft}{\mathrm{M}^0}
\newcommand{\Mright}{\mathrm{M}^1}
\newcommand\tempDeleted[1]{\textcolor{red}{[temporally deleted]}}
\newcommand{\our}{SONG}
\newcommand{\sbdg}{SBDG}
\DeclareMathOperator*{\argmax}{arg\,max}
\def\R{\mathbb{R}}
\def\G{\mathcal{G}}
\def\V{\mathcal{V}}
\def\E{\mathcal{E}}
\def\MM{\mathrm{M}}
\def\acc{\mathrm{acc}}
\newtheorem{theorem}{Theorem}[section]
\begin{document}

\title{\our{}s: Self-Organizing Neural Graphs}

\author{\L{}ukasz Struski$^1$
\and 
Tomasz Danel$^1$
\and
Marek \'Smieja$^1$
\and
Jacek Tabor$^1$
\and
Bartosz Zieli\'nski$^{1, 2}$
\\
\\
$^1$ Faculty of Mathematics and Computer Science, Jagiellonian University, Krak\'ow, Poland\\
$^2$ IDEAS NCBR, Warsaw, Poland \\
{\tt\small \{lukasz.struski;marek.smieja;jacek.tabor;bartosz.zielinski\}@uj.edu.pl}\\
{\tt\small \{tomasz.danel\}@doctoral.uj.edu.pl}
}


\maketitle
\thispagestyle{empty}

\begin{abstract}
Recent years have seen a surge in research on combining deep neural networks with other methods, including decision trees and graphs. There are at least three advantages of incorporating decision trees and graphs: they are easy to interpret since they are based on sequential decisions, they can make decisions faster, and they provide a hierarchy of classes. However, one of the well-known drawbacks of decision trees, as compared to decision graphs, is that decision trees cannot reuse the decision nodes. Nevertheless, decision graphs were not commonly used in deep learning due to the lack of efficient gradient-based training techniques. In this paper, we fill this gap and provide a general paradigm based on Markov processes, which allows for efficient training of the special type of decision graphs, which we call Self-Organizing Neural Graphs (SONG). We provide a theoretical study on SONG, complemented by experiments conducted on Letter, Connect4, MNIST, CIFAR, and TinyImageNet datasets, showing that our method performs on par or better than existing decision models.
\end{abstract}

\begin{figure*}[bth]
    \centering
    \begin{subfigure}[b]{.27\textwidth}
        \centering
        \hspace{-6mm}
        \includegraphics[width=0.8\textwidth]{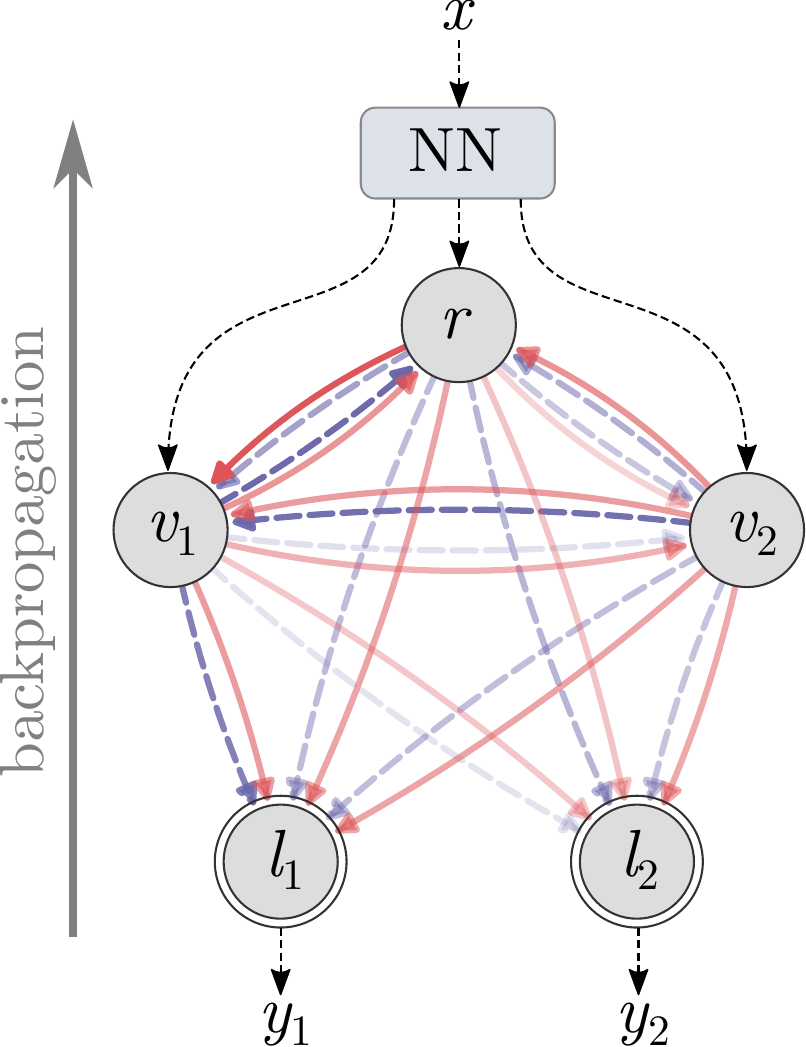}
        \caption{random initialization}
        \label{fig:abstract1}
    \end{subfigure}
    \hfill
    \begin{subfigure}[b]{.27\textwidth}
        \centering
       \hspace{-6mm}
       \includegraphics[width=0.8\textwidth]{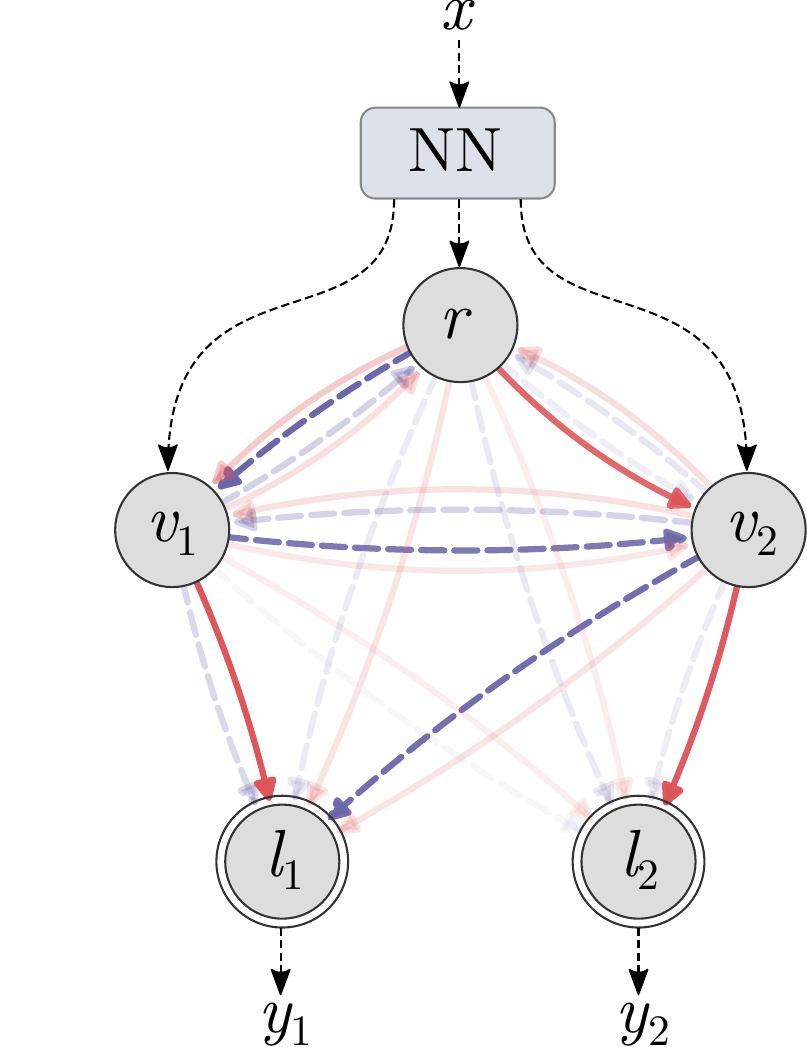}
        \caption{intermediate phase}
        \label{fig:abstract2}
    \end{subfigure}
    \hfill
    \begin{subfigure}[b]{.27\textwidth}
        \centering
       \hspace{-6mm}
       \includegraphics[width=0.8\textwidth]{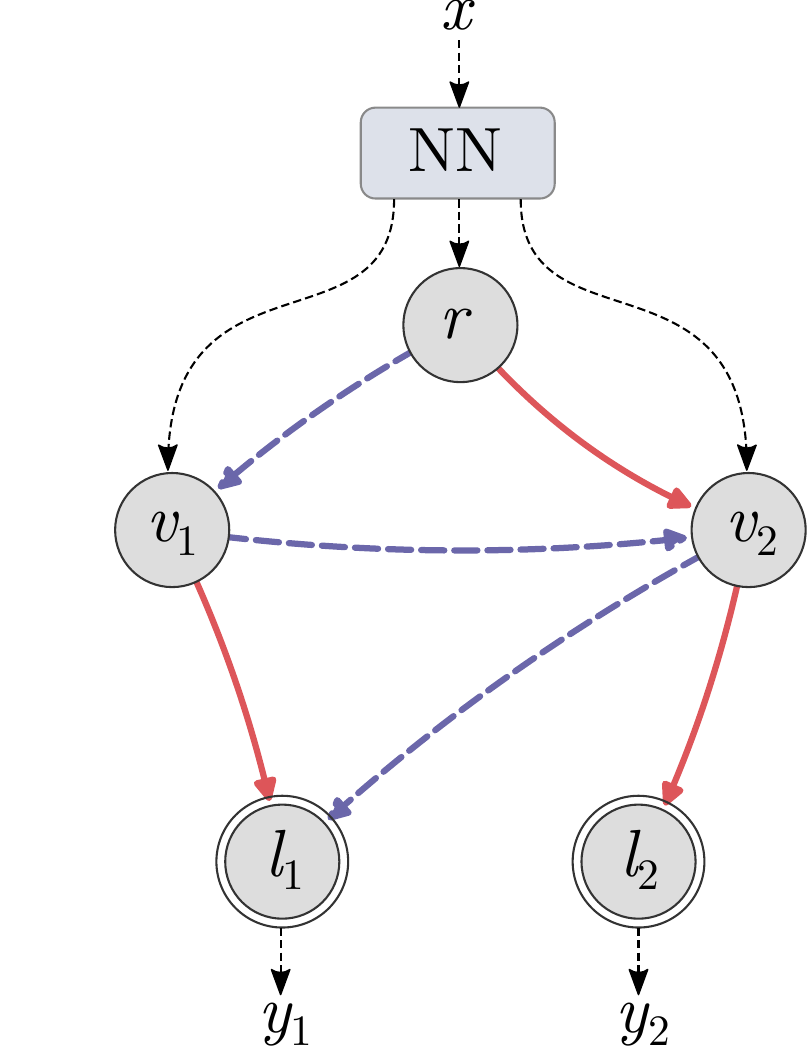}
        \caption{trained SONG}
        \label{fig:abstract3}
    \end{subfigure}
    \caption{Training stages of SONG that uses gradient descent to modify the graph structure and transition probabilities. Based on an input $x$, the backbone neural network (NN) extracts a vector representation, which is passed to SONG to obtain a prediction for each class ($y_1$ and $y_2$). At the beginning of training, a graph has root $r$, nodes $v_1$ and $v_2$, leaves $l_1$ and $l_2$, and randomly initialized edges (a). In the successive training iterations, the entropy of edge weights grows (b), finally resulting in a sparse binary graph, with two strong edges outgoing from each node (c). Notice that SONG contains two alternative sets of edges between the nodes (dashed blue arrows and solid red arrows, respectively) that are combined based on the input (see Figure~\ref{fig:transitions} for details).}
    \label{fig:abstract}
\end{figure*}

\section{Introduction}

Neural networks (NNs) and decision trees (DTs) are two exceptionally powerful machine learning models with a rich and successful history in machine learning.
However, they typically come with mutually exclusive benefits and limitations. NNs outperform conventional pipelines by jointly learning to represent and classify data~\cite{krizhevsky2012imagenet}. However, they are widely opaque and suffer from a lack of transparency and explainability~\cite{rudin2019stop}. On the other hand, it is easy to explain predictions of DTs because they depend on a relatively short sequence of decisions~\cite{wan2020nbdt}. However, they usually do not generalize as well as deep neural networks~\cite{frosst2017distilling}. As a result, a strong focus is recently put on joining the positive aspects of both models~\cite{frosst2017distilling,suarez1999globally,kontschieder2015deep,nauta2020neural,murthy2016deep,tanno2019adaptive,alaniz2021learning,wan2020nbdt}.
There are methods that combine NNs and soft decision trees with partial membership in each node~\cite{frosst2017distilling,suarez1999globally,kontschieder2015deep,nauta2020neural}. Others use trees to explain NNs~\cite{zhang2019interpreting,bastani2017interpreting} or to obtain their optimal hierarchical structure~\cite{murthy2016deep,tanno2019adaptive,alaniz2021learning}. Finally, some models replace the final softmax layer of a neural network with a hierarchical binary decision tree~\cite{wan2020nbdt,morin2005hierarchical,mikolov2011extensions}.

While decision trees can increase the performance and interpretability of NNs, they usually suffer from exponential growth with depth~\cite{shotton2012efficient}, repeating nodes~\cite{frosst2017distilling}, and suboptimal structure, often selected manually before training~\cite{wan2020nbdt}. Hence, more and more attention is put on combining NNs with decision graphs instead of trees~\cite{baek2017deep,veit2018convolutional,he2016deep,mullapudi2018hydranets,keskin2018splinenets}. Decision graphs have a few advantages when compared to decision trees. They have a flexible structure that allows multiple paths from the root to each leaf. As a result, nodes are reused, resulting in simpler and smaller models, which solves the replication problem~\cite{oliver1993decision}. Moreover, decision graphs require substantially less memory while considerably improving generalization~\cite{shotton2016decision}. Nevertheless, decision graphs are not commonly used in deep learning due to a lack of efficient gradient-based training techniques.

In this paper, we introduce Self-Organizing Neural Graphs (SONGs), a special type of decision graphs that generalize methods like Soft Decision Tree (SDT)~\cite{frosst2017distilling} and Neural-Backed Decision Trees (NBDT)~\cite{wan2020nbdt}, and as a differentiable solution are applicable to any deep learning pipeline. Moreover, in contrast to the fixed structure of the existing methods~\cite{wan2020nbdt,frosst2017distilling}, SONGs can strengthen or weaken an edge between any pair of nodes during training to optimize their structure. We illustrate this process in Figure~\ref{fig:abstract}. In the beginning, the edges have random weights. However, in successive steps of training, the structure is corrected with backpropagation, and it gets sparse and converges to the binary directed acyclic graphs~\cite{platt1999large}.


Our contributions can be summarized as follows:
\begin{compactitem}
    \item We introduce Self-Organizing Neural Graphs (SONGs), a new paradigm of end-to-end training based on Markov processes that simultaneously learn the optimal graph structure and transition probabilities.
    \item Our model is fully differentiable and thus suitable for combined training with other deep learning models.
    \item We prove empirically and theoretically that SONGs during training converge to sparse binary acyclic graphs.
    \item Our method performs on par or outperforms decision trees trained in a similar setup and does not require the graph/tree structure to be predefined before training.
\end{compactitem}

\section{Related works}

\subsection{Decision trees}
Numerous Decision Tree (DT) algorithms have been developed over the years~\cite{quinlan2014c4,loh2011classification,shafer1996sprint,mehta1996sliq} and after the success of deep learning, much research relates to combining DTs with neural networks. As a result, Soft Decision Tree (SDT) was introduced, allowing for the partial membership of a sample in the nodes that make up the tree structure~\cite{suarez1999globally}, also trained in distillation setup~\cite{frosst2017distilling}. This idea was also used in~\cite{kontschieder2015deep} that trains a set of classification trees and a backbone network in an end-to-end manner. Moreover, it was recently used in~\cite{nauta2020neural} to faithfully visualize the model using nodes with prototypes~\cite{chen2018looks} instead of classifiers. Trees were also used to explain the previously trained black box models~\cite{zhang2019interpreting,bastani2017interpreting}.
More advanced methods automatically generate deep networks with a tree structure in a multi-step or an end-to-end manner~\cite{ murthy2016deep,alaniz2021learning,wan2020nbdt,ahmed2016network}. Many previous works were generalized by the ANT framework \cite{tanno2019adaptive}, which additionally enriched the tree structure with transformer, a nonlinear function that maps samples from the previous module to the next one, and allows for training the tree topology.
In contrast to the these methods, we design a simple and effective model for training decision graphs together with the parameters of the base neural network in the end-to-end manner. 

\subsection{Decision graphs} A decision graph is a well-studied classifier and has been used to solve many real-world problems~\cite{sudo2018efficient}. When implemented as Directed Acyclic Graphs (DAG), it leads to accurate predictions while having lower model complexity, subtree replication, and training data fragmentation compared to decision trees~\cite{shotton2016decision}. However, most of the existing algorithms for learning DAGs involve training a conventional tree that is later manipulated into a DAG~\cite{kohavi1995oblivious,oliveira1996using,oliver1992decision,chou1991optimal} and, as such, are difficult to be directly adopted into neural networks. Hence, alternative approaches were proposed, like~\cite{baek2017deep}, which maintains the structure of the standard convolutional neural networks (CNNs) but uses additional routing losses at each layer to maximize the class-wise purity (like in growing decision trees) using data activation according to the class label distribution. Another method~\cite{veit2018convolutional} introduces identity skip-connections similar to ResNets~\cite{he2016deep} that are executed or skipped depending on the gate response for an input. A similar gate mechanism was used in~\cite{mullapudi2018hydranets} to choose branches specialized for different inputs, whose outputs are combined to make the final predictions. Finally, \cite{keskin2018splinenets} embeds infinitely many filters into low dimensional manifolds parameterized by compact B-splines and maximizes the mutual information between spline positions and class labels to specialize for classification tasks optimally. Such a mechanism significantly reduces runtime complexity. In contrast to existing methods, SONG is a directed graph that can be adapted to any deep architecture and trained in an efficient gradient-based manner.



\begin{figure*}
    \centering
    \begin{subfigure}[b]{.45\textwidth}
        \centering
        \includegraphics[height=5cm]{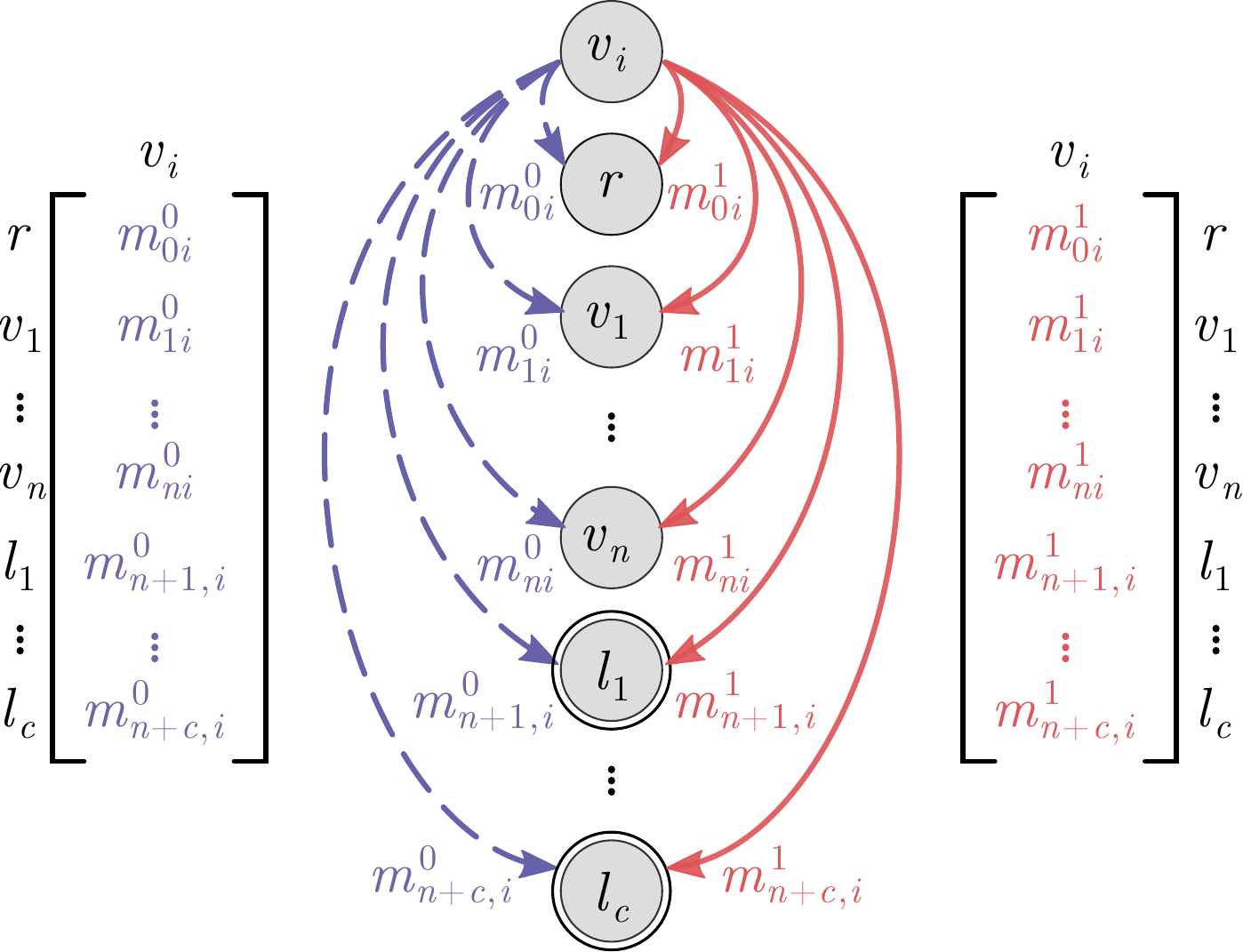}
        \caption{transition vectors}
        \label{fig:transition_vectors}
    \end{subfigure}
    \hfill
    \begin{subfigure}[b]{.4\textwidth}
        \centering
        \includegraphics[height=5cm]{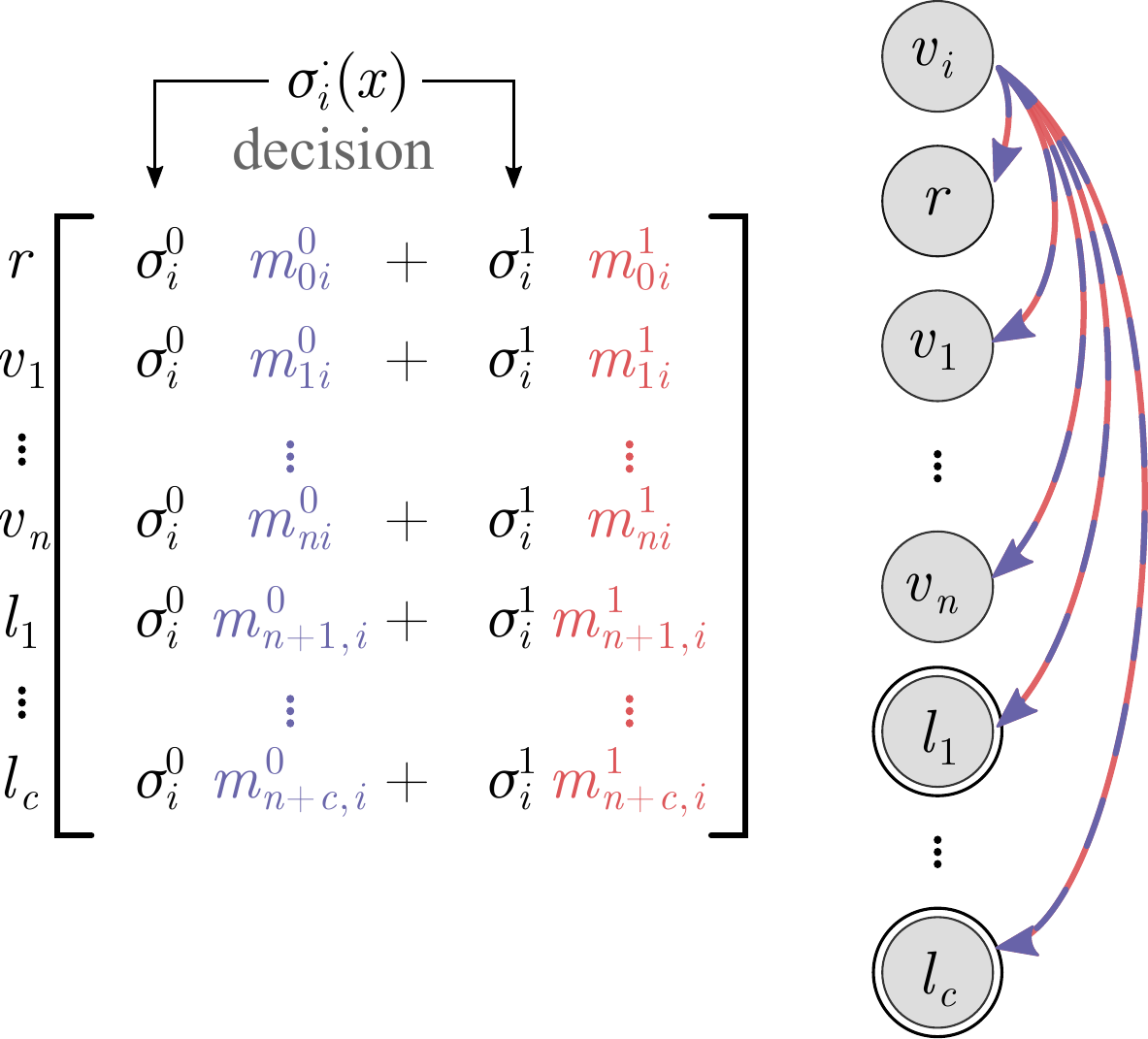}
        \caption{combination of transition vectors}
        \label{fig:node_rooting}
    \end{subfigure}
    \caption{SONG contains two alternative transition vectors $m_{\cdot i}^0$ and $m_{\cdot i}^1$ that aggregate the probability of moving from a particular node $v_i$ to all other nodes. In (a), they are represented as dashed blue and solid red arrows, respectively. Each node obtains input data $x$ and makes a binary decision with probabilities $\sigma_i^0$ and $\sigma_i^1$ of using one transition or another. As $\sigma_i^0 + \sigma_i^1 = 1$, SONG can be transformed to a standard directed graph by combining $m_{\cdot{}i}^0$ and $m_{\cdot{}i}^1$, as presented in (b). During training, both $\sigma_i^{\cdot}$ and $m_{\cdot i}^{\cdot}$ are trained to obtain the optimal decision graph as presented in Figure 1 of the paper.}
    \label{fig:transitions}
\end{figure*}

\section{Self-organizing neural graphs}
\label{sec:song}

To adequately describe the Self-Organizing Neural Graph (SONG), we first define a more abstract structure that we call Soft Binary Directed Graph (\sbdg{}). \sbdg{} is considered binary because there are two alternative sets of edges, and soft because those sets are combined into one target set of edges depending on the input. Then, based on \sbdg{}, we define SONG and describe how to use them as a decision model. Finally, we present method limitations and show how to overcome them with additional regularizers. The below definitions correspond to single-label classification for the clarity of description. However, they could be easily extended to other tasks, like multi-label classification or regression.

\subsection{Method}

\subsubsection{Soft binary directed graphs}
Soft Binary Directed Graph (\sbdg{}) is a directed graph, which can be viewed as a probabilistic model. It is defined as graph $G=(V,E^0,E^1)$, with $V$ corresponding to a set of nodes and $E^0$, $E^1$ corresponding to two alternative sets of edges, where:
\begin{compactitem}
  \item Set $V$ contains two types of nodes:
  \begin{compactitem}
    \item internal nodes $v_0,\dots,v_n$, with $v_0$ specified as root $r$,
    \item leaves $l_1,\dots,l_c$, each exclusively associated with one class from set $\{1,\dots,c\}$,
  \end{compactitem}
  \item Set $E^d$, for $d\in \{0,1\}$, contains all possible edges with weights $m_{ji}^d$ corresponding to the probability of moving from node $u_i$ to $u_j\in V$, as presented in Figure~\ref{fig:transition_vectors}. In the following, the aggregated probabilities of moving from node $u_i$ to other nodes will be called a transition vector and denoted as $m_{\cdot{}i}^d$.
  \item If $u_i$ is a leaf, then $m^d_{ji}=\delta_{ji}$ (Kronecker delta), which means that it is impossible to move out from the leaves.
  \item Each internal node $u_i$ makes binary decisions $d\in \{0, 1\}$ with probabilities $\sigma_i^d$ of using edges from set $E^d$.
  \item $\sigma_i^0 + \sigma_i^1 = 1$ and $G$ can be transformed to a standard directed graph by combining $m_{\cdot{}i}^0$ and $m_{\cdot{}i}^1$ using the following formula for each node $u_i$: $\sigma_i^0 m_{\cdot{}i}^0 + \sigma_i^1 m_{\cdot{}i}^1$. This process is presented in Figure~\ref{fig:node_rooting}.
\end{compactitem}
Notice that if all transition vectors are binary, then after removing the edges with zero probability, \sbdg{} becomes a binary directed graphs~\cite{platt1999large}.

\begin{figure*}
    \centering
    \includegraphics[width=.9\textwidth]{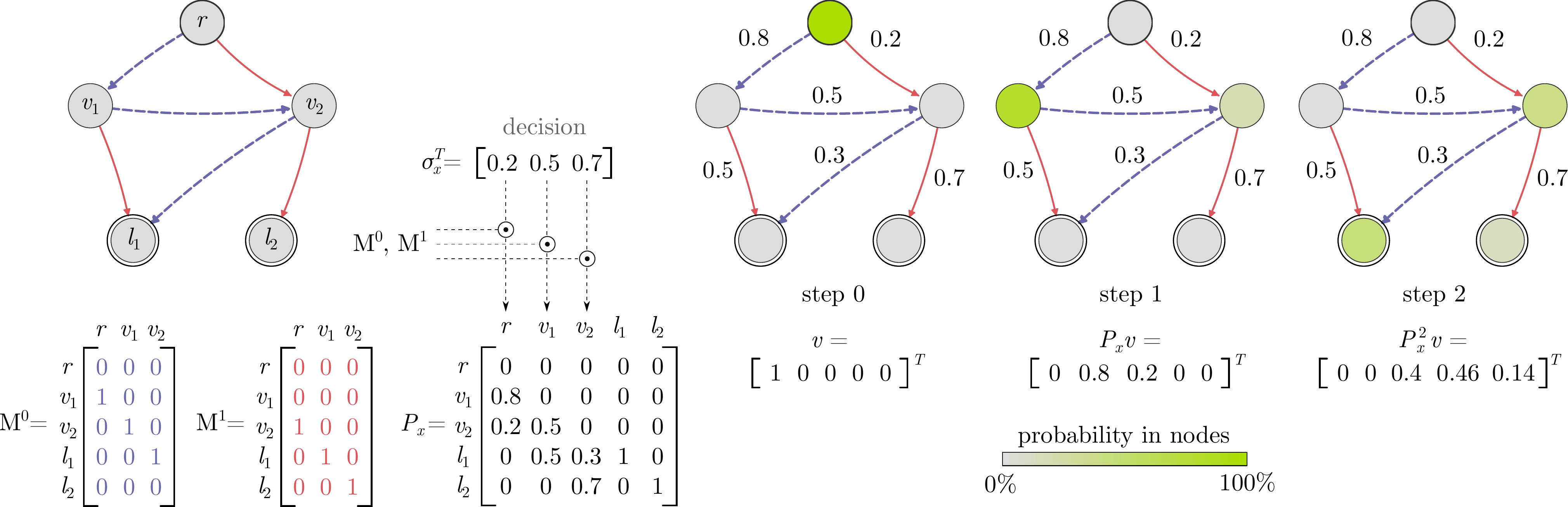}
    \caption{Construction of the transition matrix and successive steps of our Markov process. On the left, a graph with its matrices $\Mleft$ and $\Mright$ is presented, followed by an exemplary decision vector $\sigma_x$ and the resulting matrix $P_x$. On the right, the flow in a graph is depicted for 3 consecutive steps. At first, the probability is entirely placed in the root. However, in the next steps, the distribution splits between nodes according to the transition probabilities, reaching leaves in step 3. The probabilities in the leaves after all steps are class probabilities inferred by the model (the number of steps is considered as a method hyperparameter).}
    \label{fig:song}
\end{figure*}

\subsubsection{Self-organizing neural graphs}
 Self-Organizing Neural Graph (SONG) is a fully differentiable adaptation of \sbdg{} that can be combined with various deep architectures. SONG is defined as $\G=(\V,\E^0,\E^1)$, where $\V$, $\E^0$, $\E^1$ implement $V$, $E^0$, and $E^1$ of \sbdg{}, and are obtained for input point $x$ in the following way:
\begin{compactitem}
  \item The probability of decision $d=1$ in node $u_i$ is obtained as $\sigma_i^1(x)=\sigma(x w_i + b_i)$, where $\sigma$ is the sigmoid logistic function, $w_i$ is a filter function, and $b_i$ is a bias\footnote{In practice, this probability could also be obtained with any NN that ends with a sigmoid function.}.
  \item The probability of decision $d=0$ equals $\sigma_i^0(x) = 1 - \sigma_i^1(x)$.
  \item The probability of moving from internal nodes is defined by two matrices $\MM^d = [m^d_{ji}] \in\R^{(n+c)\times n}$, for $d=\{0, 1\}$, with positive values and columns summing up to $1$. In our implementation, we obtain such matrices by applying softmax to each of their columns.
\end{compactitem}
Notice that $\{w_i\}_{i=1,\ldots,n}$, $\{b_i\}_{i=1,\ldots,n}$, $\MM^0$, and $\MM^1$ are trainable parameters of the model.

Finally, we define a directed graph $\G_x = (\V, \E)$ generated for input $x$ where $\E$ corresponds to the combination of matrices $\MM^0$ and $\MM^1$:
\begin{equation}\label{eq:matrix_Mx}
\setlength{\abovedisplayskip}{5pt} \setlength{\abovedisplayshortskip}{0pt}
\setlength{\belowdisplayskip}{5pt} \setlength{\belowdisplayshortskip}{0pt}
\MM_x = \mathbb{1}\sigma_x^T\odot\Mright + \mathbb{1}(\mathbb{1}-\sigma_x)^T\odot\Mleft,
\end{equation}
where $\sigma_x = [\sigma^1_0(x),\ldots, \sigma^1_n(x)]^T$, symbol $\odot$ denotes the Hadamard product, and $\mathbb{1}$ is the all-ones vector of dimension $n$.

\subsubsection{Decision model}
Matrix $\MM_x$ contains the probability of moving from internal nodes to all nodes of the graph. However, to apply the theory of the Markov processes, it needs to be extended by columns corresponding to the leaves (as presented on the left side of Figure~\ref{fig:song}):
\begin{equation}\label{eq:matrix_Px}
\abovedisplayskip=2pt
\abovedisplayshortskip=0pt
\belowdisplayskip=0pt
\belowdisplayshortskip=1pt
P_x = 
\left[
\begin{array}{c;{1pt/1pt}r}
    \mathrm{M}_x & 
    \begin{matrix} \mathrm{0} \\\hdashline[1pt/1pt] \mathrm{I} \end{matrix}
\end{array}
\right]\in\R^{(n+c)\times(n+c)},
\end{equation}
where $\mathrm{0}\in\R^{n\times c}$ is zero matrix and $\mathrm{I}\in\R^{c\times c}$ is an identity matrix. As a result, we obtain a square stochastic (transition) matrix used to describe the transitions of a Markov chain. While $P_x$ contains the probability of moving from $u_i$ to $u_j$ in one time step, it can be easily used to obtain a similar probability for $N$ steps by calculating the $N$-th power of $P_x$. Finally, the resulting matrix can be multiplied by vector $v=[1, 0, ..., 0]^T$ to obtain the probability of moving from the root to any node of the graph, including leaves, whose probability is the output of the model. We present a simple example illustrating this process on the right side of Figure~\ref{fig:song}. More examples are provided in the Supplementary Materials.

\subsubsection{Algorithm complexity}

The memory scales quadratically with the number of nodes $n$ due to the necessity of storing transition matrices $\mathrm{M}^0$ and $\mathrm{M}^1$.
For a single image $x$, the computational complexity is bounded by the Hadamard product used to produce matrix $\mathrm{M}_x$ (and $P_x$ consequently), which is $O(n^2)$. Considering also the number of steps $N$, vector $v$ is $N$ times multiplied by matrix $P_x$. Thus, the complexity is $O(n^2+N\cdot (n + c)^2)$, where $c$ is the number of leaves.

\begin{figure*}[t]
    \centering
    
    \hspace*{\fill}
    \begin{subfigure}[b]{.35\textwidth}
        \centering
        \includegraphics[width=.95\textwidth]{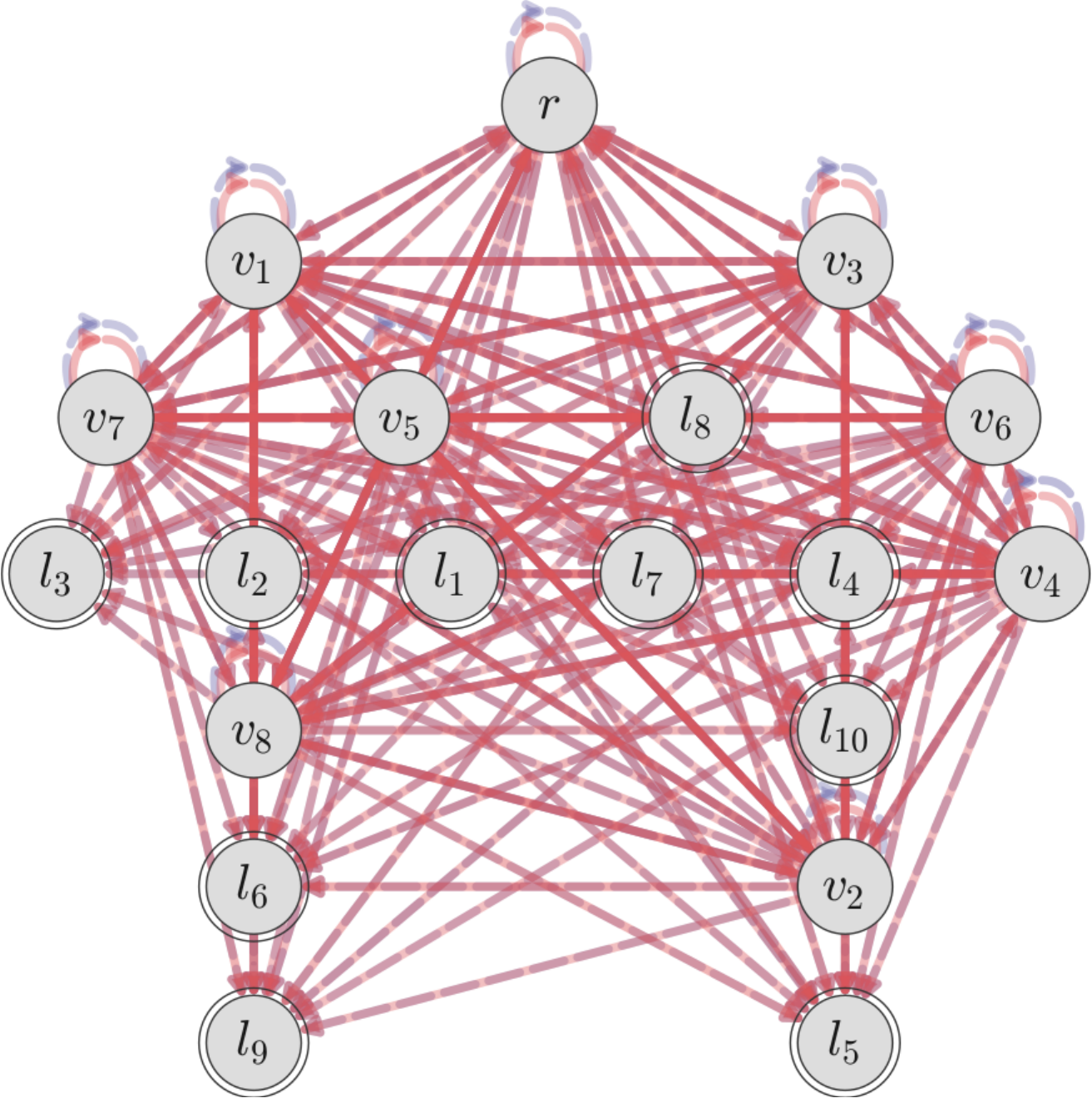}
        \caption{Graph corresponding to the initial $M^0$, $M^1$.}
    \end{subfigure}
    \hfill
    \begin{subfigure}[b]{.35\textwidth}
        \centering
        \includegraphics[width=.95\textwidth]{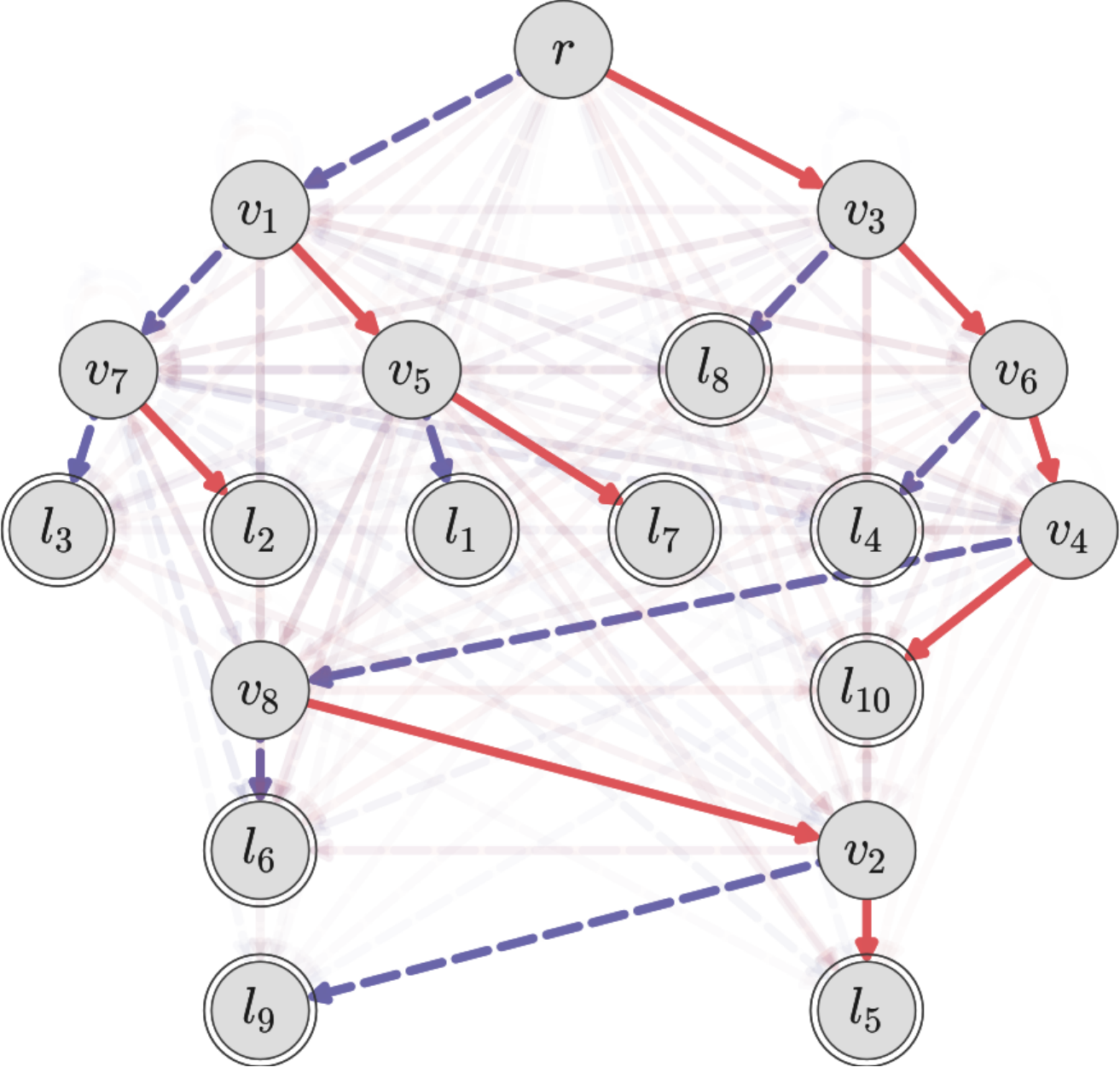}
        \caption{Graph corresponding to the trained $M^0$, $M^1$.}
    \end{subfigure}
    \hspace{1ex}
    \hspace*{\fill}
    \\  
    
    \hspace*{\fill}
    \begin{subfigure}[b]{.4\textwidth}
        \centering
        \includegraphics[width=\textwidth]{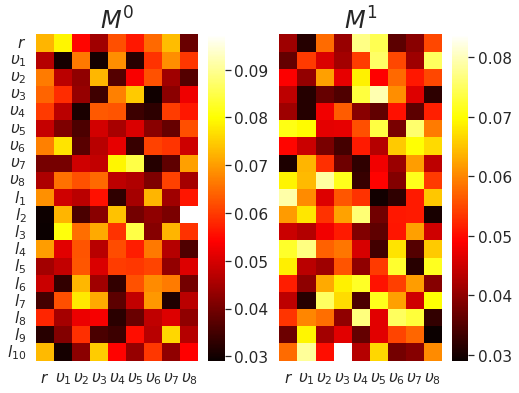}
        \caption{$M^0$, $M^1$ with initial values.}
        \label{fig:init_M}
    \end{subfigure}
    \hfill
    \begin{subfigure}[b]{.4\textwidth}
        \centering
        \includegraphics[width=\textwidth]{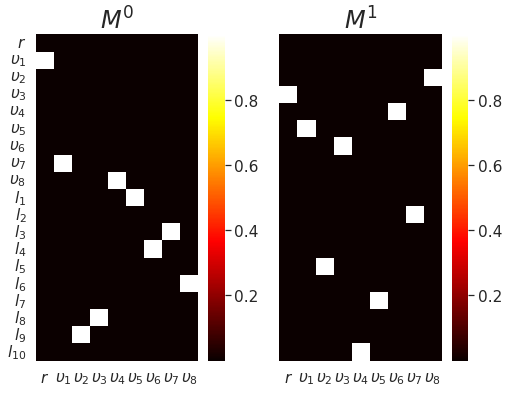}
        \caption{Trained $M^0$, $M^1$.}
        \label{fig:trained_M}
    \end{subfigure}
    \hspace*{\fill}
    
    \caption{Visualisation of the graphs with edges corresponding to the transition matrices $M^0$ (blue edges) and $M^1$ (red edges) of the SONG before and after training on the MNIST dataset with $9$ internal nodes. One can observe that SONG models binarize the connections during gradient training.}
    \label{fig:M_visualization}
\end{figure*}

\subsection{Regularizations}
Similarly as in Soft Decision Trees (SDT)~\cite{frosst2017distilling}, we observe that our graphs require additional training regularizers. The reasons for that are threefold.
First, SONG may get stuck on plateaus in which one or more $\sigma^d_i(x)$ is 0 for all input samples $x$, and the gradient of the sigmoid logistic function for this decision is always very close to zero.
Second, if SONG is uncertain of its predictions, it can safely hold the probabilities in internal nodes instead of moving them to leaves, which results in a small accumulated probability in the latter.
Third, SONG tends to binarize what is positive in general, but if this binarization appears too early, the model can get stuck in a local minimum.
Therefore, to prevent model degeneration, we introduce three types of regularization.

\subsubsection{Node regularization} The node regularization is a direct adaptation of the approach proposed by~\cite{frosst2017distilling}. It is used to avoid getting stuck at poor solutions by encouraging each internal node to make equal use of both left and right subtrees. In our approach, this regularization encourages each internal node to make equal use of both sets of edges $\E^0$ and $\E^1$. I.e., to send half of the training samples to one direction (using $M_0$) and half of them to the other direction (with $M_1$). For this purpose, we calculate the cross entropy between the desired average distribution $0.5$, $0.5$ for those two sets and the actual average distribution $\alpha_{i,s}$, $\beta_{i,s}$ in node $v_i$ at step $s$
\[
\setlength{\abovedisplayskip}{3pt} \setlength{\abovedisplayshortskip}{0pt}
\setlength{\belowdisplayskip}{3pt} \setlength{\belowdisplayshortskip}{0pt}
L_{nodes}=-\frac{\lambda}{2}\sum_{i=1}^n\log(\alpha_{i,s}) + \log(\beta_{i,s}),
\]
where
\[
\setlength{\abovedisplayskip}{3pt} \setlength{\abovedisplayshortskip}{0pt}
\setlength{\belowdisplayskip}{3pt} \setlength{\belowdisplayshortskip}{0pt}
\alpha_{i,s}=\frac{\sum_{x\in B} (P_x^s r)_i\cdot (\sigma_i^1(x))^\gamma}{\sum_{x\in B} (P_x^s r)_i},
\]
\[
\setlength{\abovedisplayskip}{3pt} \setlength{\abovedisplayshortskip}{0pt}
\setlength{\belowdisplayskip}{3pt} \setlength{\belowdisplayshortskip}{0pt}
\beta_{i,s}=\frac{\sum_{x\in B} (P_x^s r)_i\cdot(\sigma_i^0(x))^\gamma}{\sum_{x\in B} (P_x^s r)_i},
\]
$B$ is a batch of samples used in an iteration, $\gamma \in [1, 2]$, and $(P_x^s r)_i$ corresponds to $i$th coordinate of vector $(P_x^s r)$.
One can observe that our node regularizer is calculated per node and step, and it is different from~\cite{frosst2017distilling}, where additional loss is computed once for each node. Moreover, we penalize model for making  uncertain decisions ($\sigma_{i,s}(x)\approx 0.5$) using the parameter $\gamma$.

\subsubsection{Leaves regularization}

The leaves regularization, enforcing the summary probabilities in leaves to be close to $1$, is defined as
\begin{equation}\label{eq:loss_leaves}
\setlength{\abovedisplayskip}{3pt} \setlength{\abovedisplayshortskip}{0pt}
\setlength{\belowdisplayskip}{3pt} \setlength{\belowdisplayshortskip}{0pt}
L_{leaves} = -\log\left(\sum_{i=n}^{n + c} (P_x^N r)_i\right),
\end{equation}
where $n$ is the number of nodes (excluding root indexed with $0$), $c$ is the number of leaves (classes), and $N$ is the number of steps.

\subsubsection{Gumbel-softmax} We use Gumbel-softmax~\cite{jang2016categorical} instead of softmax to each column of matrices $\MM_0$ and $\MM_1$ to explore the trajectories of the graph better. In other words, Gumbel-softmax introduces randomness, which results in a wider exploration of the graph structure in the optimization process.

\section{Theoretical analysis}
\label{sec:theory}

In the supplementary materials, we provide theoretical results, showing that the graph structure generated by \our{} is binarized during training, as presented in Figure~\ref{fig:M_visualization}.

\begin{table}[t]\small
    \centering
    \addtolength{\tabcolsep}{-3pt}
    \begin{tabular}{lcccrrrr}
    \toprule
    Method & Ex & SO & EE & MNIST & C10 & C100 & TIN \\
    \midrule
    DDN (NiN) & \xmark & \cmark & \xmark & - & 90.32 & 68.35 & -\\
    DCDJ (NiN) & \xmark & \cmark & \cmark & - & - & 69.00 & -\\
    ANT-A* (n/a) & \cmark & \cmark & \xmark & \textbf{99.36} & 93.28 & - & -\\
    \midrule
    ResNet18 & \xmark & \xmark & \xmark & 98.91 & 94.93 & 75.82 & 63.05\\
    DNDF & \xmark & \xmark & \xmark & 97.20 & 94.32 & 67.18 & 44.56\\
    DT & \cmark & \xmark & \xmark & - & 93.97 & 64.45 & 52.09\\
    NBDT & \cmark & \xmark & \xmark & - & 94.82 & \textbf{77.09} & \textbf{64.23}\\
    NBDT w/o h. & \cmark & \cmark & \xmark & - & 94.52 & 74.97 & -\\
    RDT & \cmark & \cmark & \cmark & - & 93.12 & - & -\\
    SONG (ours) & \cmark & \cmark & \cmark & 98.81 & \textbf{95.62} & 76.26 & 61.99\\
    \bottomrule
  \end{tabular}
    \caption{Comparison of models with deep architecture in terms of model features and accuracy on MNIST, CIFAR10 (C10), CIFAR100 (C100), and TinyImageNet (TIN). ResNet18 was used to extract the vector representation of input images for DNDF~\cite{kontschieder2015deep}, DT, NBDT (with and without hierarchy)~\cite{wan2020nbdt}, RDT~\cite{alaniz2021learning}, and SONG. For DDN~\cite{murthy2016deep}, DCDJ~\cite{baek2017deep}, and ANT-A~\cite{tanno2019adaptive}, the backbone models are provided in the brackets. ``Ex'' indicates if the method retains properties such as pure leaves, sequential decisions, and non-ensemble. ``SO'' indicates if the model is self-organized (does not require a predefined structure). ``EE'' indicates if the structure and weights of model are trained in an end-to-end continuous manner.}
    \label{tab:compare2nbdt}
\end{table}

\section{Experiments}
\label{sec:experiments}

In this section, we analyze the accuracy of the SONGs trained on Letter~\cite{asuncion2007uci}, Connect4~\cite{asuncion2007uci}, MNIST~\cite{lecun2010mnist}, CIFAR10~\cite{krizhevsky2009learning}, CIFAR100~\cite{krizhevsky2009learning}, and TinyImageNet~\cite{le2015tiny} datasets and compare it with the state of the art methods~\cite{wan2020nbdt,kontschieder2015deep,murthy2016deep,tanno2019adaptive,alaniz2021learning,baek2017deep}. We examine how the number of nodes and steps influence the structure of graphs, the number of internal nodes used by the model, the number of back edges, and the distance from the root to leaves. Moreover, we explain how the probability of back and cross edge changes in the successive training steps. Finally, we provide a detailed comparison with SDT~\cite{frosst2017distilling} and present sample graphs obtained for the MNIST dataset. In all experiments, we use leaves normalization and Gumbel-softmax, and we treat node regularization as a hyperparameter of the model. While this section presents only the most important findings for the sake of clarity, the experimental setup and detailed results can be found in the Supplementary Materials.

\subsection{SONG in deep learning setup}

In the first experiment, we apply SONG~\footnote{We made the code available at: \href{https://github.com/gmum/SONGs}{https://github.com/gmum/SONGs}} at the top of the backbone Convolutional Neural Network (CNN) without the final linear layer. CNN takes the input image and generates the representation, which is passed to the SONG. SONG processes the representation and returns the predictions for each class, which are then used with target labels to calculated Binary Cross-Entropy (BCE) loss. As a backbone network, we use ResNet18 for all datasets except MNIST, for which we employ a smaller network (see Supplementary Materials for details).

As presented in Table~\ref{tab:compare2nbdt}, our method matches or outperforms most of the recent state-of-the-art methods. On CIFAR10, SONG accuracy outperforms all baseline by almost 1 percentage point. On MNIST, it is worse than ANT~\cite{tanno2019adaptive} by around $0.5\%$, and on CIFAR100 and TinyImagNet, NBDT~\cite{wan2020nbdt} achieves better results. However, both ANT and NBDT are not trained in an end-to-end continuous manner. Moreover, NBDT requires a hierarchy provided before training, and without such a hierarchy, it obtains accuracy more than $1\%$ lower than SONG on CIFAR100.

\begin{table}[t]\small
    \centering
    \addtolength{\tabcolsep}{-3pt}
    \begin{tabular}{lllllll}
    \toprule
    Method & Letter & Connect4 & MNIST \\
    \midrule
    SDT w/o distillation~\cite{frosst2017distilling} & 78.00 (511) & 78.63 (255) & 94.45 (255)\\
    SDT~\cite{frosst2017distilling} & 81.00 (511) & 80.60 (255) & \textbf{96.76} (255)\\
    SONG-S-large (ours) & \textbf{86.25} (511) & \textbf{82.82} (255) & 95.74 (255) \\
    SONG-S-small (ours) &  82.95 (64) & 80.27 (8) & 94.66 (64)\\
    \bottomrule
  \end{tabular}
    \caption{Comparison of SDT~\cite{frosst2017distilling} and shallow SONG (SONG-S) on three datasets, where shallow corresponds to direct flattened inputs (no backbone network used). The accuracy of each model is reported along with the number of internal nodes specified in the parentheses. SONG-S-small contains the minimal number of nodes necessary to match the accuracy of SDT. SONG-S-large uses the same number of internal nodes as SDT. Please notice that SONG models are trained without a distillation mechanism, and they always obtain better results than SDT without distillation.}
    \label{tab:hinton}
\end{table}

\subsection{SONG as shallow model}
Although SONG can be successfully used in a deep learning setup, it can also be treated as a shallow model. In this case, SONG directly processes an input sample and returns the predictions passed with target labels to BCE loss. This setup is similar to the one presented in experiments on SDTs~\cite{frosst2017distilling}. Hence, we compare to SDT on all datasets considered in~\cite{frosst2017distilling}.

Table~\ref{tab:hinton} shows that SONG obtains better results than SDT without distillation on all datasets. Moreover, on Letter and Connect4, SONG outperforms even SDT with distillation. We also observe that SONG requires fewer nodes than SDT and obtains on par results on the Connect4 dataset with $30$ times fewer nodes. For Letter and MNIST, similarly good results can be obtained with $30$ times fewer nodes. This finding is in line with~\cite{shotton2016decision} which shows that decision graphs require dramatically less memory while considerably improving generalization.

\subsection{SONG structure}

\begin{figure}
    \centering
    \begin{subfigure}[b]{.3\textwidth}
        \centering
        \includegraphics[width=\textwidth]{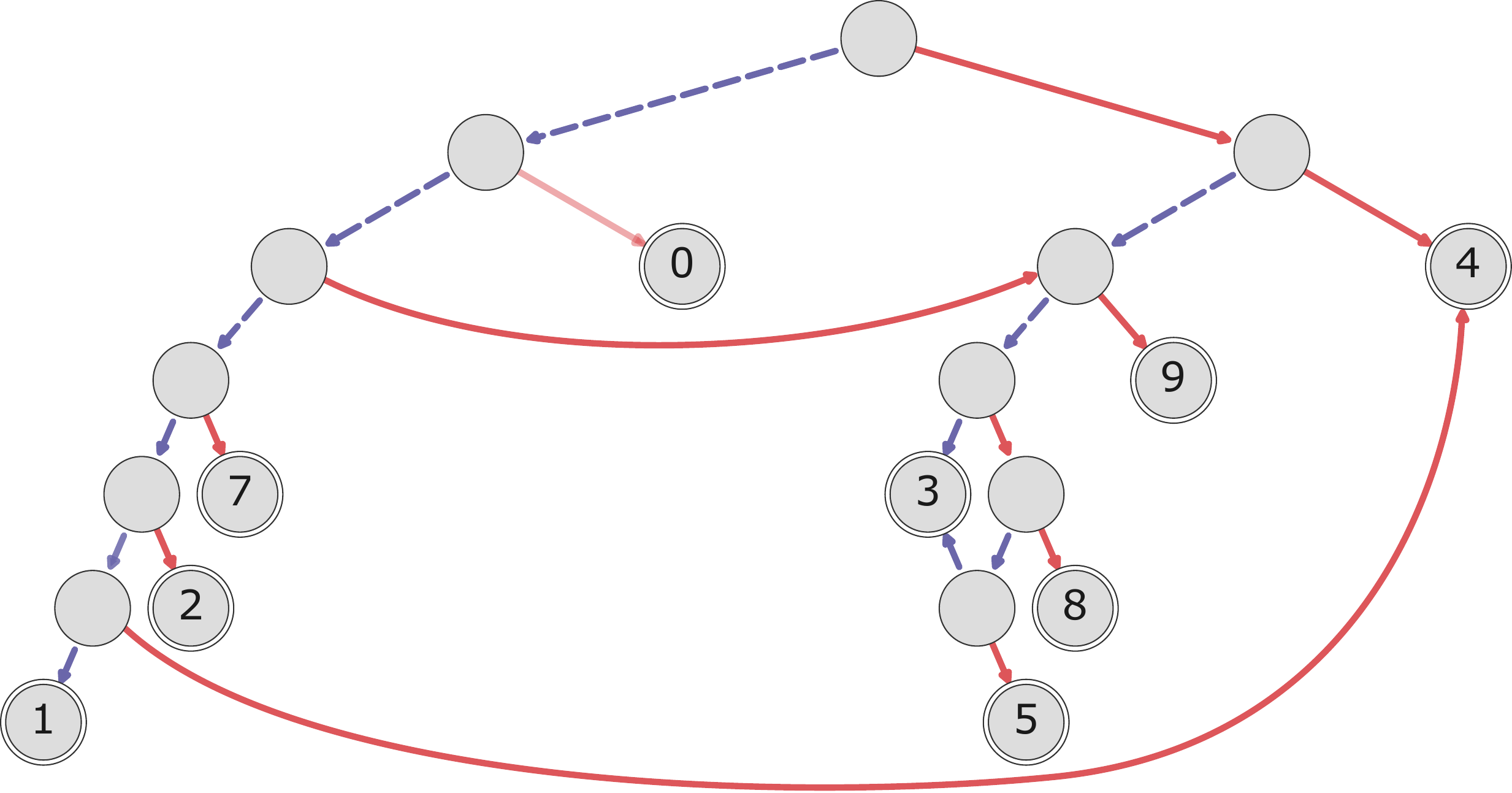}
        \caption{16 internal nodes and 8 steps.}
        \label{fig:mnist_graph_1}
    \end{subfigure}
    \qquad
    \begin{subfigure}[b]{.3\textwidth}
        \centering
        \includegraphics[width=\textwidth]{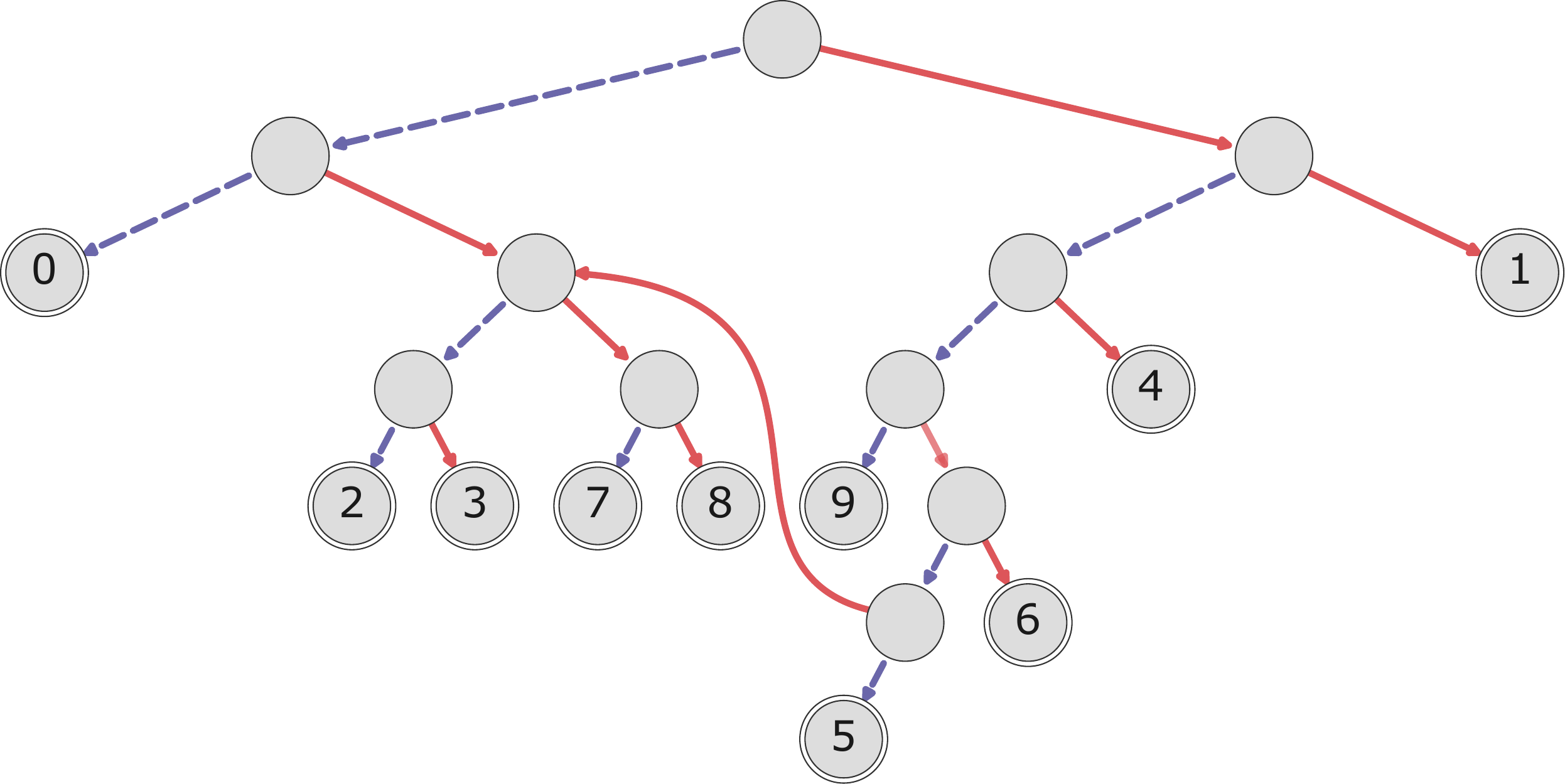}
        \caption{32 internal nodes and 8 steps.}
        \label{fig:mnist_graph_2}
    \end{subfigure}
    
    \caption{Examples of the graph structures obtained by training \our{} on the MNIST dataset. The root is the top-most node in each graph, and double node borders denote the leaves with numbers of the MNIST classes. For each node $v_i$, we present two edges corresponding to the highest probability from two transition vectors $m_{\cdot{}i}^0$ and $m_{\cdot{}i}^1$ (represented as dashed blue and solid red arrows, respectively).}
    \label{fig:mnist_graph}
\end{figure}

As a fully differentiable model, SONG strengthens or weakens an edge between any pair of nodes during training to constantly optimize the graph's structure (see Figure~\ref{fig:M_visualization}). Consequently, it can generate any structure that uses all available nodes, or only some of them. In particular, the final structure may be a binary tree or contain back edges. Moreover, the distance from the root to leaves can vary. This variability is visualized in Figure~\ref{fig:mnist_graph}, where we present two graphs obtained for MNIST using a different number of internal nodes and steps.

In Figure~\ref{fig:cifar10_stat}, we provide statistics on multiple SONGs generated for the CIFAR10 dataset. We observe a significant difference in SONG structure depending on the number of internal nodes and steps. First, we note that the number of internal nodes used by the model increases with the increasing number of steps $N$, and it does not depend on the total number of internal nodes $n$. As a natural consequence, a similar trend is observed for the distance from the root to the leaves. When it comes to back edges, their number is relatively small, and they appear only for a larger number of steps. At the same time, the cross edges are more often and increase with the increased number of internal nodes.

\begin{figure}[t]
    \centering
    \includegraphics[width=0.5\textwidth]{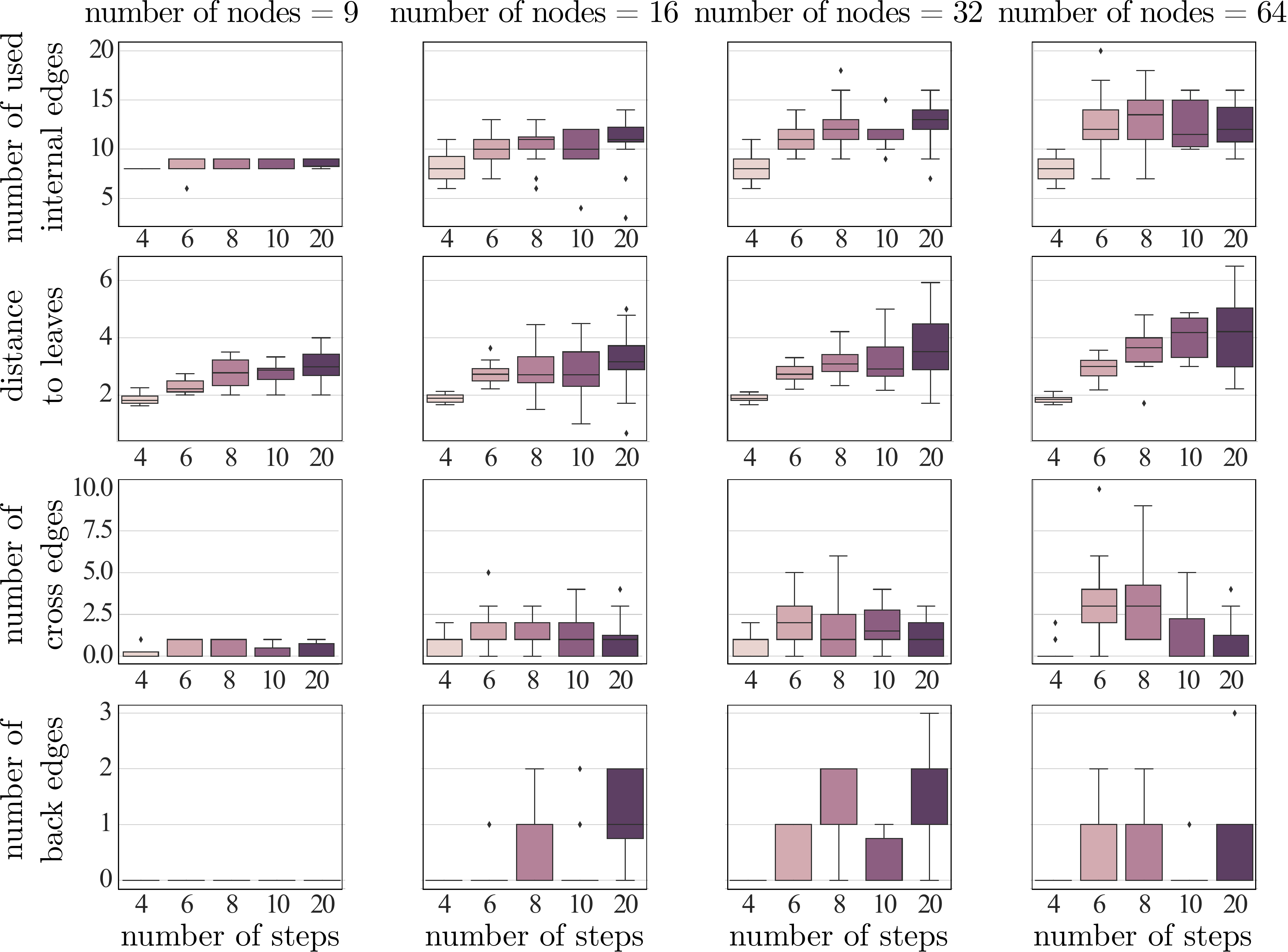}
    \caption{Nodes and edges statistics calculated for SONGs trained on the CIFAR10 dataset. For each combination of the number of internal nodes and steps, 20 graphs are trained and used to plot the distributions of four statistics.}
    \label{fig:cifar10_stat}
\end{figure}

We also analyze the relationship between the number of nodes and steps and prediction accuracy \our{}. As can be seen in Table~\ref{tab:deep}, the performance on MNIST constantly increases as the number of nodes and steps grow. This is not the case of CIFAR-10, where the accuracy remains similar for all combinations of the parameters. It can be caused by the smaller dimension of the representation vector in MNIST ($50$) than in CIFAR10 ($512$).


\begin{table}[ht]
    \centering
    \begin{subtable}{\linewidth}
        \centering
        \begin{tabular}{lrrrrr}
        \toprule
        & \multicolumn{5}{c}{steps}\\
        \cmidrule{2-6}
        nodes & 4 & 6 & 8 & 10 & 20 \\
        \midrule
        9 & 95.66 & 97.29 & 97.25 & 97.95 & 97.56 \\
        16 & 97.31 & 97.83 & 98.23 & 98.43 & 98.56 \\
        32 & 96.82 & 97.74 & 98.35 & 98.65 & 98.62 \\
        64 & 96.29 & 98.12 & 98.12 & 98.47 & 98.68 \\
        \bottomrule
      \end{tabular}
        \caption{MNIST.}
        \label{tab:deep_cifar}
        \vspace{-10pt}
    \end{subtable}\\
    \vspace{4mm}
    \begin{subtable}{\linewidth}
        \centering
        \begin{tabular}{lrrrrr}
        \toprule
        & \multicolumn{5}{c}{steps}\\
        \cmidrule{2-6}
        nodes & 4 & 6 & 8 & 10 & 20 \\
        \midrule
        9 & 94.48 & 94.86 & 94.92 & 94.94 & 94.93 \\
        16 & 94.88 & 94.95 & 94.86 & 94.87 & 94.89 \\
        32 & 94.99 & 94.95 & 94.95 & 94.90 & 94.98 \\
        64 & 94.90 & 94.87 & 94.88 & 94.94 & 94.93 \\
        \bottomrule
      \end{tabular}
        \caption{CIFAR10.}
        \label{tab:deep_cifar10}
    \end{subtable}%
    \caption{Results of SONG in a deep learning setup. One can observe that for the MNIST dataset (a), the performance increases with the increasing number of nodes and steps. In contrast to CIFAR10 (b), where the performance is relatively similar for all combinations of the parameters. }
    \label{tab:deep}
\end{table}

\subsection{SONG structure during training} We analyze the relationship between BCE loss and the probability of back and cross edges in the successive epochs of the training. 

\begin{figure}
    \centering
    \begin{subfigure}[b]{.5\textwidth}
        \centering
        \includegraphics[width=.93\textwidth]{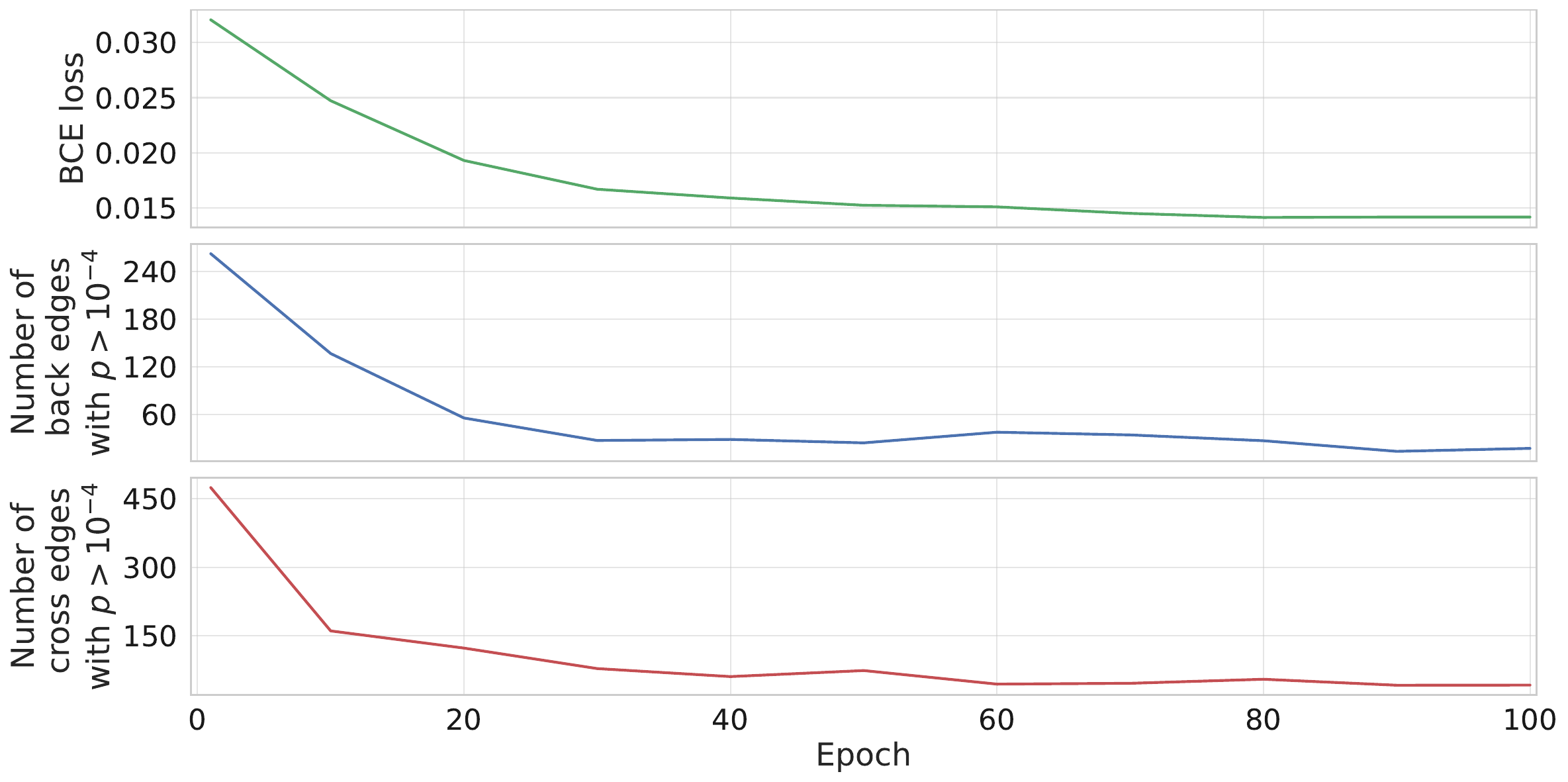}
        \caption{SONG with 256 internal nodes and 10 steps trained for CIFAR100.}
        \label{fig:abstract1}
    \end{subfigure}
    \qquad
    \begin{subfigure}[b]{.5\textwidth}
        \centering
        \includegraphics[width=.93\textwidth]{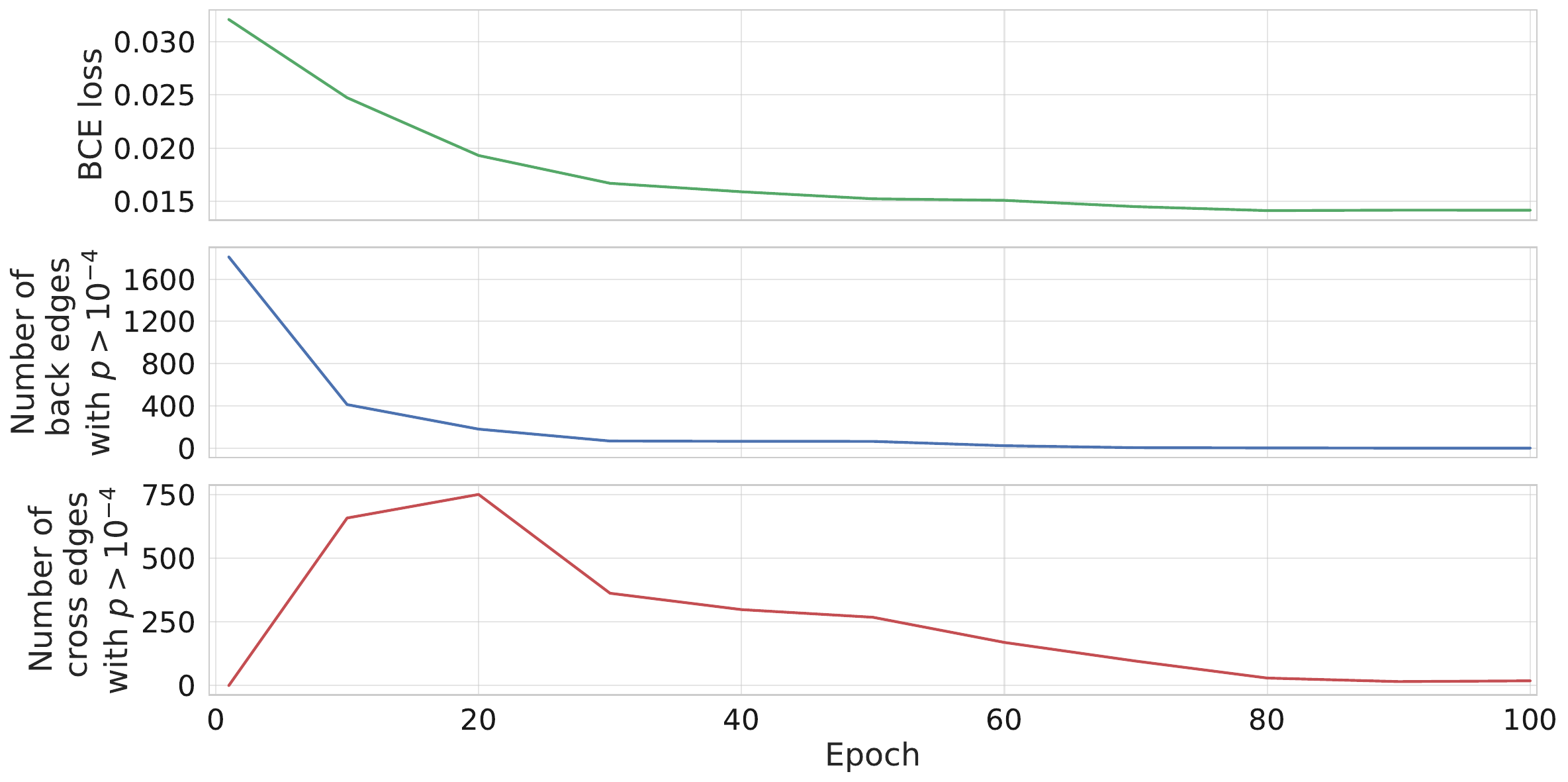}
        \caption{SONG with 64 internal nodes and 10 steps trained for MNIST.}
        \label{fig:abstract1}
    \end{subfigure}
    \caption{BCE loss as well as the number of back and cross edges in the successive training epochs of SONG. One can observe that number of back edges decrease together with decreasing BCE loss.}
    \label{fig.loss_vs_edges_cifar}
\end{figure}

We present the mean over multiple models and all test samples (as each test sample $x$ has its graph represented by matrix $P_x$). We observe that the probability of back edges decreases together with decreasing BCE loss, both for simple MNIST and more complicated CIFAR100 dataset (see Figure~\ref{fig.loss_vs_edges_cifar}).

Moreover, in Figure~\ref{fig:mean_dist_M}, we present the mean distances between transition matrices ($P_x$) obtained for samples of the same and different classes. One can observe, among others, that the diagonal is visibly darker than the rest of the matrix, which means that inputs from the same class have more similar transition matrices. This confirms that we obtain similar transition matrices for similar inputs.

\begin{figure}
  \centering
  \begin{subfigure}[b]{.2\textwidth}
    \centering
    \includegraphics[width=\textwidth]{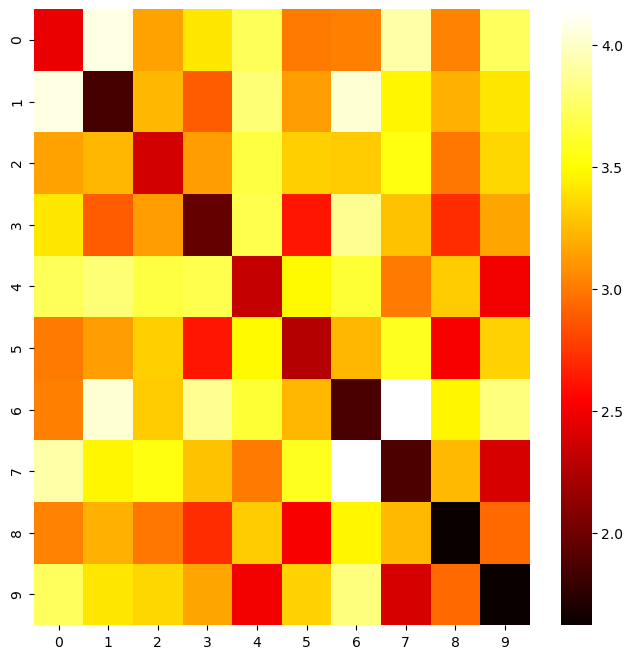}
    \caption{16 internal nodes.}
    \label{fig:loss}
  \end{subfigure}
  \quad
  \begin{subfigure}[b]{.2\textwidth}
    \centering
    \includegraphics[width=\textwidth]{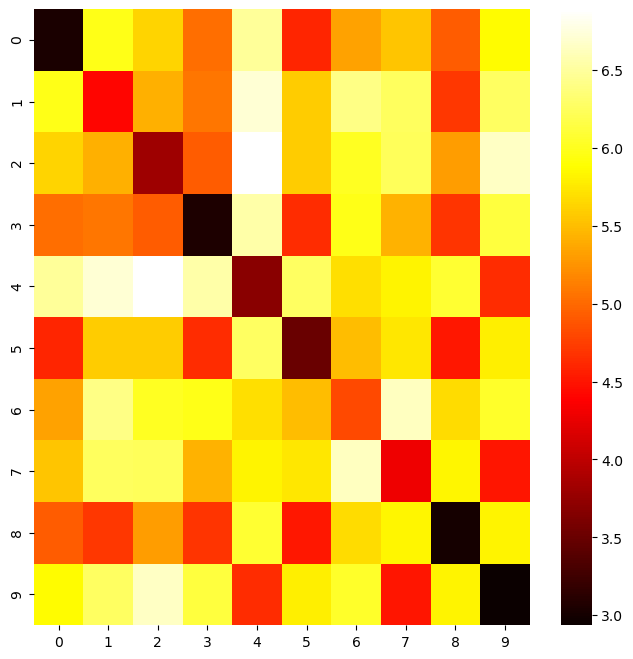} \\
    \caption{64 internal nodes.}
    \label{fig:loss_leaves}
  \end{subfigure}

  \caption{Mean distances between transition matrices $P_x$ for pairs of MNIST input samples represented by a distance matrix (the larger distance, the brighter color)
  . The rows and columns correspond to 0-9 digits.}
  \label{fig:mean_dist_M}
\end{figure}


\section{Conclusions}
In this work, we introduce Self-Organizing Neural Graphs (SONGs), a new type of decision graphs applicable in any deep learning pipeline. They optimize their structure during training by strengthening or weakening graph edges using gradient descent. Thanks to the graph structure, SONG can reuse the decision nodes and obtain state-of-the-art results with a significantly smaller number of nodes than existing methods. Moreover, the introduced general paradigm based on Markov processes allows for efficient training, and SONG converges to the binary acyclic directed graphs. Hence, we believe that our work opens a plethora of research pathways towards more effective applications of decision graphs in a deep learning setup.

\section*{Acknowledgments}
The work of J. Tabor and B. Zieliński is supported by the National Centre of Science (Poland) Grant No. 2021/41/B/ST6/01370, the work of Ł. Struski is supported by the National Centre of Science (Poland) Grant No. 2020/39/D/ST6/01332, and the work of T. Danel is supported by the National Centre of Science (Poland) Grant No. 2020/37/N/ST6/02728.
The research of M. \'Smieja was carried out within the research project ``Bio-inspired artificial neural network'' (grant no. POIR.04.04.00-00-14DE/18-00) within the Team-Net program of the Foundation for Polish Science co-financed by the European Union under the European Regional Development Fund.

{\small
\bibliographystyle{ieee_fullname}
\bibliography{ref}
}


\newpage

\appendix

\twocolumn[
\begin{center}
\textbf{\LARGE Supplementary Materials}
\vspace{0.5cm}
\end{center}
]

\section{SONGs visualization}

In this section, we provide the visualization of a SONG trained on MNIST where the nodes are represented by the learned filters and the ``average'' image passing through those nodes (see Figure~\ref{fig:mnist_Hinton_tree}).  Moreover, we provide additional examples of the graph structures obtained by training SONG on the MNIST and CIFAR10 datasets (see Figures~\ref{fig:mnist_graph_sm} and \ref{fig:cifar_graph_sm}) together with the consecutive steps of the Markov process (see Figures~\ref{fig:steps_1}, \ref{fig:steps_3}, \ref{fig:steps_2}, and \ref{fig:steps_4}).

Finally, we analyze the relationship between BCE loss and the probability of back and cross edges in the successive epochs of the training. We present the mean over multiple models and all test samples (as each test sample $x$ has its graph represented by matrix $P_x$). The number of back and cross edges is obtained in the following way. We first calculate all paths from the root with a probability higher than a particular threshold $0.0001$. Then we create a standard binary directed graph that contains all nodes and edges from those paths. Finally, we run the DFS algorithm for this graph (starting from the root) to obtain backed and cross edges.

\begin{figure*}[ht]
    \centering
    \includegraphics[width=0.75\textwidth]{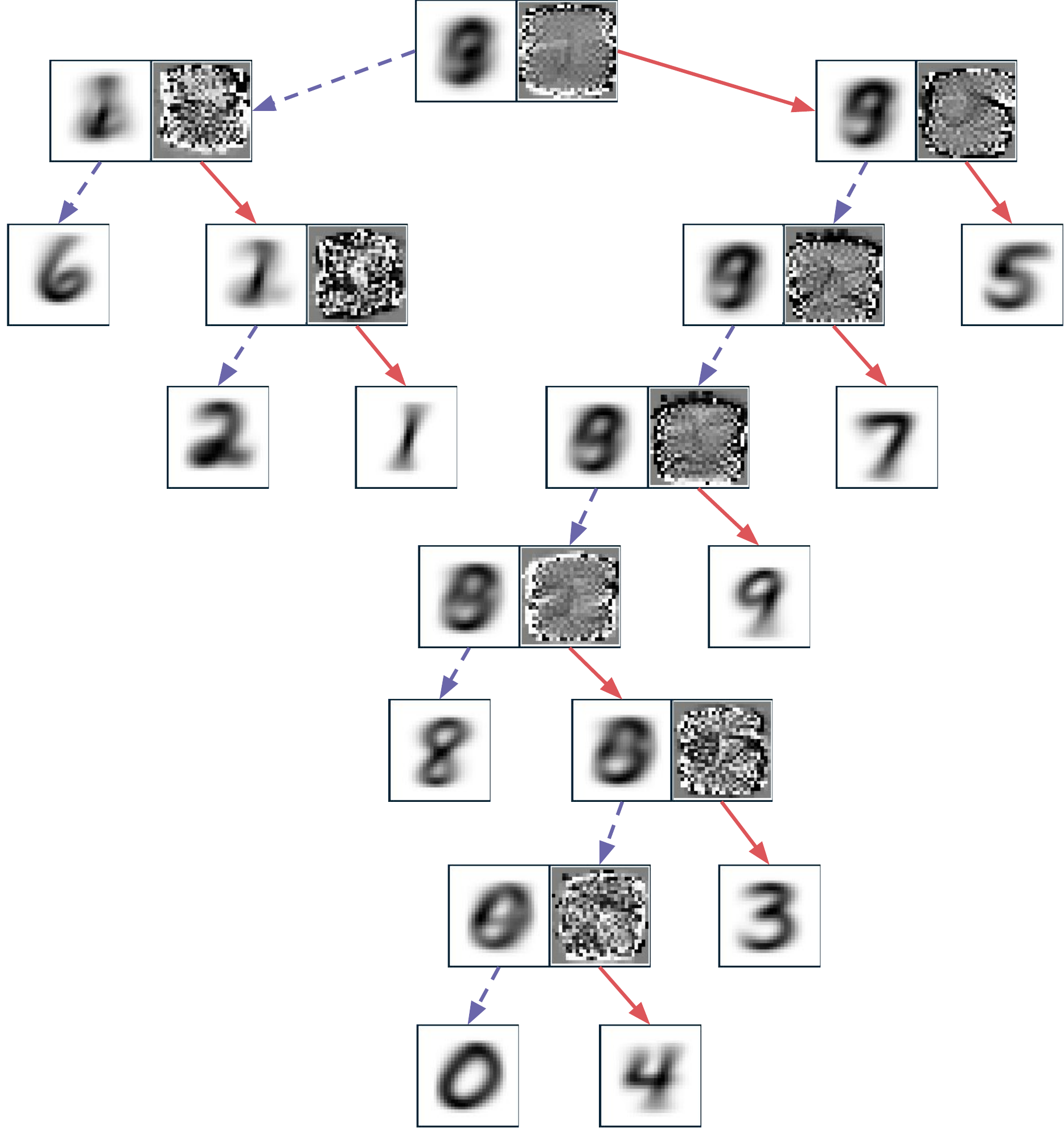}
    \caption{Visualization of a shallow SONG (SONG-S) trained on MNIST where the nodes are represented by the learned filters and the ``average'' image passing through those nodes (corresponding to the right and left side of each node, respectively). Notice that SONGs contain filters only in the inner nodes, as it is impossible to move out from the leaves.}
    \label{fig:mnist_Hinton_tree}
\end{figure*}

\begin{figure*}[ht]
    \centering
    \begin{subfigure}[b]{0.45\textwidth}
        \centering
        \includegraphics[width=\textwidth]{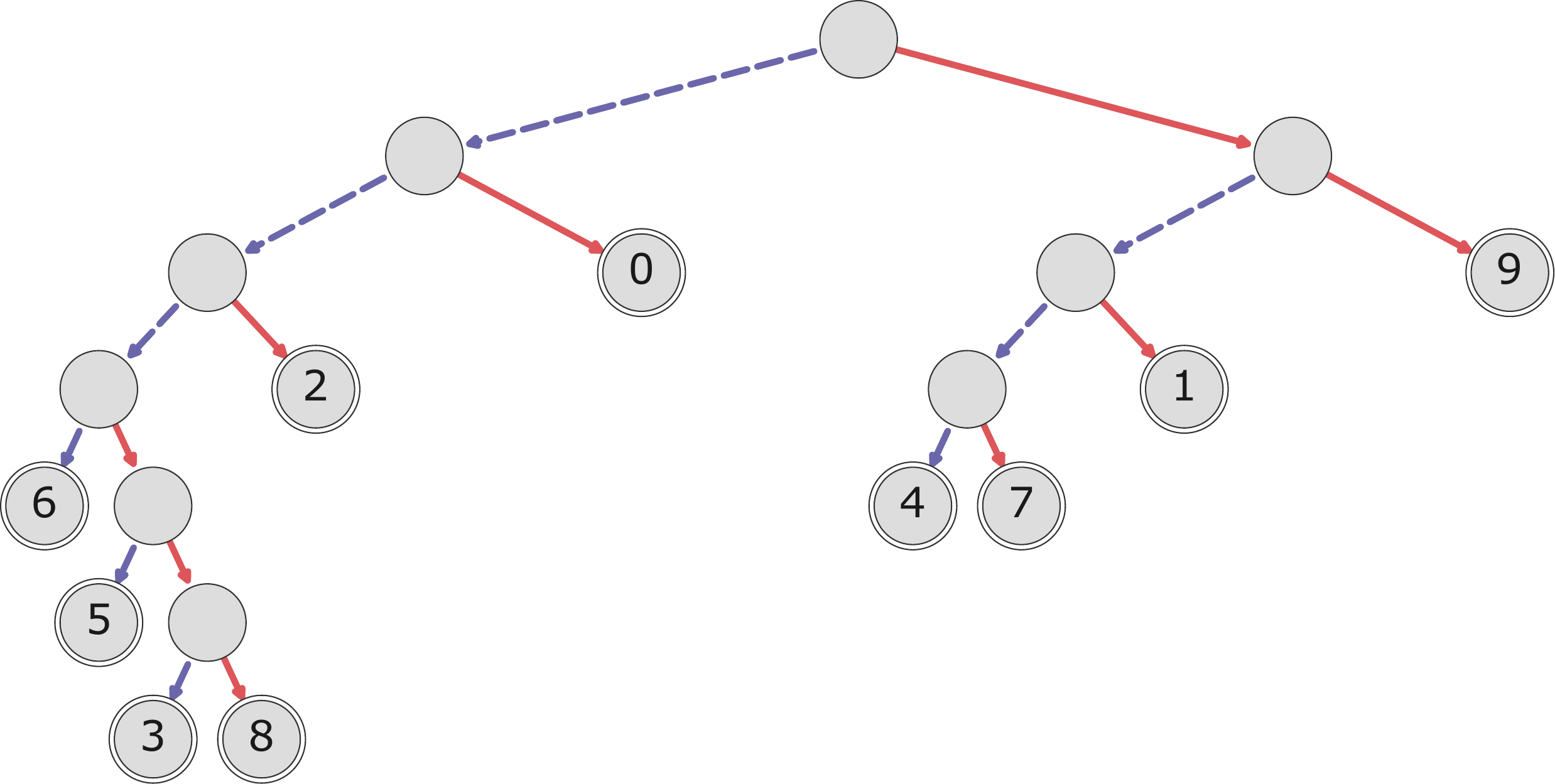}
        \caption{}
        \label{fig:mnist_graph_1}
    \end{subfigure}
    \qquad
    \begin{subfigure}[b]{0.45\textwidth}
        \centering
        \includegraphics[width=\textwidth]{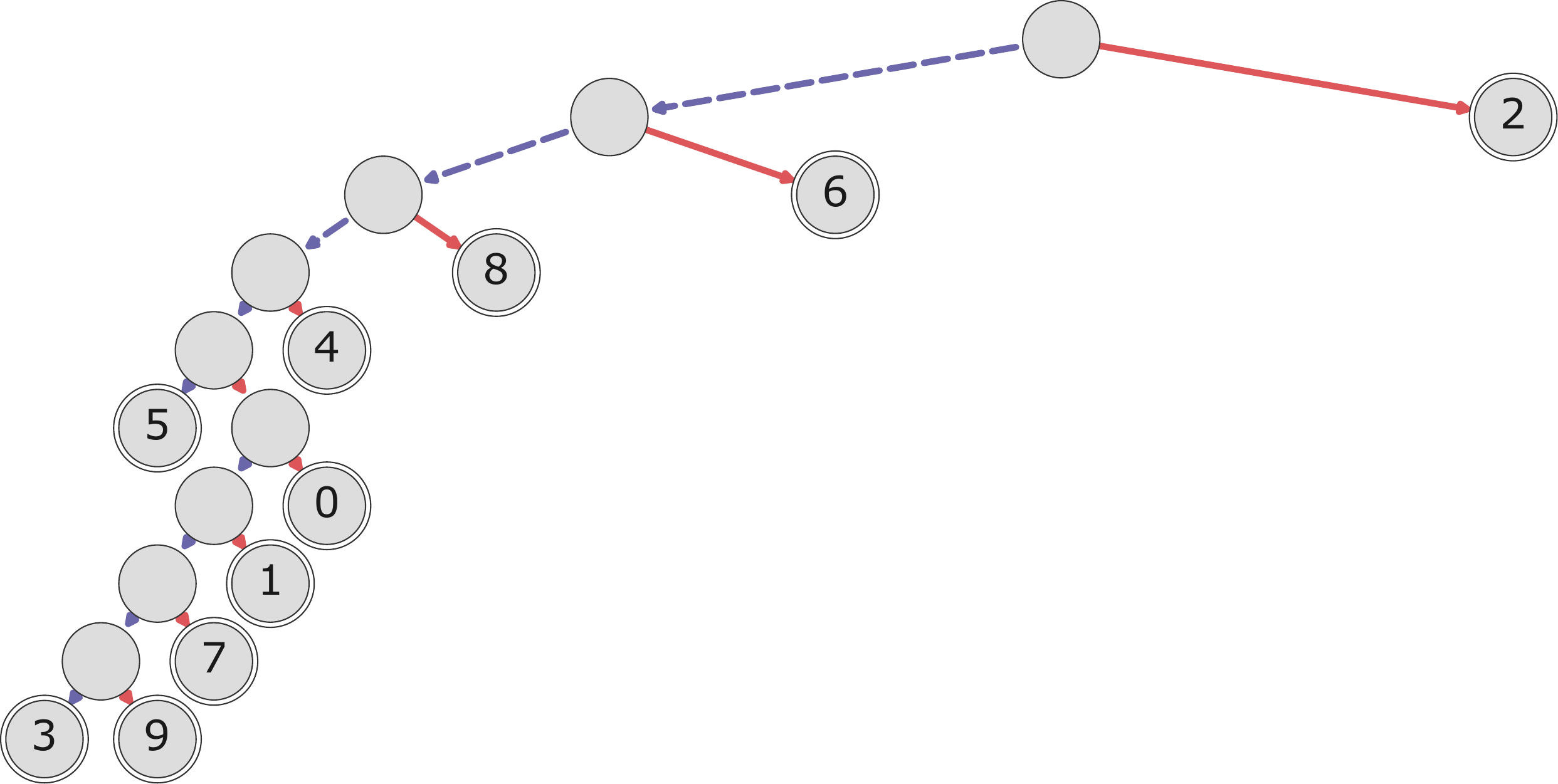}
        \caption{}
        \label{fig:mnist_graph_2}
    \end{subfigure}

    \caption{Examples of the graph structures obtained by training \our{} on the MNIST dataset. The root is the top-most node in each graph, and the leaves are denoted by double node borders. The numbers on the leaves are the MNIST classes. For each node $v_i$, we present two edges corresponding to the highest probability from two transition vectors $m_{\cdot{}i}^0$ and $m_{\cdot{}i}^1$ (represented as dashed blue and solid red arrows, respectively).}
    \label{fig:mnist_graph_sm}
\end{figure*}

\begin{figure*}[ht]
    \centering
    \begin{subfigure}[b]{0.49\textwidth}
        \centering
        \includegraphics[width=\textwidth]{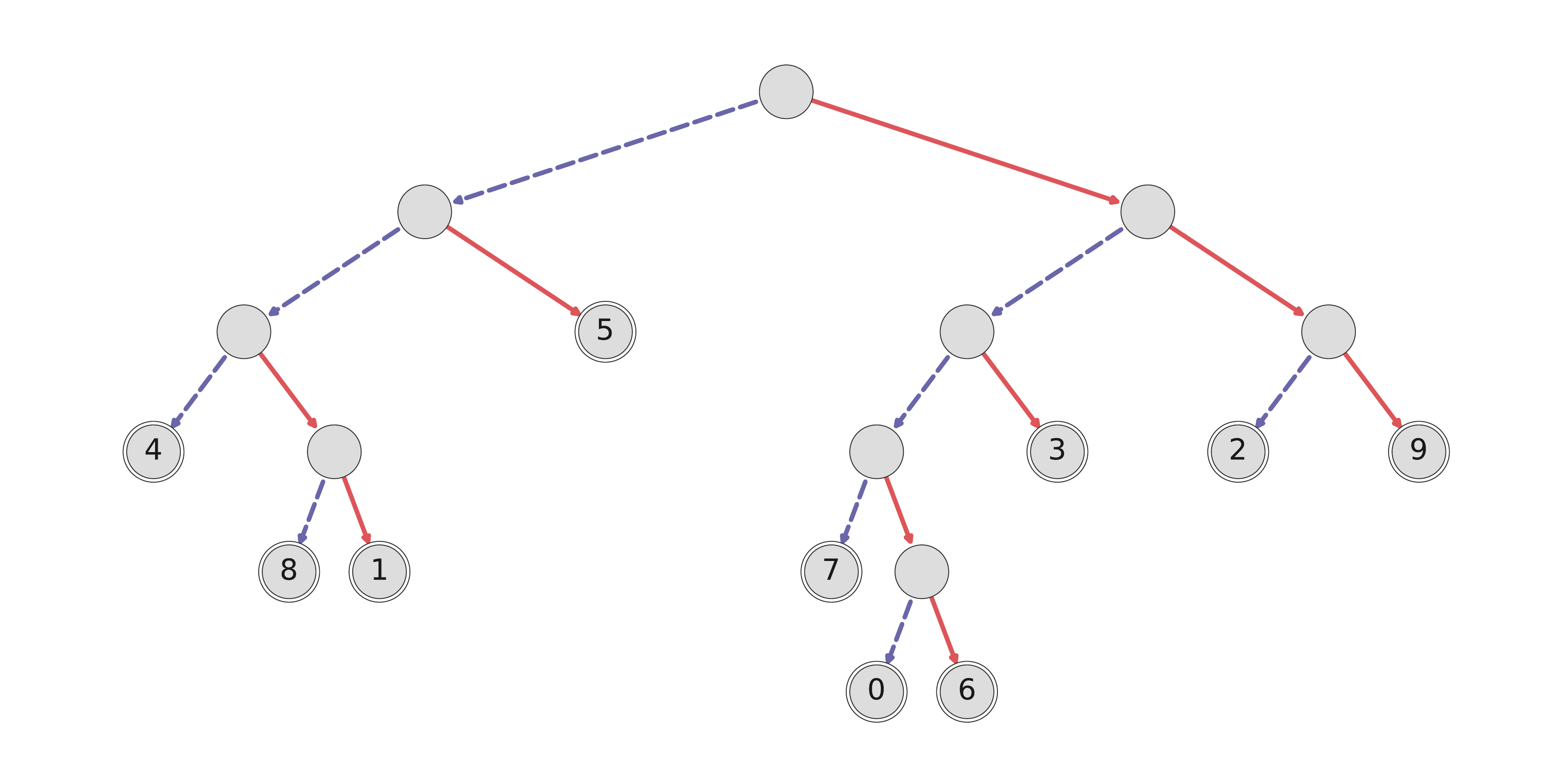}
        \caption{}
        \label{fig:mnist_graph_1}
    \end{subfigure}
    \hfill
    \begin{subfigure}[b]{0.49\textwidth}
        \centering
        \includegraphics[width=\textwidth]{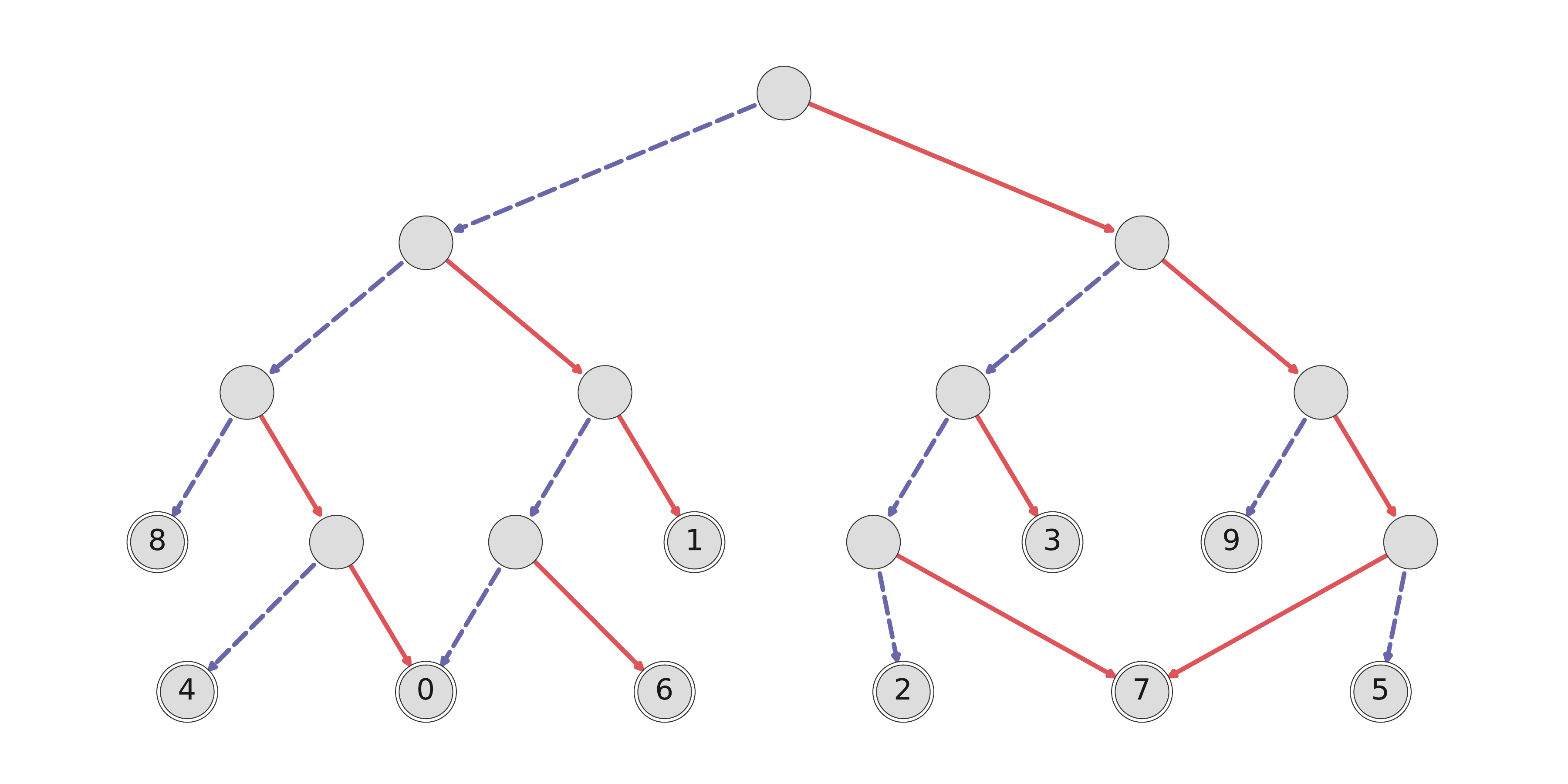}
        \caption{}
        \label{fig:mnist_graph_1}
    \end{subfigure}
    \begin{subfigure}[b]{0.49\textwidth}
        \centering
        \includegraphics[width=\textwidth]{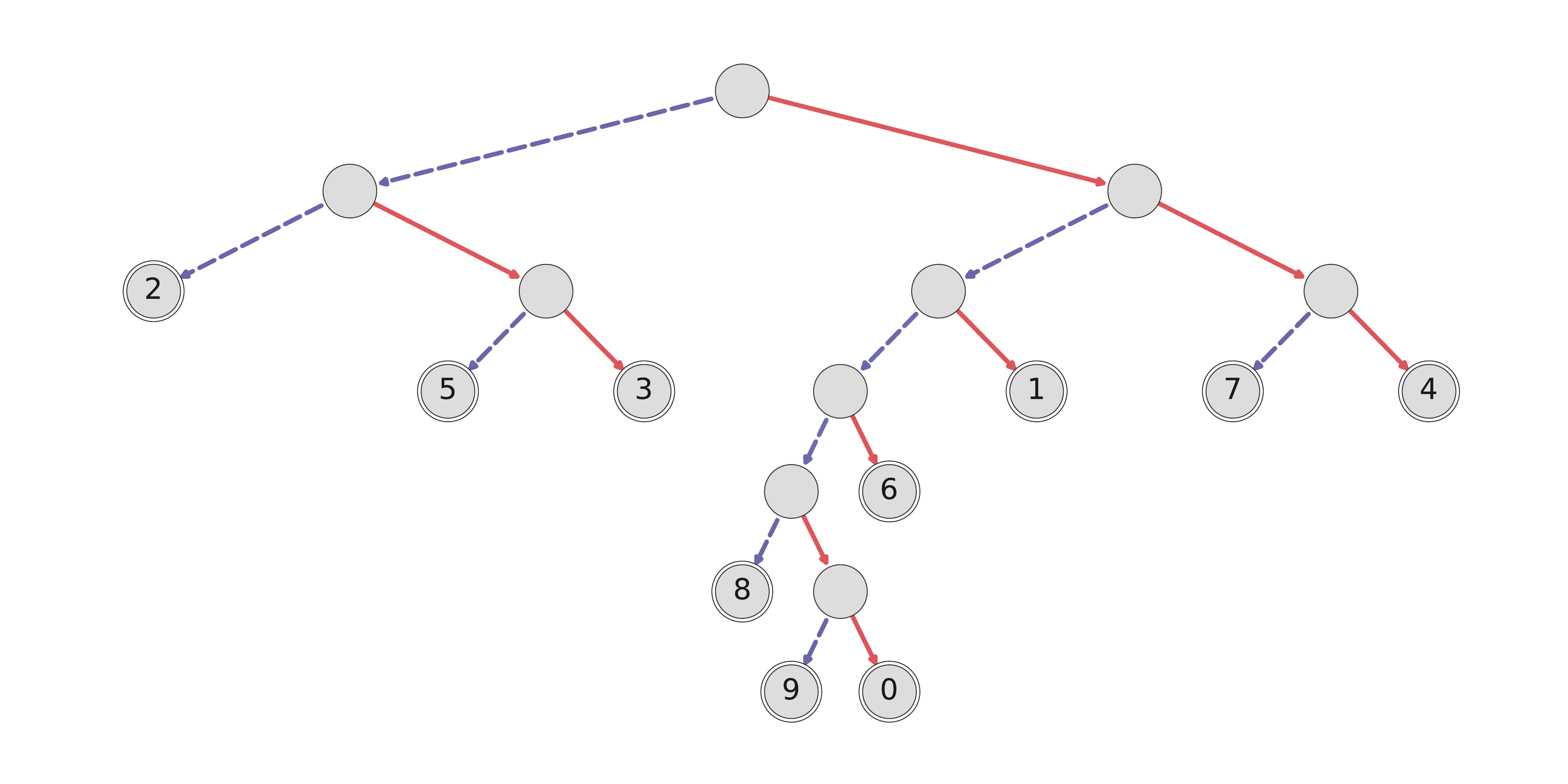}
        \caption{}
        \label{fig:mnist_graph_1}
    \end{subfigure}
    \hfill
    \begin{subfigure}[b]{0.49\textwidth}
        \centering
        \includegraphics[width=\textwidth]{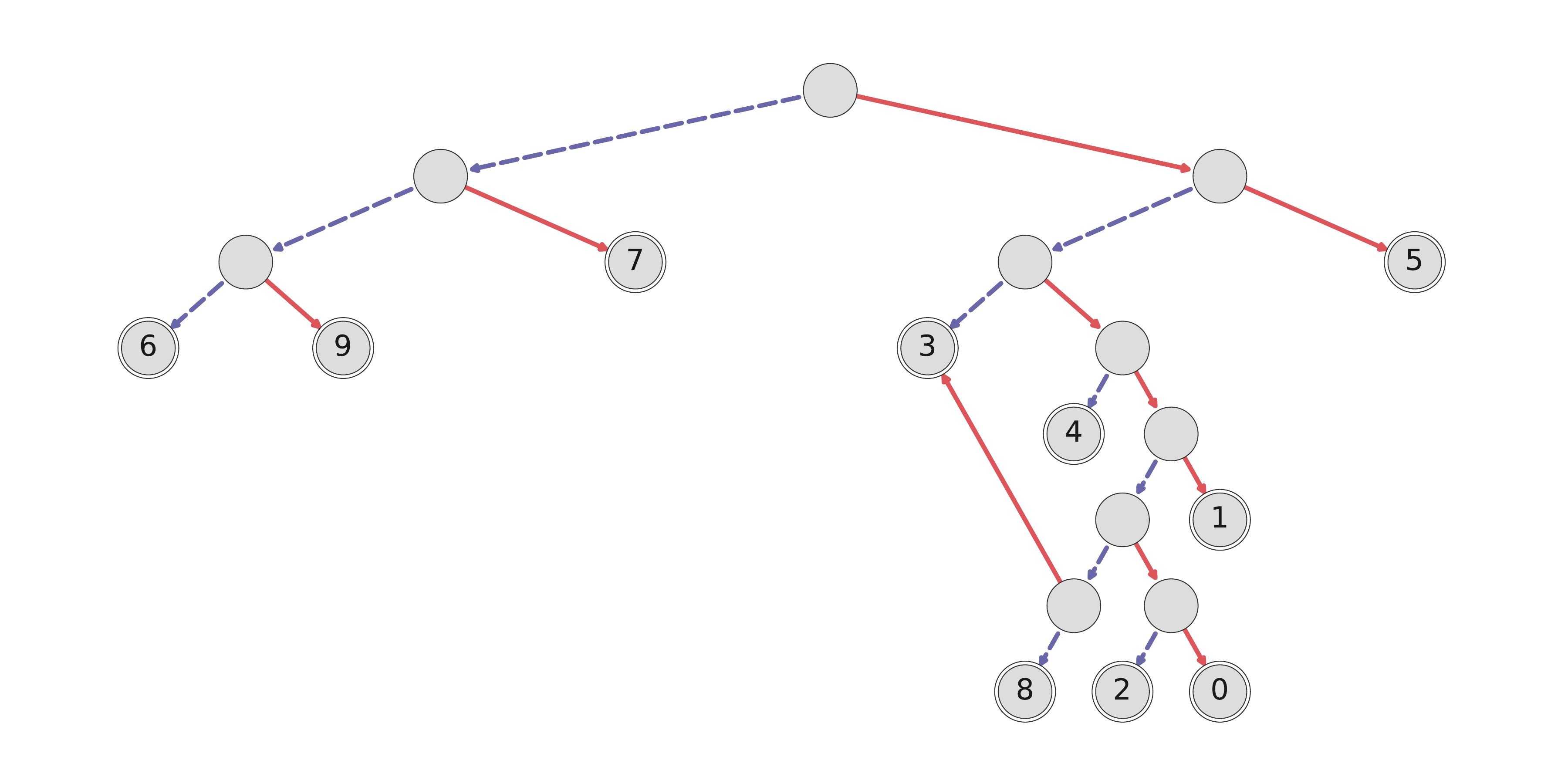}
        \caption{}
        \label{fig:mnist_graph_1}
    \end{subfigure}
    \hfill

    \caption{Examples of the graph structures obtained by training \our{} on the CIFAR10 dataset. The root is the top-most node in each graph, and the leaves are denoted by double node borders. The numbers on the leaves are the CIFAR10 classes. For each node $v_i$, we present two edges corresponding to the highest probability from two transition vectors $m_{\cdot{}i}^0$ and $m_{\cdot{}i}^1$ (represented as dashed blue and solid red arrows, respectively).}
    \label{fig:cifar_graph_sm}
\end{figure*}

\begin{figure*}[ht]
    \centering
    \includegraphics[width=0.49\textwidth]{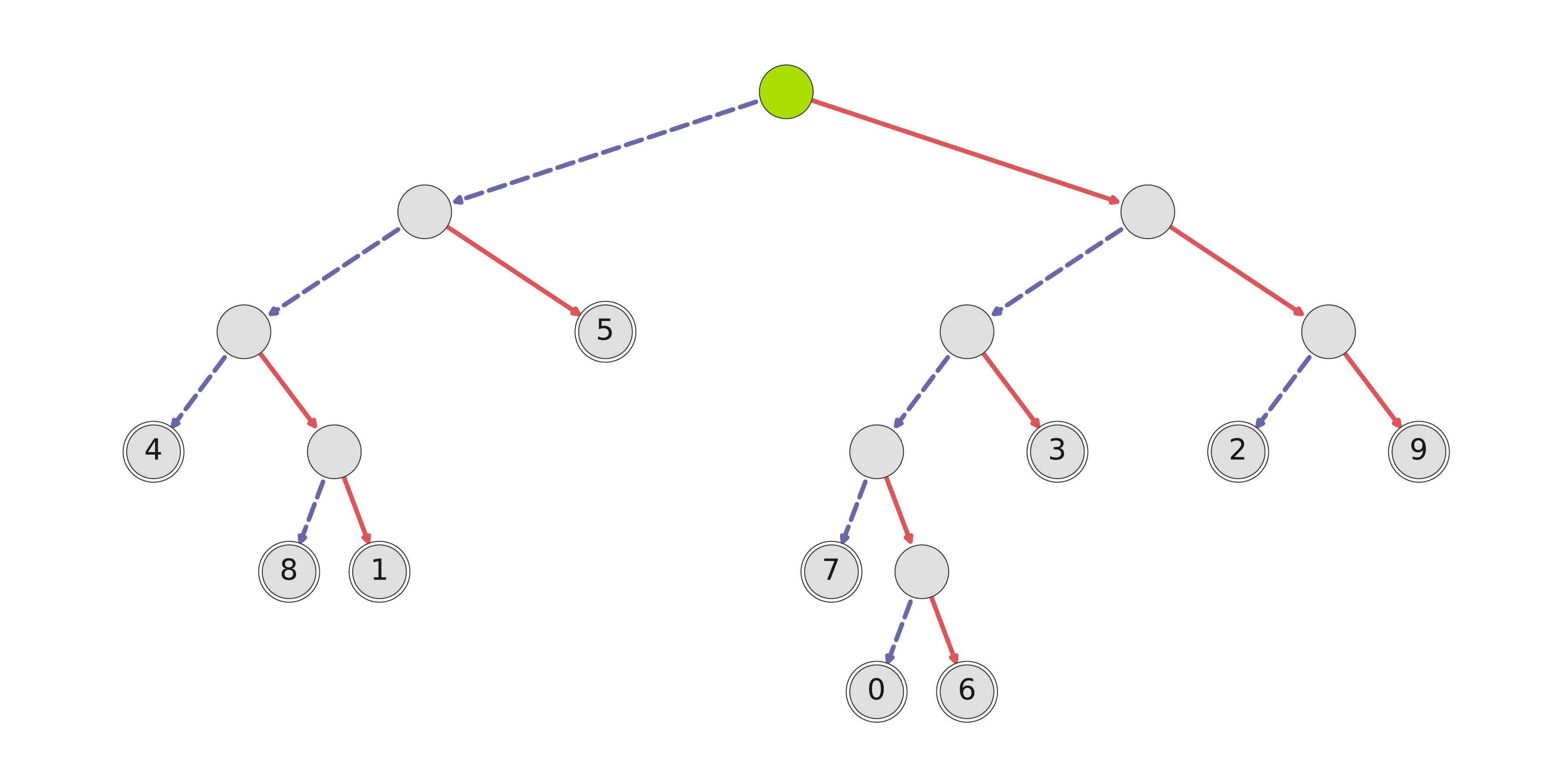}
    \includegraphics[width=0.49\textwidth]{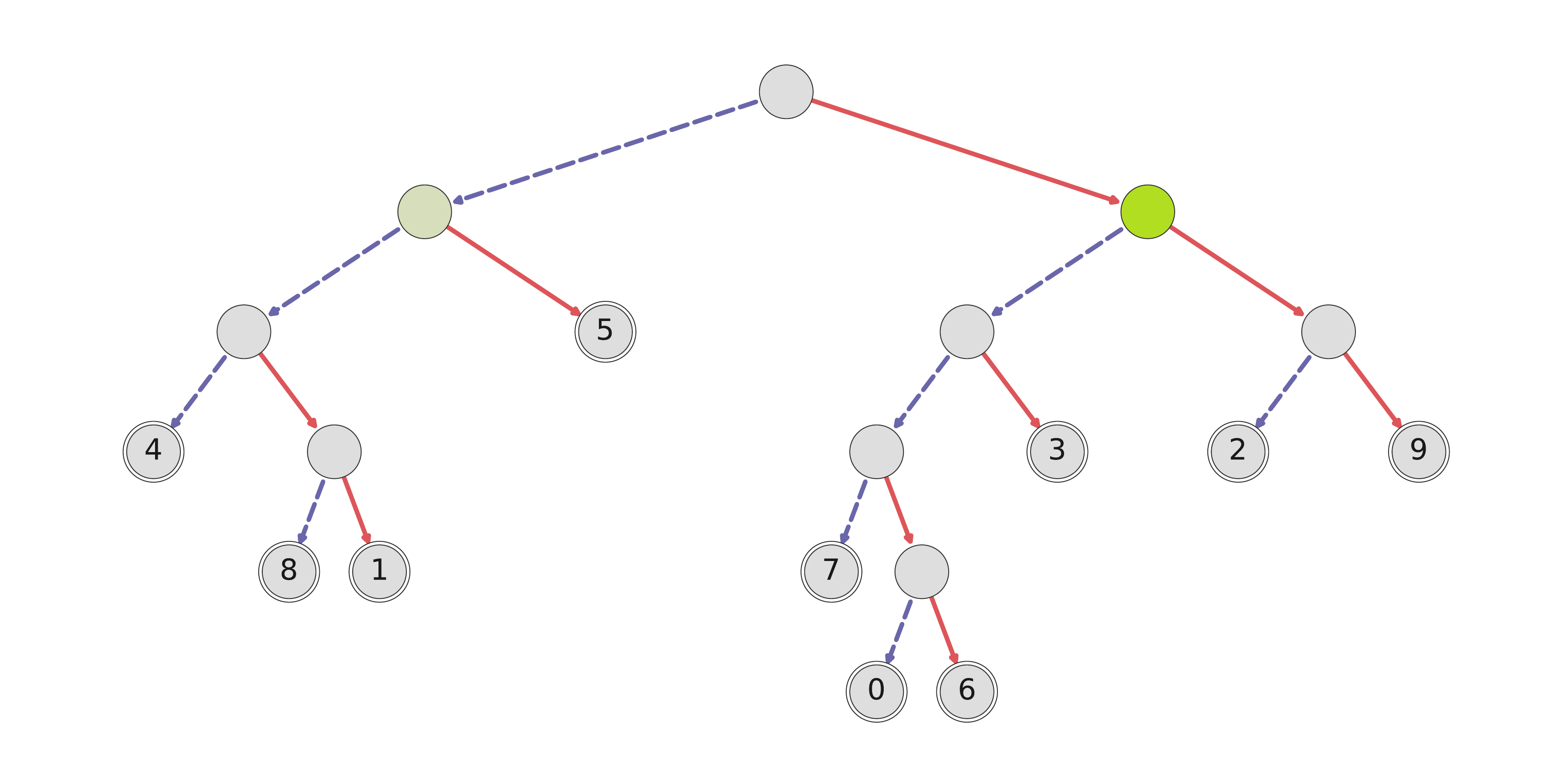}
    \includegraphics[width=0.49\textwidth]{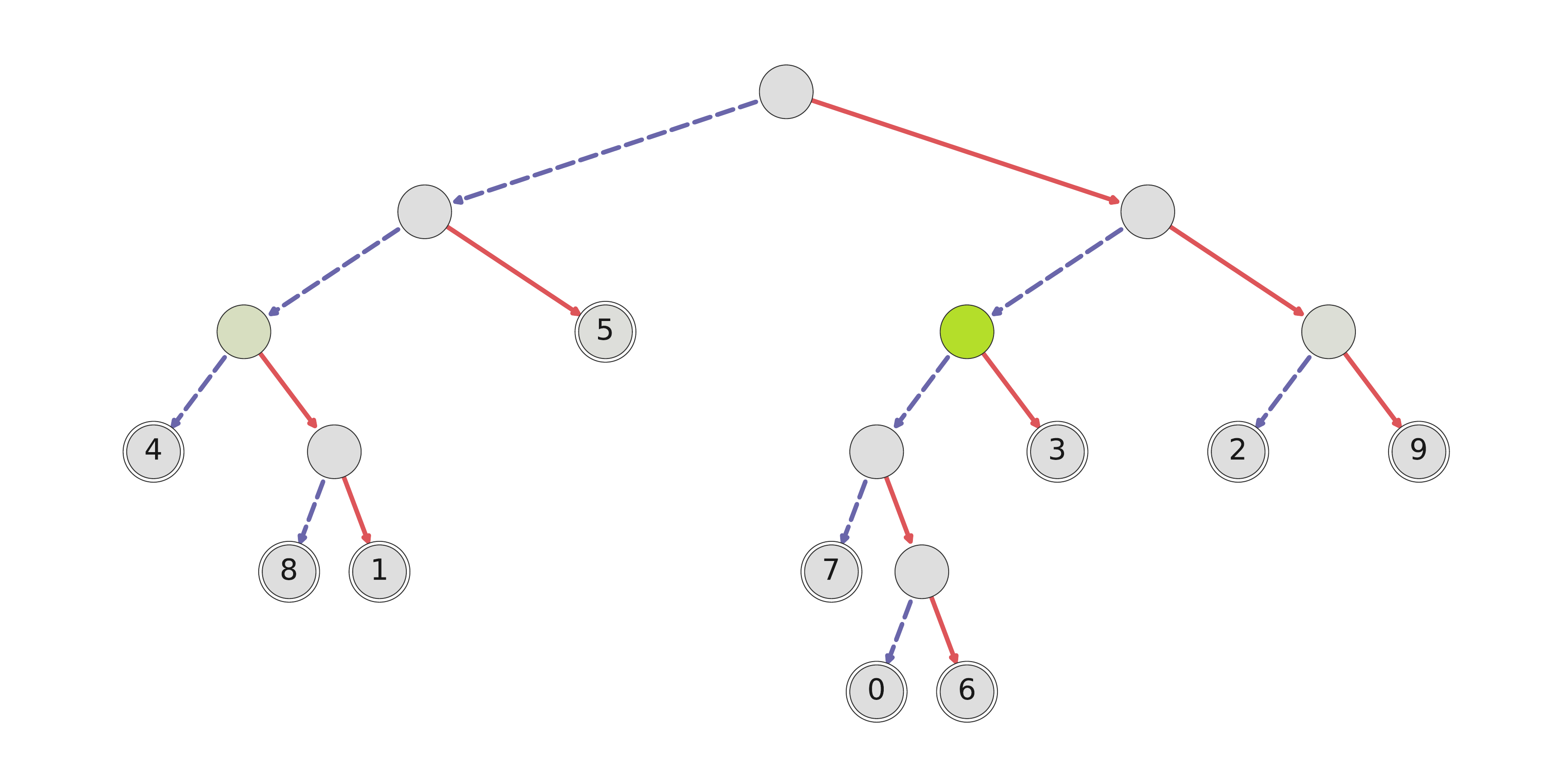}
    \includegraphics[width=0.49\textwidth]{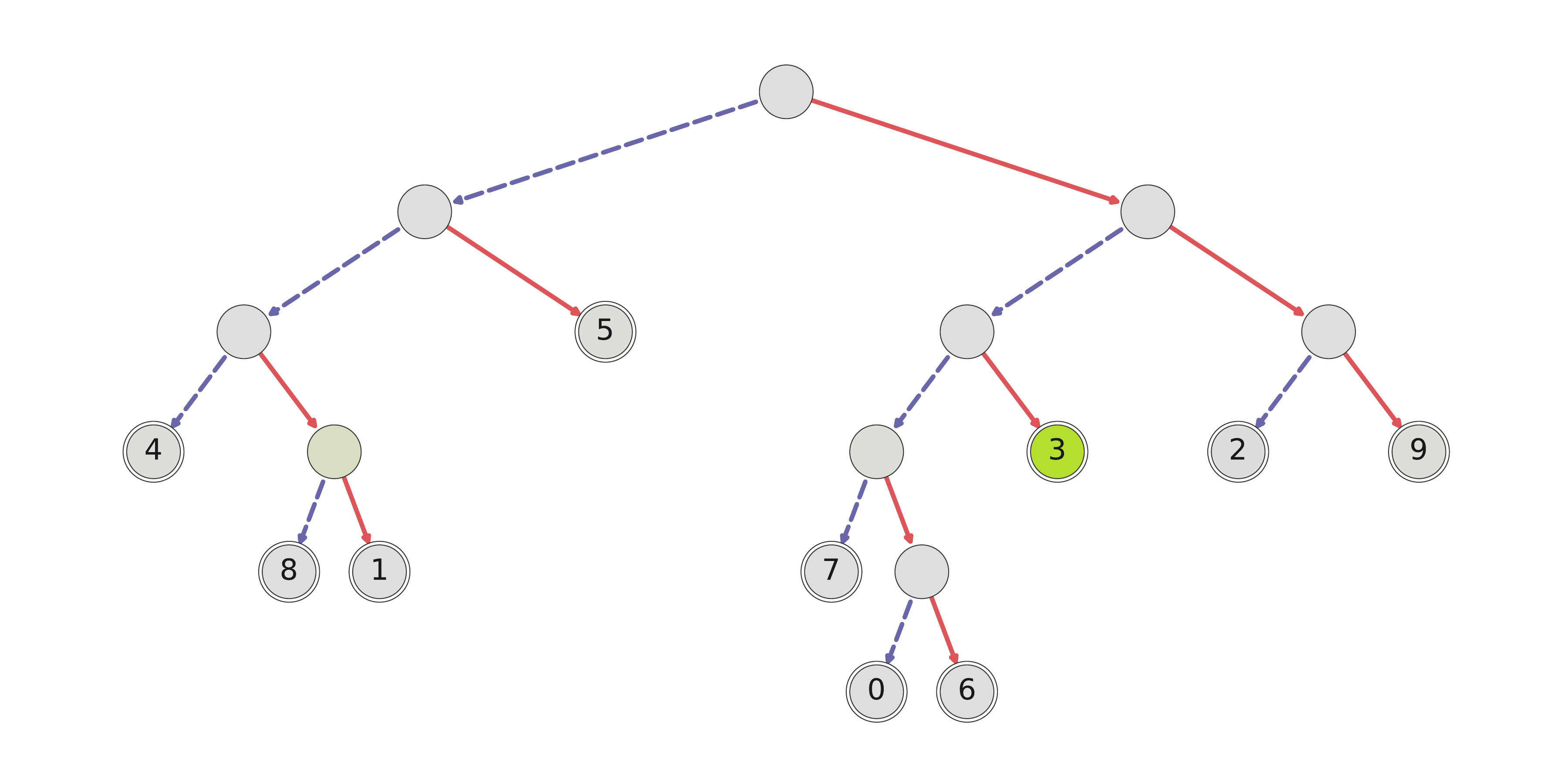}
    \caption{An input image passing through a SONG trained on CIFAR10. High saturation of the green color denotes high probability in the node. Each graph represent a consecutive step of the inference (from left to right, then top to bottom). For each node $v_i$, we present two edges corresponding to the highest probability from two transition vectors $m_{\cdot{}i}^0$ and $m_{\cdot{}i}^1$ (represented as dashed blue and solid red arrows, respectively).}
    \label{fig:steps_1}
\end{figure*}

\begin{figure*}[ht]
    \centering
    \includegraphics[width=0.49\textwidth]{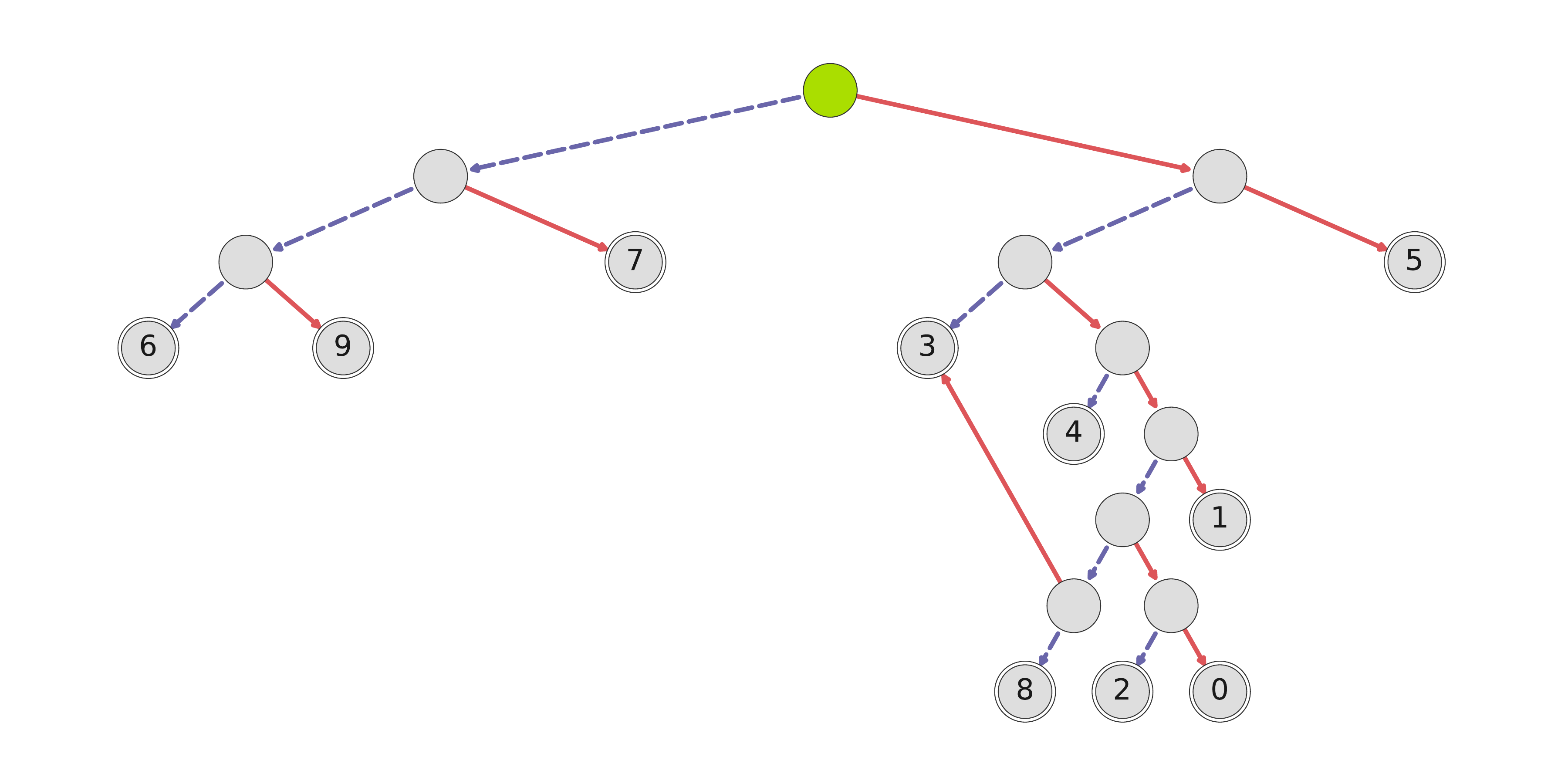}
    \includegraphics[width=0.49\textwidth]{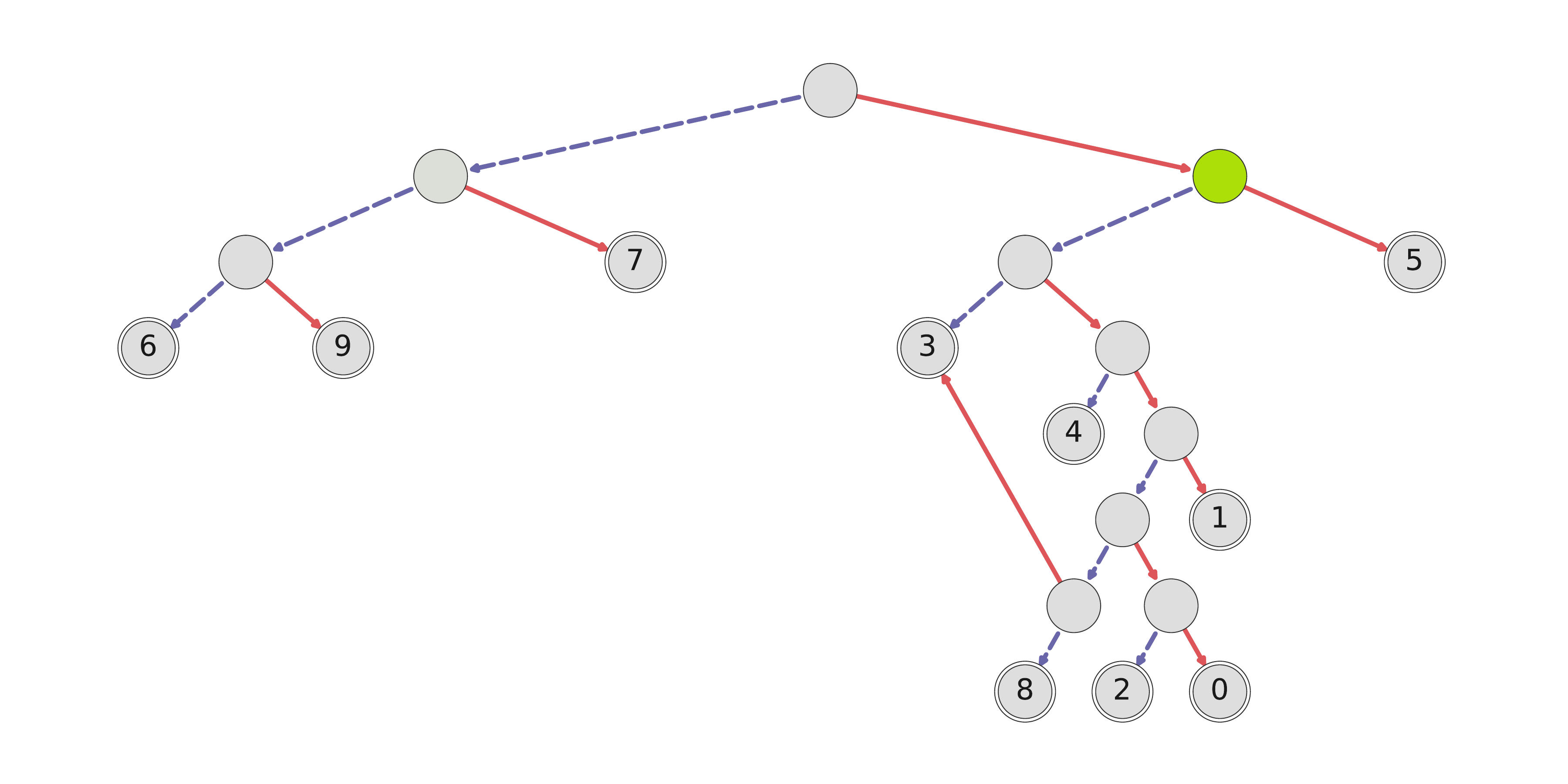}
    \includegraphics[width=0.49\textwidth]{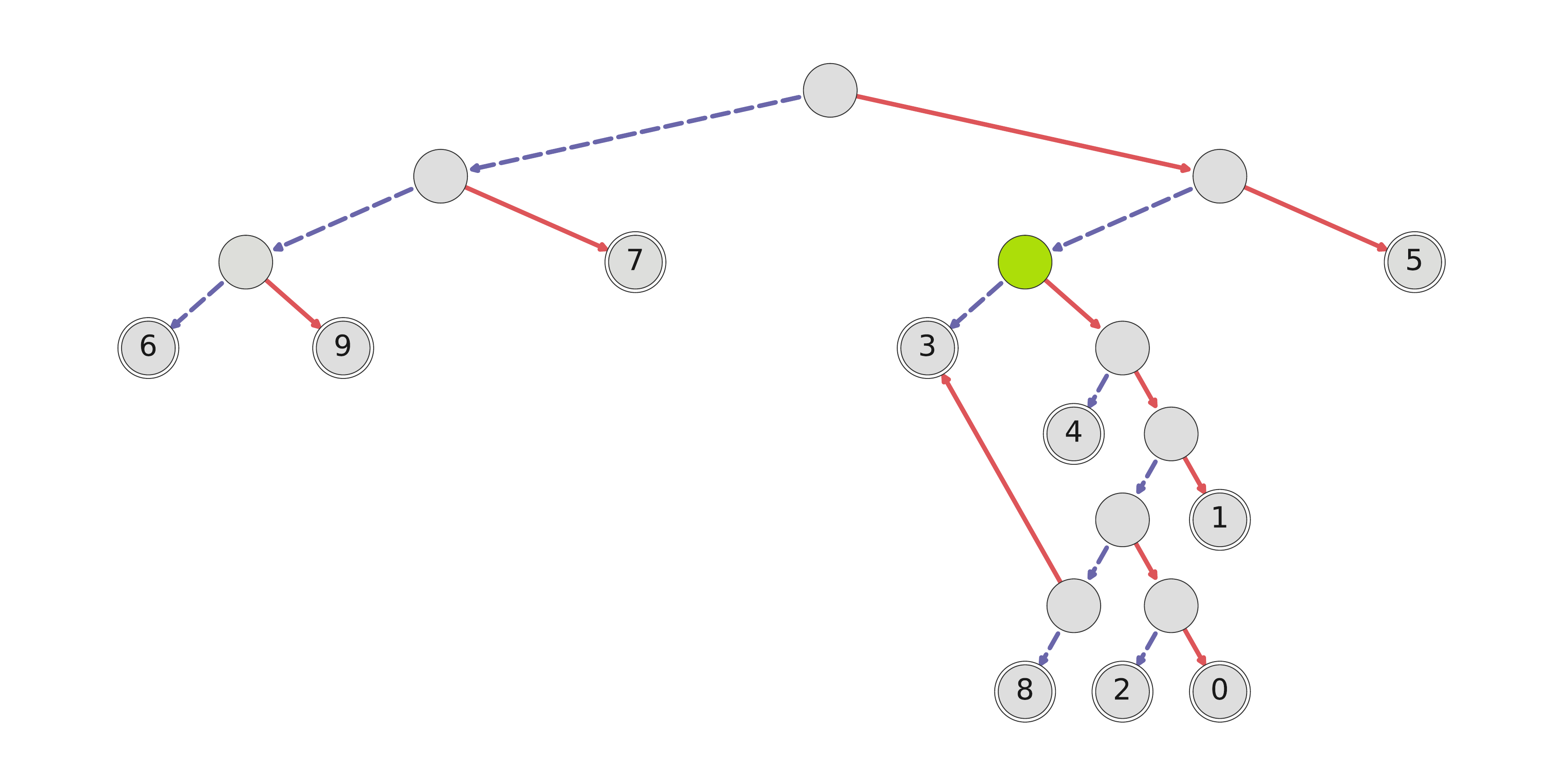}
    \includegraphics[width=0.49\textwidth]{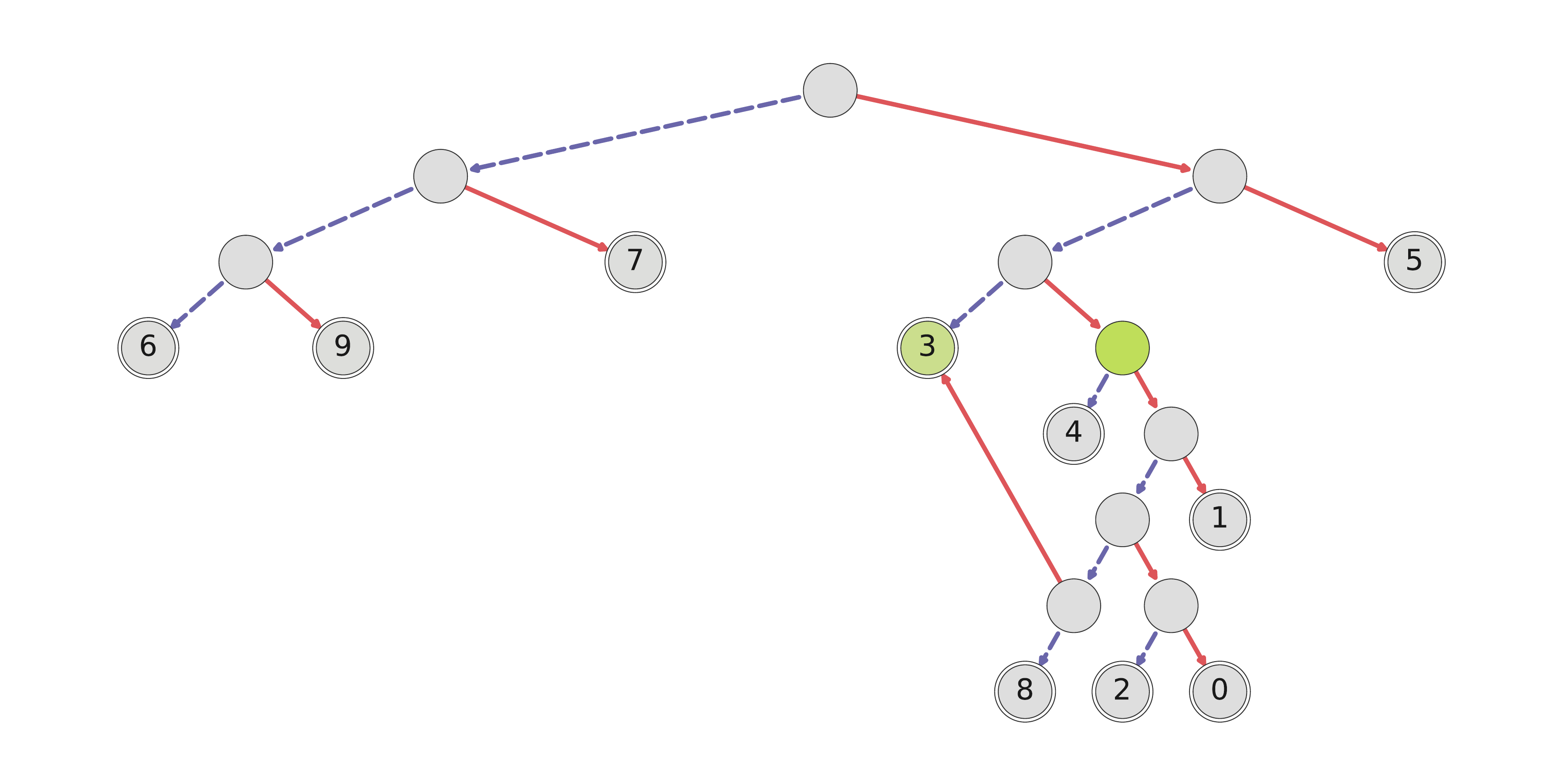}
    \includegraphics[width=0.49\textwidth]{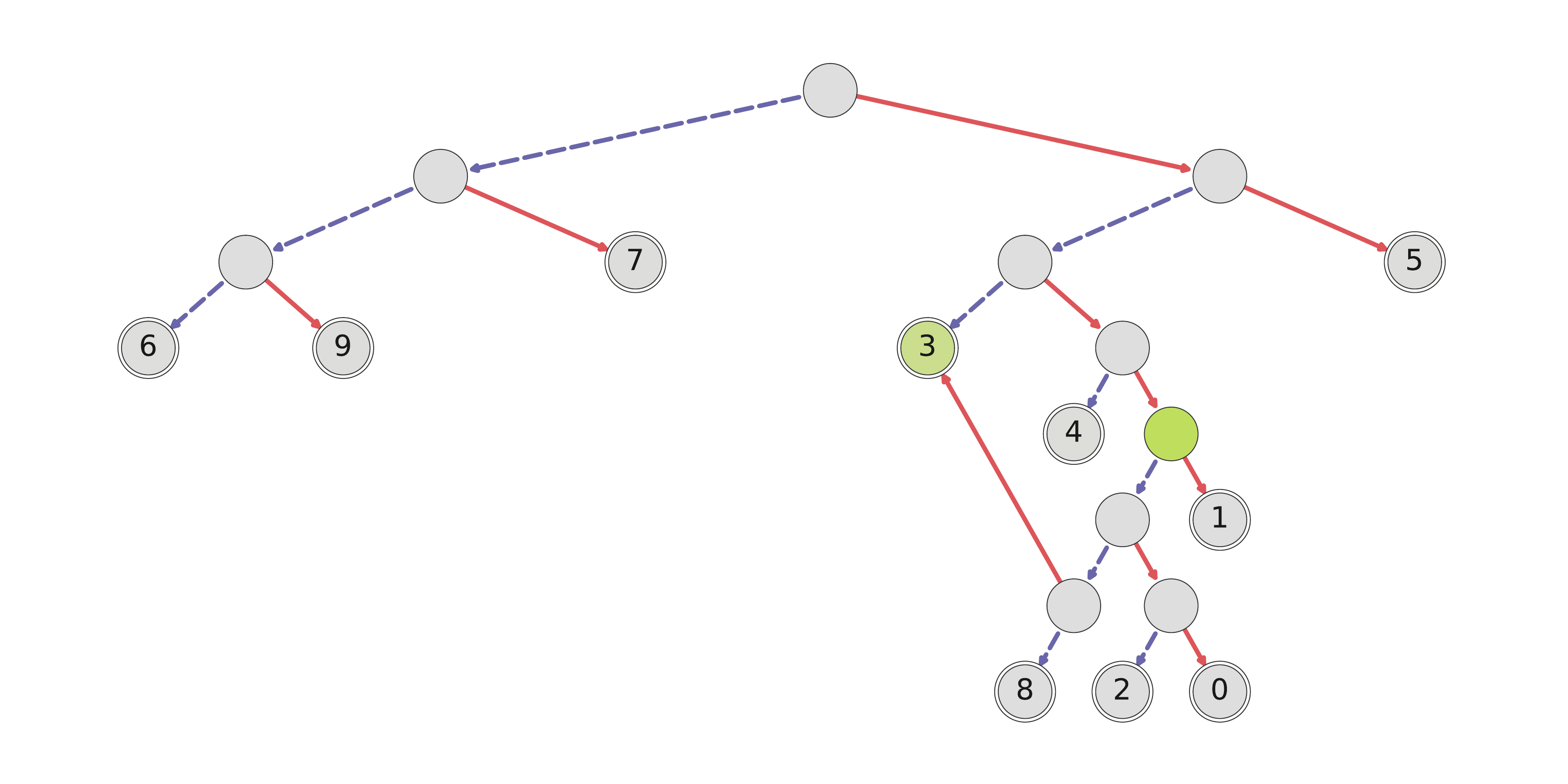}
    \includegraphics[width=0.49\textwidth]{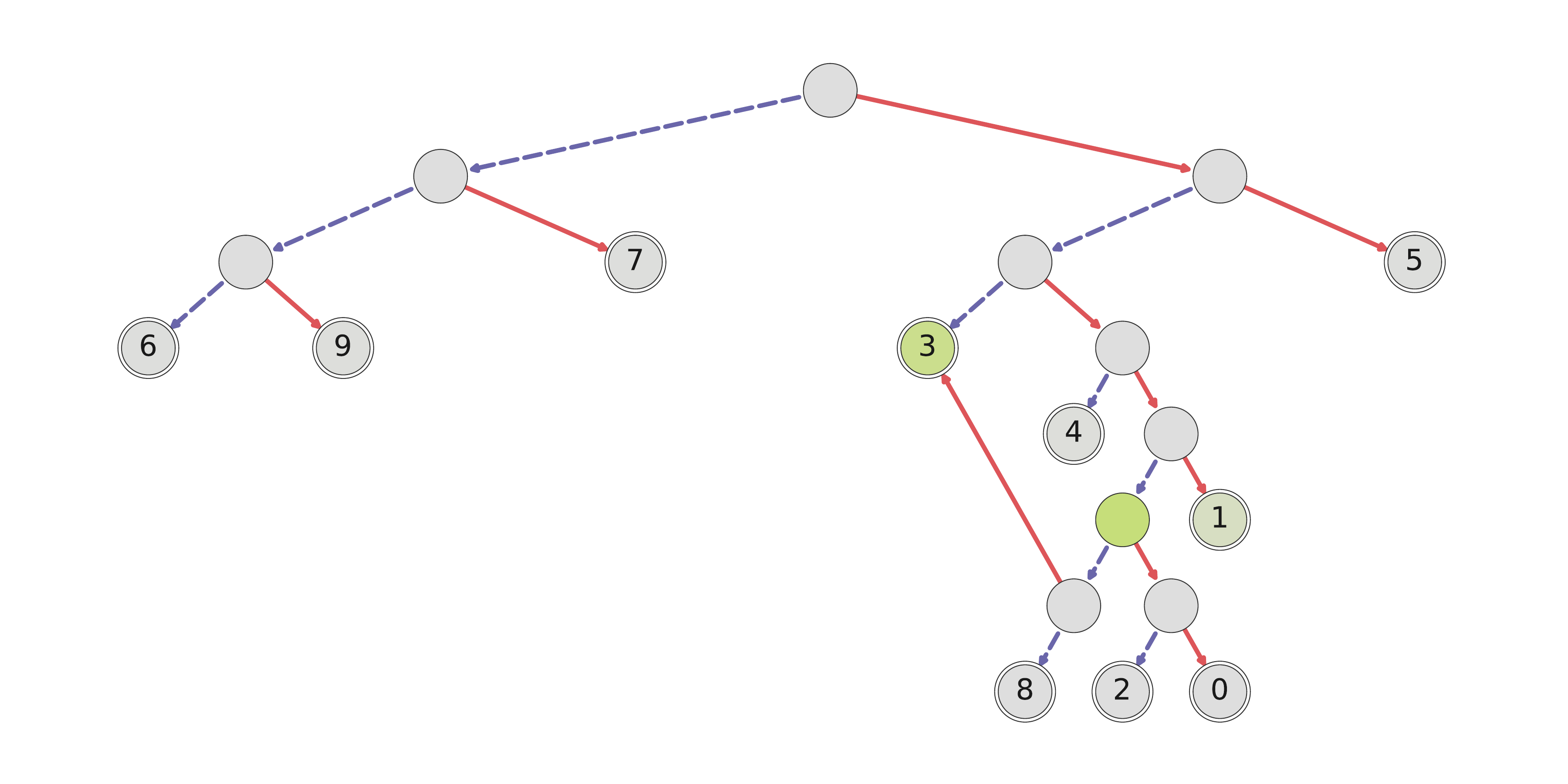}
    \includegraphics[width=0.49\textwidth]{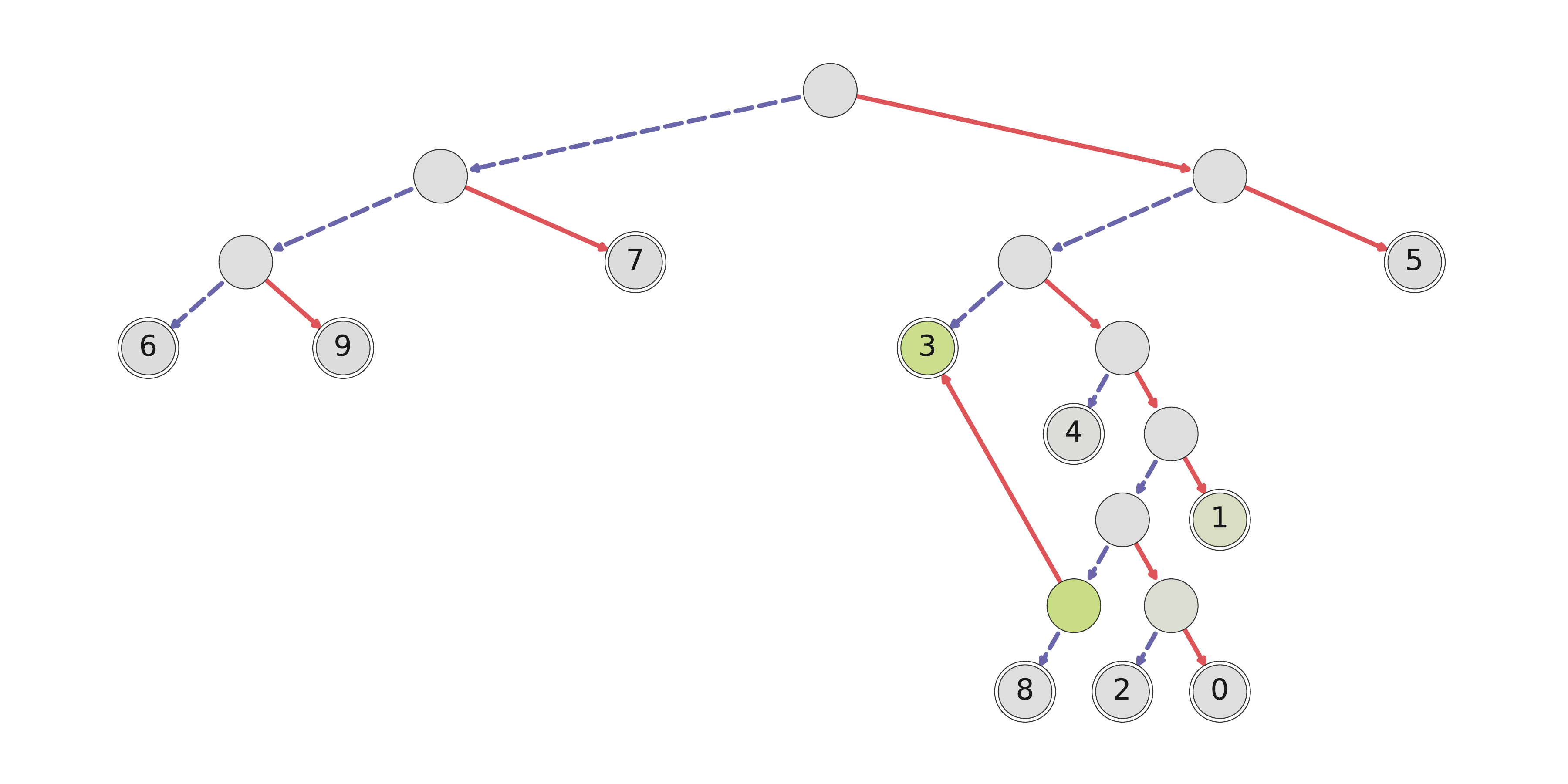}
    \includegraphics[width=0.49\textwidth]{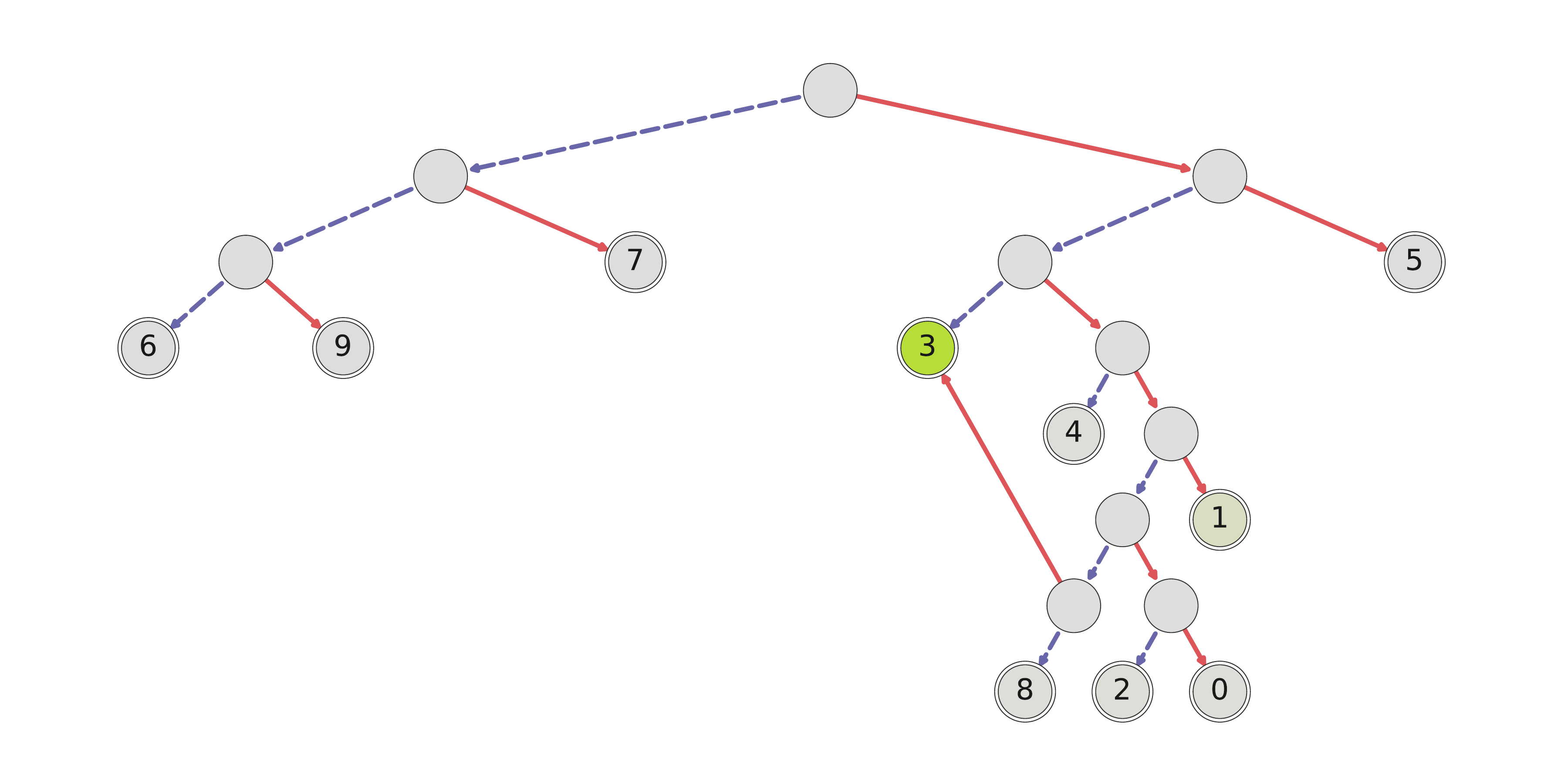}
    \caption{An input image passing through a SONG trained on CIFAR10. High saturation of the green color denotes high probability in the node. Each graph represent a consecutive step of the inference (from left to right, then top to bottom). For each node $v_i$, we present two edges corresponding to the highest probability from two transition vectors $m_{\cdot{}i}^0$ and $m_{\cdot{}i}^1$ (represented as dashed blue and solid red arrows, respectively).}
    \label{fig:steps_3}
\end{figure*}

\begin{figure*}[ht]
    \centering
    \includegraphics[width=0.49\textwidth]{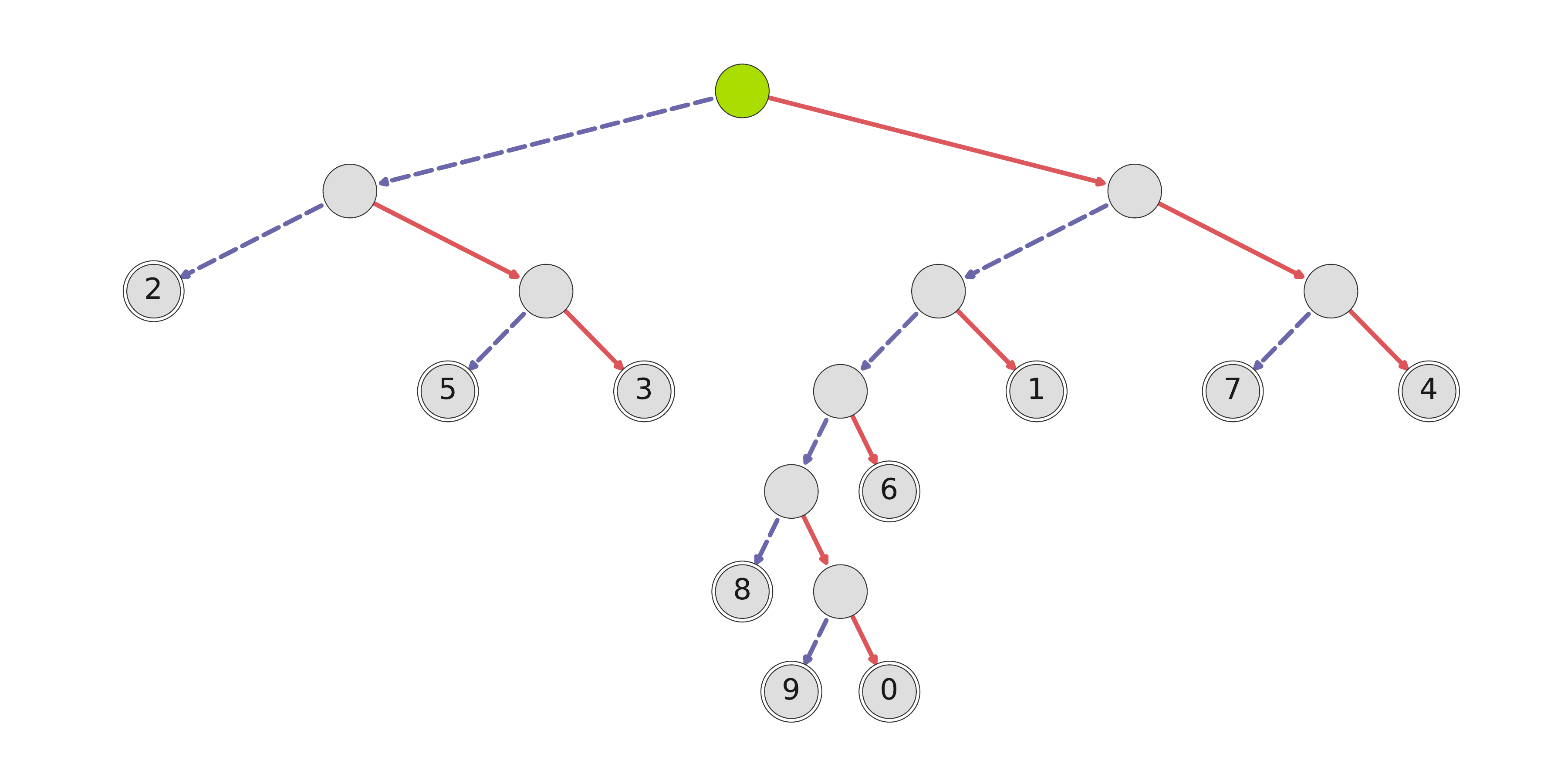}
    \includegraphics[width=0.49\textwidth]{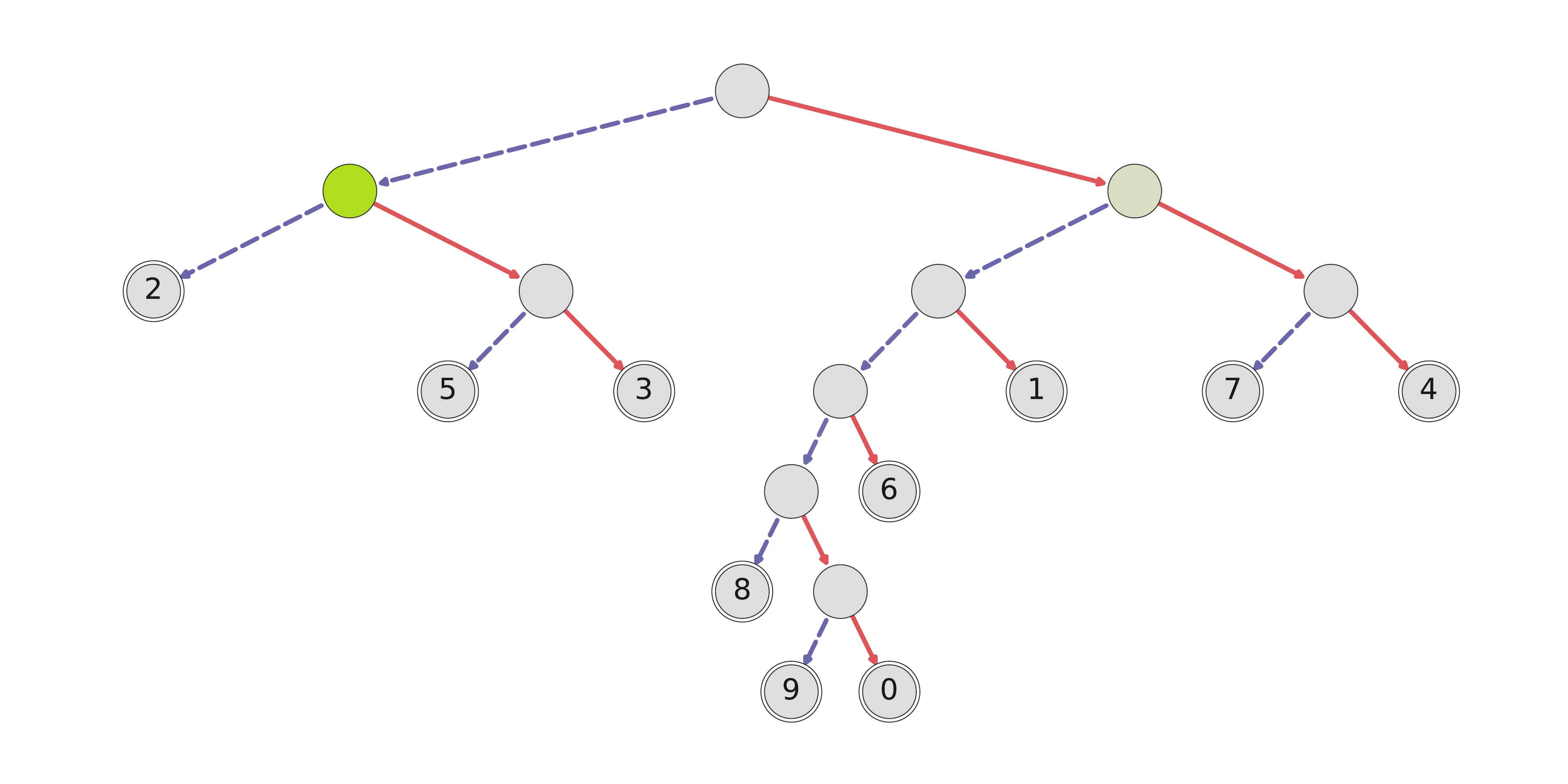}
    \includegraphics[width=0.49\textwidth]{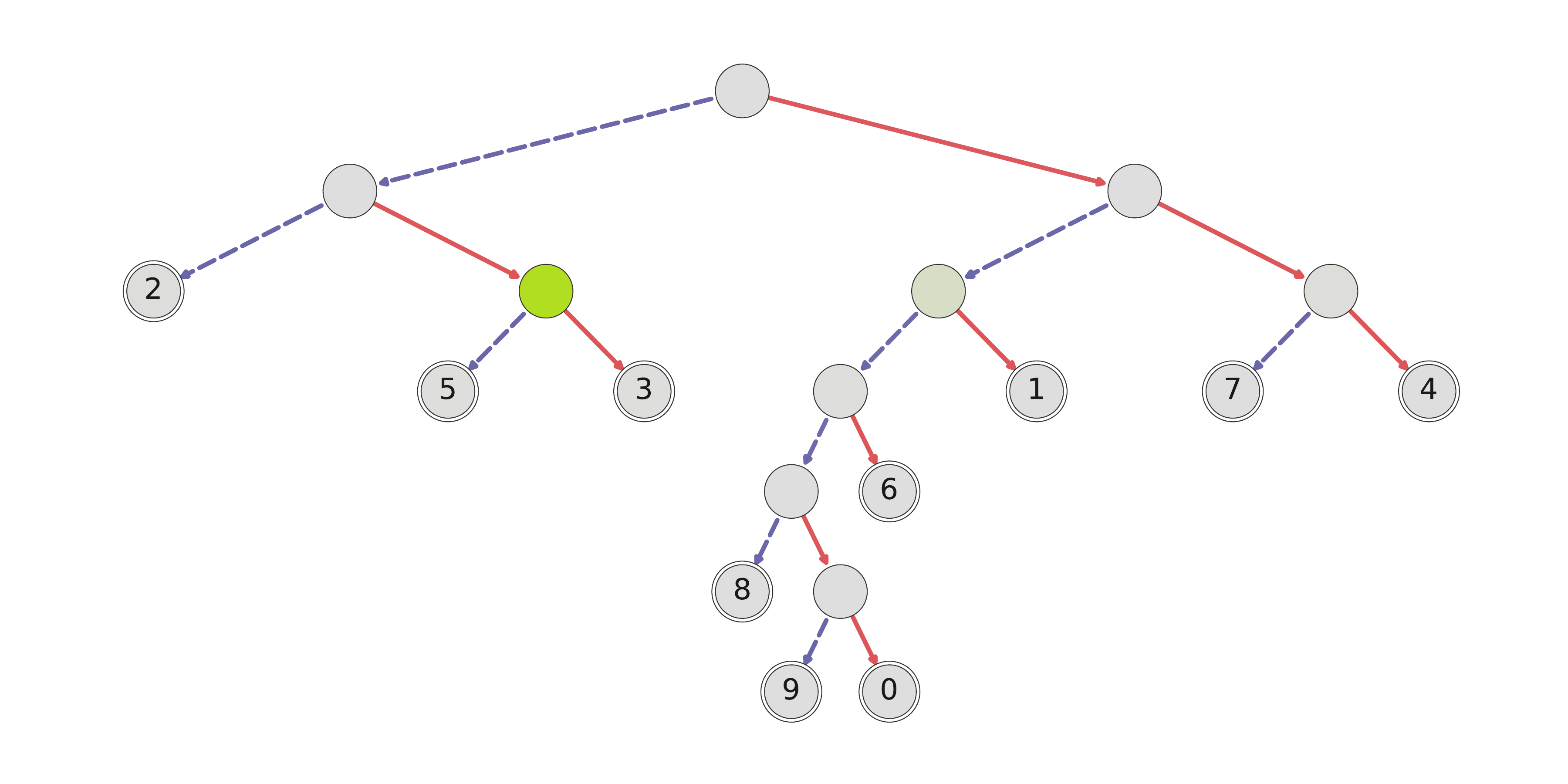}
    \includegraphics[width=0.49\textwidth]{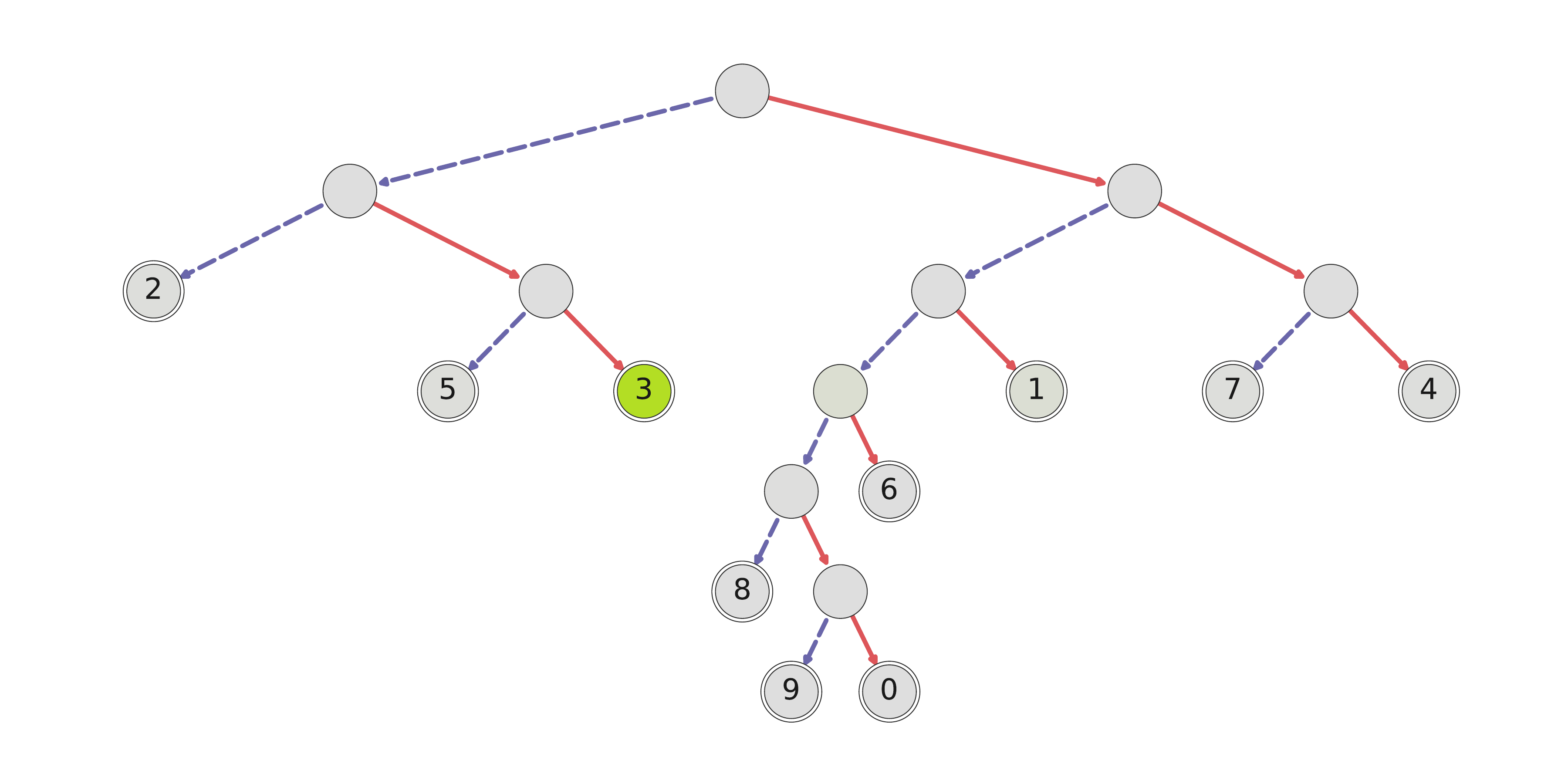}
    \caption{An input image passing through a SONG trained on CIFAR10. High saturation of the green color denotes high probability in the node. Each graph represent a consecutive step of the inference (from left to right, then top to bottom). For each node $v_i$, we present two edges corresponding to the highest probability from two transition vectors $m_{\cdot{}i}^0$ and $m_{\cdot{}i}^1$ (represented as dashed blue and solid red arrows, respectively).}
    \label{fig:steps_2}
\end{figure*}

\begin{figure*}[h]
    \centering
    \includegraphics[width=0.49\textwidth]{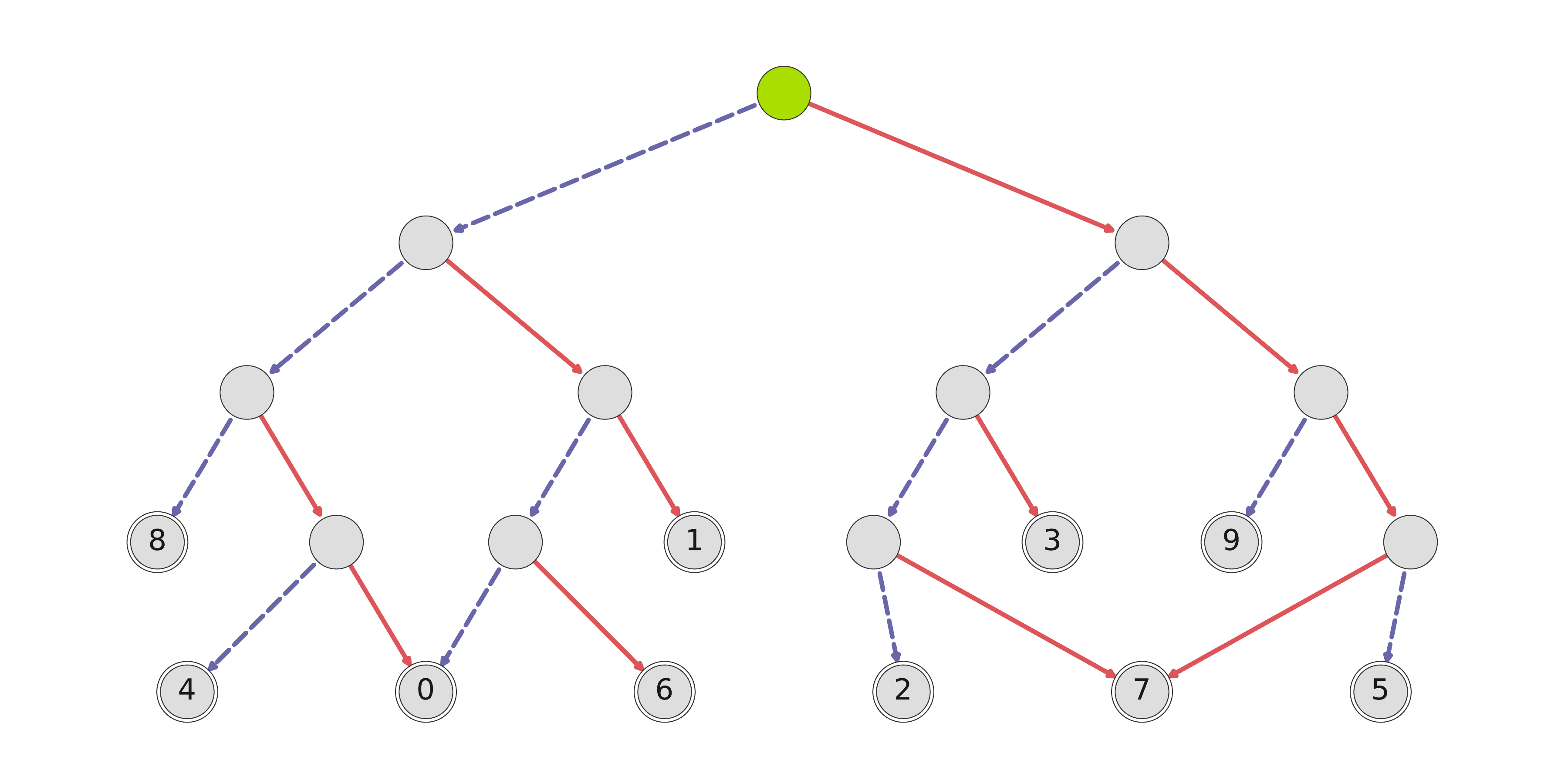}
    \includegraphics[width=0.49\textwidth]{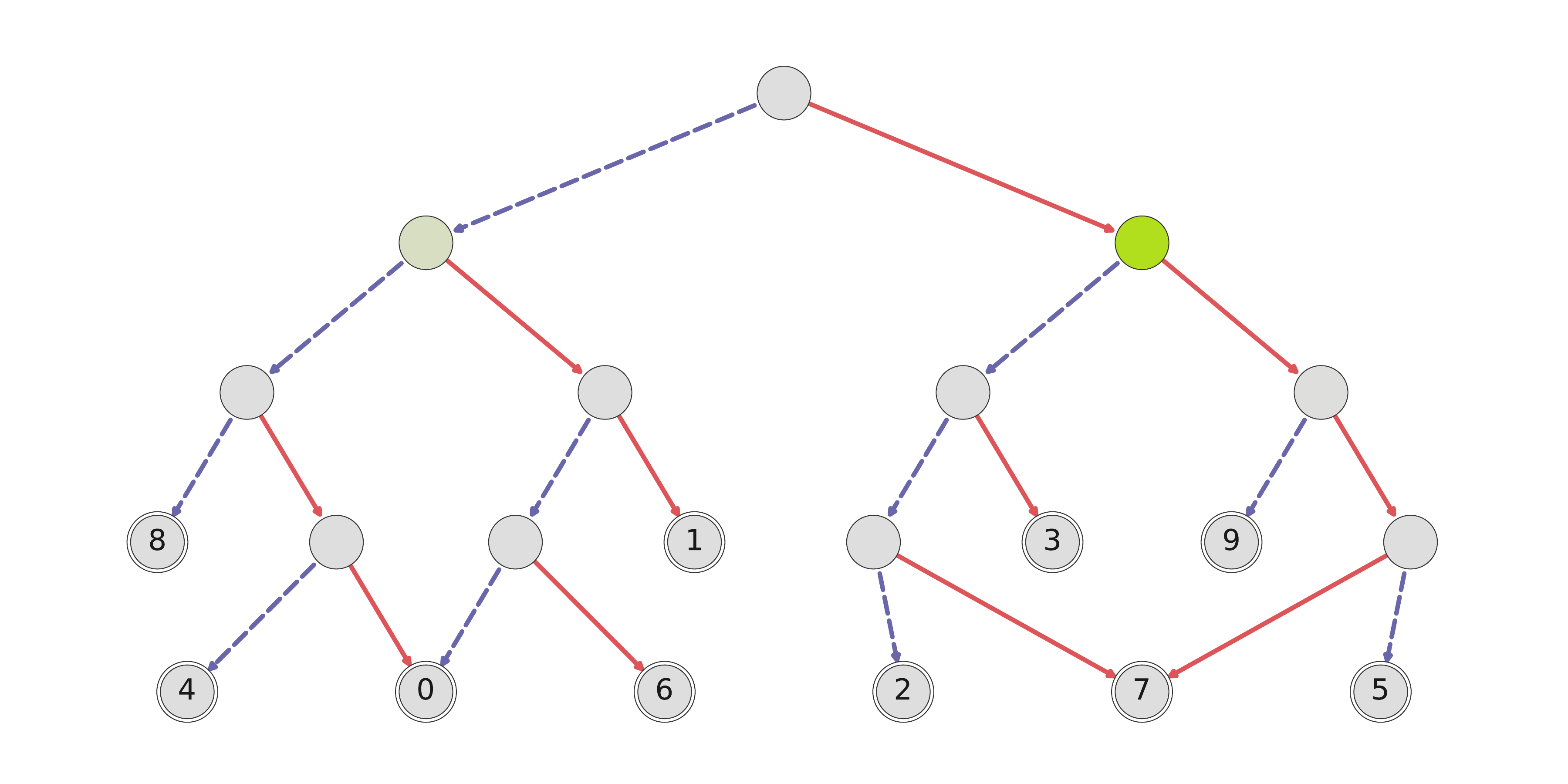}
    \includegraphics[width=0.49\textwidth]{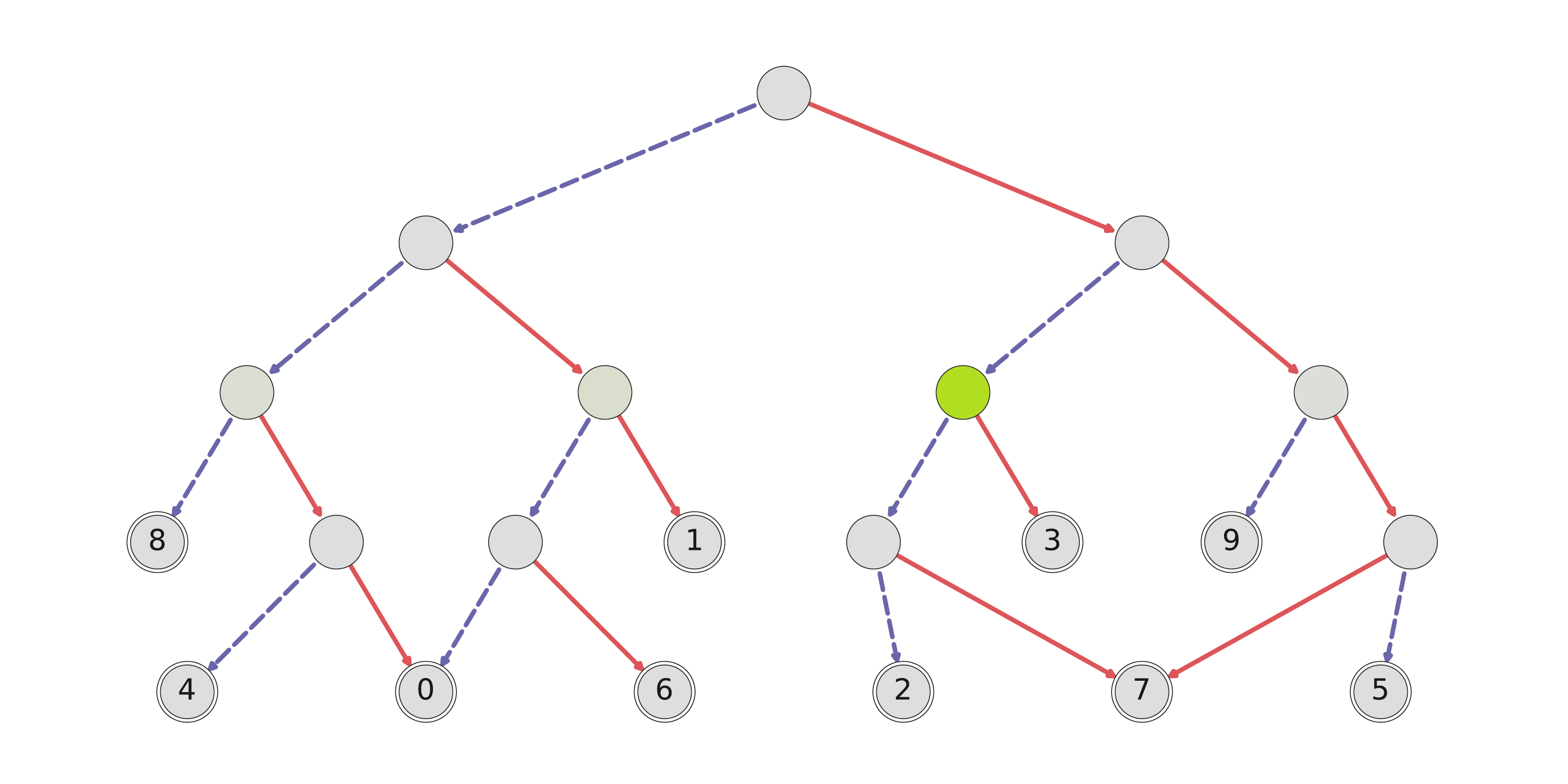}
    \includegraphics[width=0.49\textwidth]{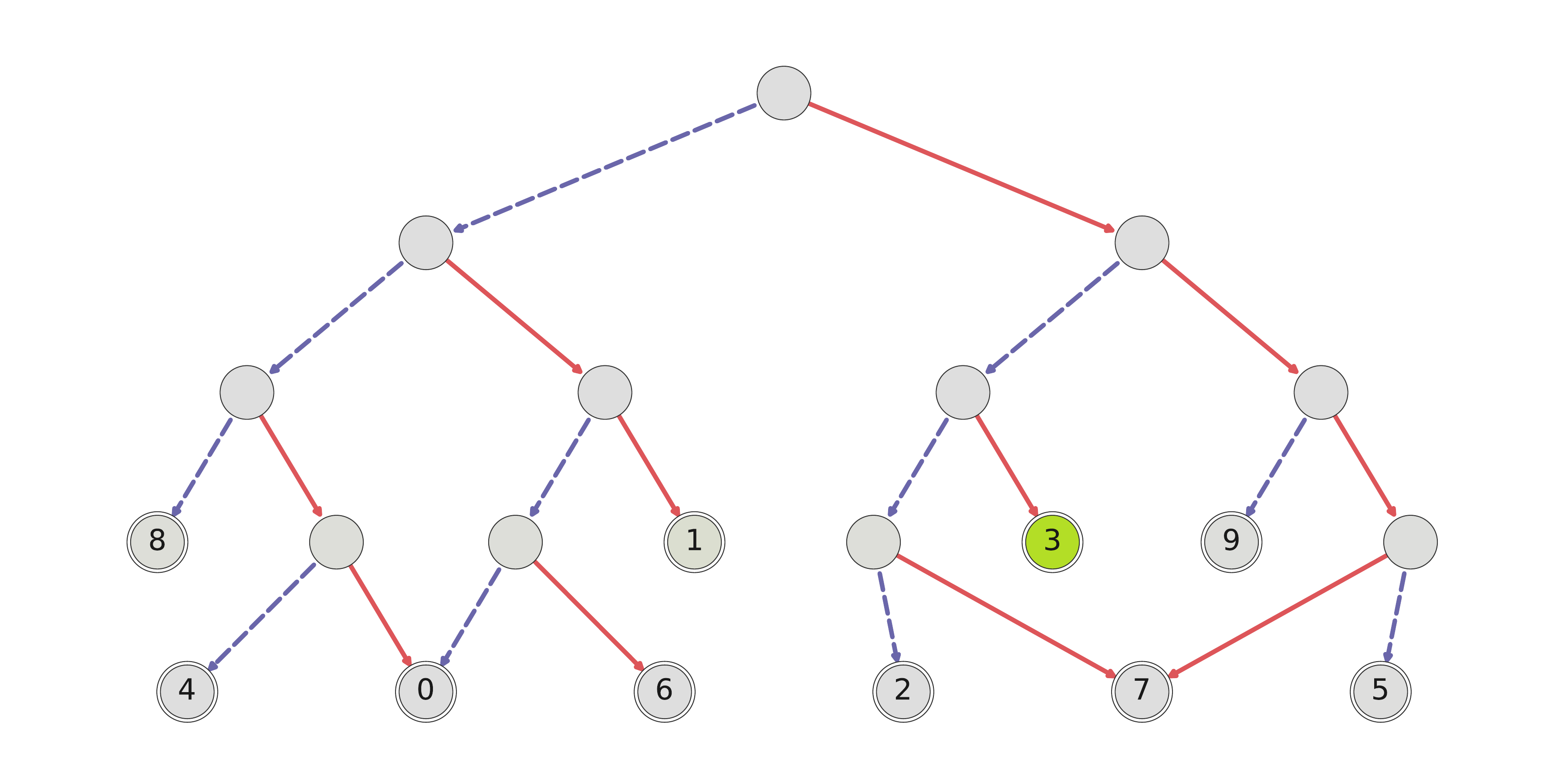}
    \caption{An input image passing through a SONG trained on CIFAR10. High saturation of the green color denotes high probability in the node. Each graph represent a consecutive step of the inference (from left to right, then top to bottom). For each node $v_i$, we present two edges corresponding to the highest probability from two transition vectors $m_{\cdot{}i}^0$ and $m_{\cdot{}i}^1$ (represented as dashed blue and solid red arrows, respectively).}
    \label{fig:steps_4}
\end{figure*}

\section{Theoretical analysis}
\label{sec:theoretical_appendix}

Let us consider SONG as the probabilistic model over trajectories. A trajectory of length $N$, starting at the root of \sbdg{} $G$, is defined as $T=(u_{i_t})_{t=1..N}$ with binary decision $d_t \in \{0,1\}, i_t \in I$, where $I$ denotes the set of node indexes. Thus, our trajectory starts at the root ($i_0=0$) and successively passes through nodes $u_{i_{t_1}},\ldots,u_{i_{t_N}}$. The position of trajectory after time $t$ is defined as $T(t)=u_{i_t}$ and the probability of trajectory $T$ is defined as
$$
\abovedisplayskip=0pt
\abovedisplayshortskip=0pt
\belowdisplayskip=0pt
\belowdisplayshortskip=1pt
\prob{T; G}=\prod_{t=1}^N (\sigma^{d_t}_{i_{t-1}} \cdot m^{d_t}_{i_ti_{t-1}}).
$$
Then the probability of reaching leaf $l$ after $N$ steps with a random trajectory $T$ equals
$
\prob{T(N)=l \, | \, T \sim G},
$
where $T \sim G$ denotes that we sample trajectories with respect to distribution given by $\prob{\cdot; G}$.

Next, we introduce a binarized graph $G$, where we binarize the connections from any pair of nodes. For a fixed $d\in\{0,1\}$, we denote $G[i,j;d]$ as the graph that makes a decision of moving from $u_i$ to $u_j$ with probability 1. 

In the following theorem, we show that if $G$ has no cycles, then we can decompose the probability of its trajectory into the mixture of such binarized graphs.

\begin{theorem} \label{th:1}
Let $G$ be a \sbdg{} where the probability of visiting twice an arbitrary node by a trajectory of length $N$ is zero. Moreover, $u_i$ be an internal node, fixed $d\in\{0,1\}$, and an arbitrary trajectory $T$ of length $N$. Then
\begin{equation} \label{eq:1}
\abovedisplayskip=0pt
\abovedisplayshortskip=0pt
\belowdisplayskip=0pt
\belowdisplayshortskip=1pt
\prob{T;G}=\sum_{j=1}^n m_{ji}^d \prob{T;G[i,j,d]}.
\end{equation}
\end{theorem}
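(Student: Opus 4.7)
The plan is to exploit a simple linearity. With all other parameters of $G$ held fixed, $\prob{T;G}$ is a polynomial in the entries of the column $m^d_{\cdot i}=(m^d_{ji})_j$, and its degree equals the number of time steps $t$ at which the trajectory leaves $u_i$ using decision $d$, i.e., the number of $t$ with $(i_{t-1},d_t)=(i,d)$. By the acyclicity hypothesis, along any trajectory of positive probability this count is at most one, so the dependence is at most linear. Moreover, $\prob{T;G[i,j,d]}$ is exactly this same expression evaluated at the standard basis vector $e_j$, since $G[i,j,d]$ amounts to replacing $m^d_{\cdot i}$ by $e_j$.

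Writing the stochastic column as a convex combination $m^d_{\cdot i}=\sum_j m^d_{ji}\,e_j$ and applying linearity in $m^d_{\cdot i}$ gives
$$
\prob{T;G}=\sum_j m^d_{ji}\,\prob{T;G[i,j,d]}
$$
immediately. The degree-zero subcase, where no step of $T$ is an $(i,d)$-step, is handled by constancy of the polynomial: $\prob{T;G[i,j,d]}=\prob{T;G}$ for every $j$, and the right-hand side collapses to $\prob{T;G}\sum_j m^d_{ji}=\prob{T;G}$ because each column of $\MM^d$ sums to one.

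The main obstacle I anticipate is the book-keeping for trajectories that formally revisit $u_i$ with decision $d$ and thus lie outside the acyclicity guarantee. For any such $T$ one has $\prob{T;G}=0$ by hypothesis, so I must verify that the right-hand side also vanishes. The Dirac factors in $\prob{T;G[i,j,d]}$ at two distinct $(i,d)$-steps force both successor indices to equal $j$, which either identifies them (so the revisitation is pushed to that common successor and the argument iterates) or annihilates every summand. A final micro-check covers the corner $m^d_{i_{t^*}i}=0$ with a single active $(i,d)$-step: the only potentially surviving term on the right is at $j=i_{t^*}$, whose coefficient is then zero, while for $j\neq i_{t^*}$ the Kronecker delta kills $\prob{T;G[i,j,d]}$. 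With these degenerate cases dispatched, the linearity identity handles the generic case in one line.
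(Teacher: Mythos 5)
Your linearity framing is, at bottom, the same case analysis the paper performs: your degree-zero case is the paper's ``$T$ avoids $u_i$'' case (and, usefully, it also absorbs the case where $T$ visits $u_i$ but takes decision $1-d$, which the paper's case split technically skips); your degree-one case is the paper's third case; and your degenerate high-degree case is the paper's second case, which the paper dispatches by contradiction. So the route is essentially the paper's, just packaged as evaluation of a (multi)linear form at the basis vectors $e_j$ versus at the convex combination $m^d_{\cdot i}=\sum_j m^d_{ji}e_j$.

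The one place you should tighten is the subcase where $T$ takes $k\ge 2$ steps of type $(i,d)$ and all of them have the \emph{same} successor $j_0$. Saying ``the revisitation is pushed to that common successor and the argument iterates'' is not an argument: you are binarizing at $u_i$, not at $u_{j_0}$, so nothing iterates. What you actually need is one line: writing $C=\prob{T;G[i,j_0,d]}$ for the product of all factors not involving $m^d_{\cdot i}$, the hypothesis gives $0=\prob{T;G}=(m^d_{j_0 i})^k C$ with $k\ge 2$, hence $m^d_{j_0 i}=0$ or $C=0$, and in either case the unique surviving term $m^d_{j_0 i}\,C$ on the right-hand side vanishes. (The distinct-successors subcase is fine as you wrote it: two incompatible Kronecker deltas kill every summand.) With that sentence inserted, the proof is complete; the final ``micro-check'' about $m^d_{i_{t^*}i}=0$ is already subsumed by the degree-one linearity identity and can be dropped.
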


\begin{proof}
Let $T=(u_{i_t})_{t=1..N}$ be a given trajectory and let us consider three cases of passing trough node $u_i$.
First case assumes that $T$ does not pass through $u_i$, i.e. $i \neq i_t$
for $t=1,\ldots,N$. Then, directly from the definition of the trajectory's probability
$$
\prob{T;G}=\prob{T;G[i,j,d]} \text{, for an arbitrary }j.
$$
This completes the proof of \eqref{eq:1} in this case. Hence, let us now consider the cases where the trajectory $T$ passes through node $u_i$.

Suppose the second case, when $T$ passes through $u_i$ more than once. In this case, we will show that both the left and right sides of \eqref{eq:1} are zero. Obviously, $\prob{T;G}=0$ follows directly from the assumption that the probability of visiting twice an arbitrary node by a trajectory of length $N$ in $G$ is zero. Assume, for an indirect proof that there exist $j$ such that $m_{ji}^d\prob{T;G[i,j,d]}>0$. Then $m_{ji}^d>0$. Moreover, if $T$ passes though $u_i$ and makes a decision $d$, then it has to move to $u_j$. In consequence,
$$
\prob{T;G}=m_{ji}^d\prob{T;G[i,j,d]}>0
$$
is a contradiction.

Let us consider the remaining, third case, when $T$ passes through $u_i$ only once, and makes a decision $d$. In other words, there exists a unique $t$ such that $i_t=i$ and $d_t=d$. Observe that if $j=i_{t+1}$ then we move from $u_i$ to $u_j$ and all the probabilities $\prob{T;G[i,l,d]}=0$, for $l \neq j$. Moreover, since $T$ visits $u_i$ only once, we get $\prob{T;G}=m_{ji}^d\prob{T;G[i,j,d]}$, which completes the proof.
\end{proof}

We now show the consequences of the above theorem for the SONG model. For this purpose, we assume that $X=(x_i)_{i=1..K}$ where each $x_i$ is associated with a label $y_i$. We also consider SONG $\G$ trained on $X$ for trajectories of length $N$. Thus for each pair $(x, y)$, we define the probability that a random trajectory of length $N$ reaches leaf corresponding to $y$ as $\prob{T(N)=y \, | T \sim \G_x}$.

In the following theorem, we show that if SONG is trained with zero CE or BCE loss, then no trajectory of length $N$ in $\G_x$ visits the same internal node twice with nonzero probability.

\begin{theorem}\label{th.acyclic}
Let us consider \our{} classifier with $N$ moves and $x$ being a data point with class $y$, such that $\mathrm{loss}\big(\prob{T(N)=y \, | T \sim \G_x}, y\big)=0$. 

Then no trajectory of length $N$ in $\G_x$ visits the same internal node twice with nonzero probability.
\end{theorem}

\begin{proof}
First observe, that directly from the fact that both CE and BCE are non-negative, $\mathrm{loss}\big(\prob{T(N)=y \, | T \sim \G_x}, y\big)=0$ iff 
$$
\prob{T(N)=y \, | T \sim \G_x}=1.
$$

Now suppose that there exists a trajectory $T$ with nonzero probability, which goes through a given internal node $u$ twice, i.e. $T(t_1)=T(t_2)=v \; \text{ for }\; t_1 < t_2$.
Observe that $T(t)$ is not a leaf for $t \in [t_1,t_2]$, since after reaching the leaf, we stay in it. Consider the trajectory $\tilde T$ given by
$$
\tilde T(t)=\begin{cases}
T(t) & \text{if }t \leq t_1, \\
T(t_1+s) & \parbox[t]{4.3cm}{if $t =t_1+l(t_2-t_1)+s, \\ \phantom{if} l\in\mathbb{N}, s \in \{0,..,t_2-t_1\}$}. 
\end{cases}
$$
In other words, this is a trajectory that forms a cycle after reaching $u$. Thus we does not end in a leaf with nonzero probability, which leads to a contradiction.
\end{proof}

The accuracy of $\G$ over set $X$ is defined as the probability of predicting the correct class
$$
\abovedisplayskip=0pt
\abovedisplayshortskip=0pt
\belowdisplayskip=0pt
\belowdisplayshortskip=1pt
\acc(\G;X)=\frac{1}{K} \sum_{i=1}^K
\prob{T(N)=y_i \, | T \sim \G_{x_i}}.
$$
As a direct consequence of Theorem \ref{th:1}, we formulate the following fact.

\begin{theorem} \label{th:b3}
Let $\G$ be a SONG. We assume that for every $x\in X$ no trajectory in $\G_x$ of length $N$ that visits twice the same internal node with nonzero probability. Let a node index $i \in \{1,\ldots,n\}$ and $d \in \{0,1\}$ be fixed. Then 
$$
\abovedisplayskip=0pt
\abovedisplayshortskip=0pt
\belowdisplayskip=0pt
\belowdisplayshortskip=1pt
\acc(\G;X)=\sum_{j=1}^n m_{ji}^d \acc(\G[i,j,d];X).
$$
\end{theorem}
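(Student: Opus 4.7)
My plan is to reduce the statement to a pointwise application of Theorem~\ref{th:1}: for each fixed training input $x_k$, I would decompose the probability of reaching the correct leaf using \eqref{eq:1}, and then average over $k$. First, I would unfold the definition,
$$
\acc(\G;X)=\frac{1}{K}\sum_{k=1}^K \prob{T(N)=y_k \, | T \sim \G_{x_k}},
$$
and rewrite each summand as a sum $\sum_{T}\prob{T;\G_{x_k}}$ ranging over the length-$N$ trajectories $T$ that terminate at the leaf $l_{y_k}$ associated with the true label $y_k$.

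Next, the hypothesis guarantees that for every $k$ the graph $\G_{x_k}$ satisfies the no-revisit condition of Theorem~\ref{th:1}, so I would substitute
$$
\prob{T;\G_{x_k}}=\sum_{j=1}^n m_{ji}^d\,\prob{T;\G_{x_k}[i,j,d]}
$$
into each trajectory sum. The crucial observation is that the weights $m_{ji}^d$ are entries of the model parameter matrix $\MM^d$ and therefore do \emph{not} depend on $x_k$; only the sigmoid gates $\sigma^d_i(x)$ in SONG depend on the input, while the transition matrices are shared. Consequently, I can pull $m_{ji}^d$ outside both the sum over trajectories $T$ and the average over $k$.

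Interchanging the finite sums yields
$$
\acc(\G;X)=\sum_{j=1}^n m_{ji}^d\cdot\frac{1}{K}\sum_{k=1}^K\sum_{T:T(N)=l_{y_k}} \prob{T;\G_{x_k}[i,j,d]},
$$
and the inner double sum is exactly $\acc(\G[i,j,d];X)$ by the same definition, which closes the argument. I do not anticipate any genuine obstacle: the heavy lifting has already been done in Theorem~\ref{th:1}, and what remains is essentially linearity of probability combined with the independence of $m_{ji}^d$ from the input. The only point worth verifying in passing is that each binarized graph $\G_{x_k}[i,j,d]$ remains a well-defined probabilistic model; this is immediate since forcing the branch $u_i\to u_j$ under decision $d$ only restricts (never enlarges) the support of positive-probability trajectories, so no new cycles or degeneracies can appear.
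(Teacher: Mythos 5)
Your proposal is correct and follows essentially the same route as the paper: apply Theorem~\ref{th:1} pointwise to each trajectory, sum over trajectories ending at the correct leaf, and average over the dataset, using that $m_{ji}^d$ is a shared parameter independent of the input so it can be pulled out of both sums. The paper's proof is just a terser version of the same argument.
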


\begin{proof}
By Theorem \ref{th:1}, for an arbitrary point $x \in X$ (with class $y$) and trajectory of length $N$, we have
$$
\abovedisplayskip=0pt
\abovedisplayshortskip=0pt
\belowdisplayskip=0pt
\belowdisplayshortskip=1pt
\prob{T;\G_x}=\sum_{j=1}^n m_{ji}^d \prob{T;\G_x[i,j,d]}.
$$
In consequence,
\begin{align*}
& \prob{T(N)=y \, | T \sim \G_x} = \\
& = \sum_{j=1}^n m_{ji}^d
\prob{T(N)=y \, | T \sim \G_x[i,j,d]}.
\end{align*}
Averaging the above probability over all points from $X$ and applying the definition of accuracy, we obtain the assertion of the theorem.
\end{proof}

Observe that the above theorem implies that if we discretize connections in the graph by applying formula \eqref{eq:jj} (below), then we do not decrease the accuracy of the model (statistically, we increase it):

\begin{theorem}
Let $\G_x$ be SONG generated for $x\in X$ with CE or BCE loss equals zero. Moreover, let node index $i \in I$ and $d \in \{0,1\}$ be fixed, and
\begin{equation} \label{eq:jj}
j=\argmax_{\tilde j} \acc(\G[i,\tilde j,d];X).
\end{equation}
Then 
$$
\abovedisplayskip=0pt
\abovedisplayshortskip=0pt
\belowdisplayskip=0pt
\belowdisplayshortskip=1pt
\acc(\G;X) \leq \acc(\G[i,j,d];X).
$$
\end{theorem}

\begin{proof}
From Theorem~\ref{th.acyclic} we obtain that $\G_x$ is SONG generated for $x\in X$ with no trajectory of length $N$ that visits twice the same point with nonzero probability. Theorem~\ref{th:b3} implies that if we discretize connections in the graph by applying formula \eqref{eq:jj}, then we do not decrease the accuracy of the model.
\end{proof}

\section{Ablation study on leaves regularization}

In Figure~\ref{fig:reg_ablation}, we present a comparison between SONG trained on MNIST dataset with (a) and without $L_{leaves}$ regularization (b). The accuracy and BCE loss reported at the final stage of training are similar for both models. However, there are significant differences between their convergence times. Most interestingly, models with regularization hold $L_{leaves}$ close to $0$ during the whole training, so the sum of probability in the leaves is close to $1$ all the time. On the other hand, the models without regularization have an increased value of $L_{leaves}$ between $50$ and $150$ epoch, meaning that the leaves are not reached for some of the input samples. Such behavior can be especially detrimental for larger datasets that require more training epochs to converge.

\begin{figure*}[t]
    \centering
    \includegraphics[width=.4\textwidth]{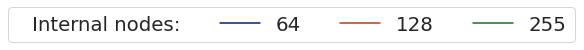} \\
    \begin{subfigure}[b]{.49\textwidth}
        \centering
        \includegraphics[width=\textwidth]{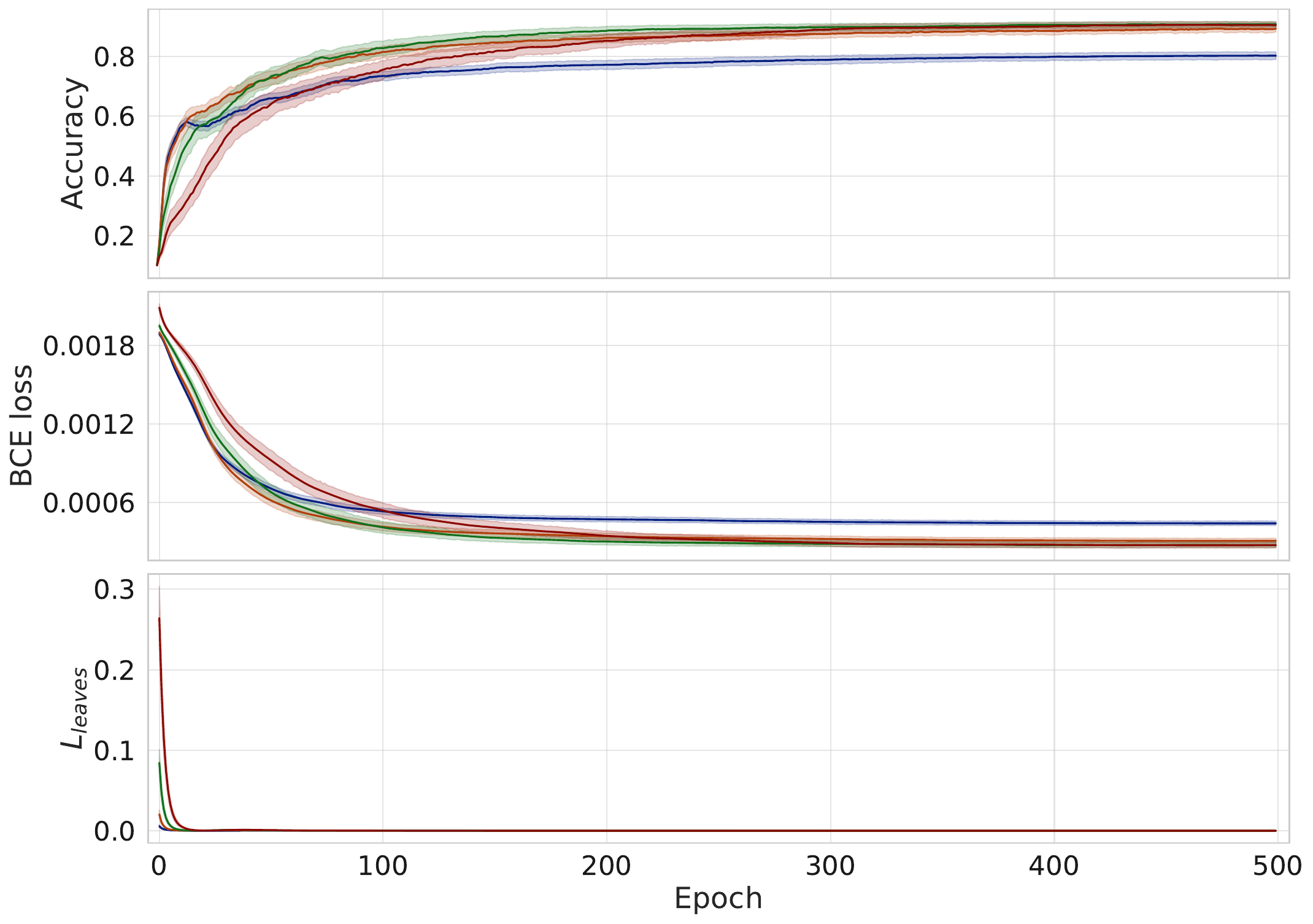}

        \caption{SONG trained with $L_{leaves}$ regularization.}
        \label{fig:loss}
    \end{subfigure}
    ~
    \begin{subfigure}[b]{.49\textwidth}
        \centering
        \includegraphics[width=\textwidth]{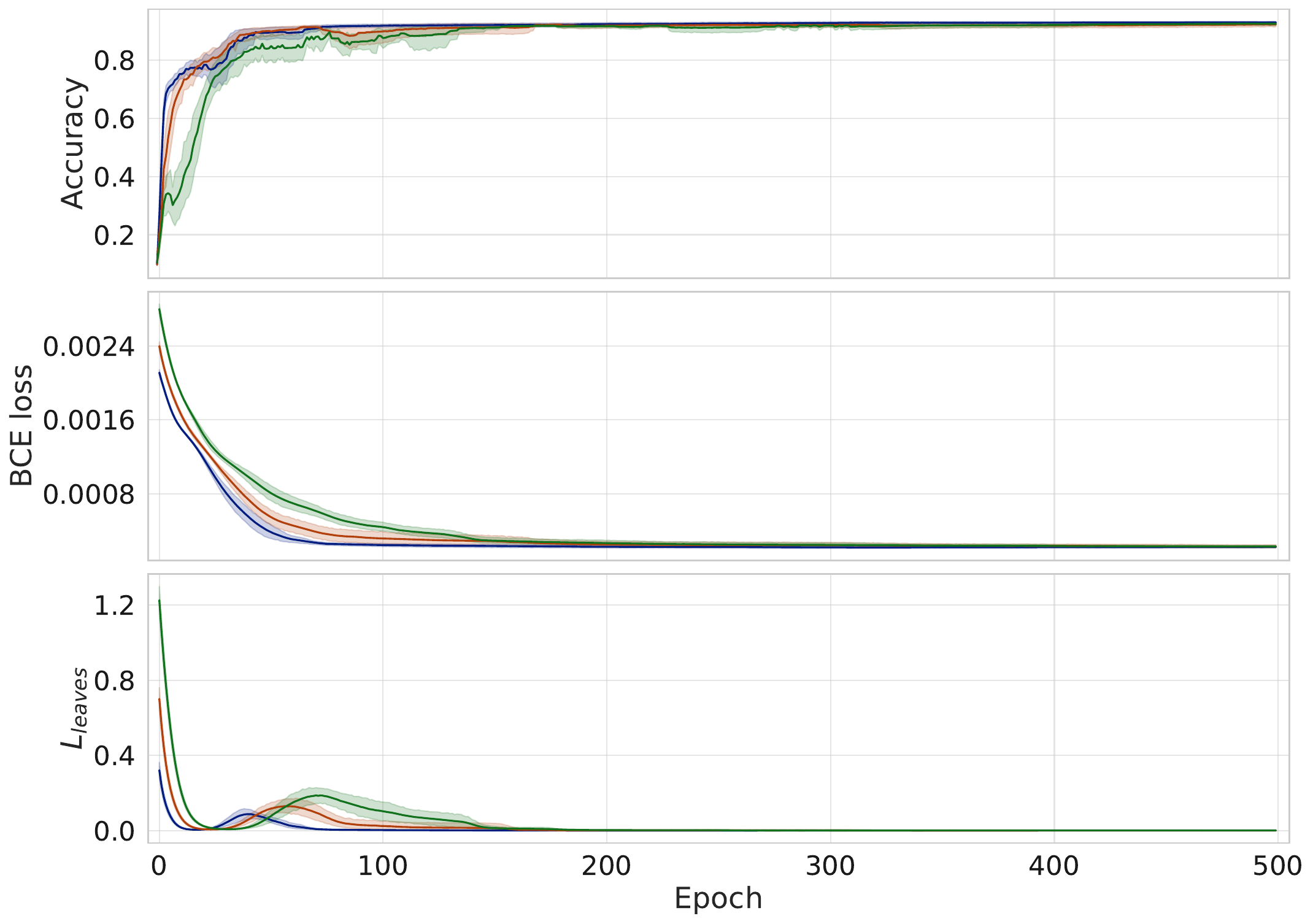}

        \caption{SONG trained without $L_{leaves}$ regularization.}
        \label{fig:loss_leaves}
    \end{subfigure}
    
    \caption{Accuracy, BCE loss, and $L_{leave}$ in the successive training epochs of SONG trained on the MNIST dataset. Each color represents a different number of internal nodes (64, 128, 255), and each line corresponds to mean and standard deviation over multiple training repetitions.}
    \label{fig:reg_ablation}
\end{figure*}

\section{Nodes and edges statistics}

Here, we show the nodes and edges statistics calculated for SONGs trained on the MNIST dataset (see Figure~\ref{fig:mnist_stat}, respectively). It is discussed in the article.

\begin{figure*}[tb]
    \centering
    \includegraphics[width=.9\textwidth]{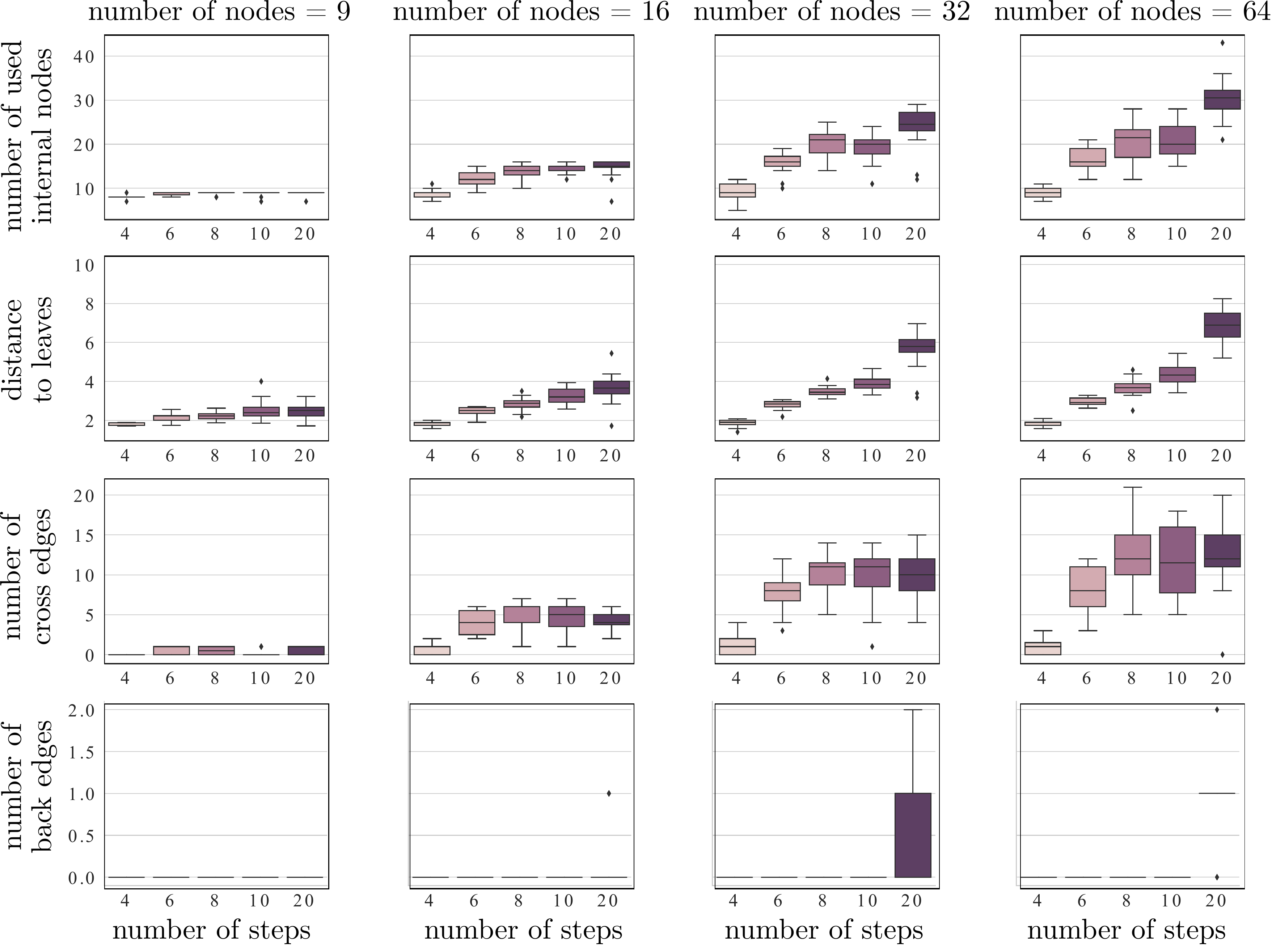}
    \caption{Nodes and edges statistics calculated for SONGs trained on the MNIST dataset. For each combination of the number of internal nodes and steps, 20 graphs are trained and used to plot the distributions of four statistics. One can observe a significant difference in SONG structure depending on those hyperparameters.}
    \label{fig:mnist_stat}
\end{figure*}

\section{Additional results}

Tables~
\ref{tab:fine}, and \ref{tab:shallow} show the relationship between the number of nodes and steps and prediction accuracy.

\begin{table}[ht]
    \centering
    \begin{subtable}{\linewidth}
        \centering
        \begin{tabular}{rrlll}
        \toprule
        nodes & steps & base & finetune & reset\\
        \midrule
        9 & 10 & 97.95 & 98.43 & 98.67 \\
        16 & 8 & 98.23 & \textbf{98.81} & 98.66 \\
        32 & 8 & 98.35 & 98.61 & 98.81 \\
        32 & 10 & 98.65 & 98.52 & 98.71 \\
        64 & 20 & 98.68 & 98.63 & 98.72 \\
        \bottomrule
        \end{tabular}
        \caption{MNIST.}
        \label{tab:fine_mnist}
    \end{subtable}\\
    \vspace{4mm}
    \begin{subtable}{\linewidth}
        \centering
        \begin{tabular}{rrlll}
        \toprule
        nodes & steps & base & finetune & reset\\
        \midrule
        9 & 10 & 94.94 & 94.98 & 95.26\\
        16 & 6 & 94.95 & 95.09 & 95.47\\
        32 & 6 & 94.95 & 95.12 & \textbf{95.62}\\
        64 & 10 & 94.94 & 95.03 & 95.41\\
        \bottomrule
      \end{tabular}
        \caption{CIFAR10.}
        \label{tab:fine_cifar}
  \end{subtable}
    \caption{Results obtained for selected models from Table~3 in the main paper (``base'') and their finetuned versions. We analyze two types of finetuning, either by using basis weights and finetune all the parameters of the network (``finetune'') or by taking the graph structure from the base model, reset other network parameters, and train the network from scratch (``reset''). One can observe that there is no obvious winning strategy, and it should be considered a hyperparameter. Notice also that we bold the performance reported in the main paper.}
    \label{tab:fine}
\end{table}

\begin{table}[ht]
    \centering
    \begin{subtable}{\linewidth}
        \centering
        \begin{tabular}{lrrrrrr}
        \toprule
        & \multicolumn{6}{c}{steps}\\
        \cmidrule{2-7}
        nodes & 5 & 10 & 20 & 30 & 40 & 50\\
        \midrule
        25 & 52.65 & 63.45 & 62.90 & 63.85 & 67.65 & 68.55 \\
        32 & 53.65 & 62.65 & 72.90 & 73.30 & 73.20 & 73.55 \\
        64 & 57.95 & 74.00 & 78.70 & 79.70 & 82.95 & 82.95 \\
        128 & 57.00 & 73.85 & 79.60 & 83.05 & 84.45 & 85.75 \\
        511 & 48.75 & 72.35 & 81.60 & 82.50 & 84.05 & \textbf{86.25} \\
        \bottomrule
        \end{tabular}
        \caption{Letter.}
        \label{tab:shallow_letter}
    \end{subtable}\\
    \vspace{4mm}
    \begin{subtable}{\linewidth}
        \centering
        \begin{tabular}{lrrrrr}
        \toprule
        & \multicolumn{3}{c}{steps}\\
        \cmidrule{2-4}
        nodes & 2 & 5 & 10 \\
        \midrule
        2 & 77.47 & 77.40 & 77.50 \\
        8 & 75.37 & 79.60 & 80.27 \\
        16 & 75.47 & 80.31 & 81.55  \\
        32 & 75.36 & 80.45 & 82.65 \\
        255 & 75.43 & 80.43 & \textbf{82.82} \\
        \bottomrule
        \end{tabular}
        \caption{Connect4.}
        \label{tab:shallow_connect4}
    \end{subtable}\\
    \vspace{4mm}
    \addtolength{\tabcolsep}{-3pt}
    \begin{subtable}{\linewidth}\small
        \centering
        \begin{tabular}{lrrrrrrrr}
        \toprule
        & \multicolumn{8}{c}{steps}\\ \cmidrule{2-9}
        nodes & 4 & 6 & 8 & 10 & 20 & 30 & 40 & 50 \\
        \midrule
        9 & 87.58 & 88.68 & 88.52 & 88.93 & 89.36 & 90.48 & 90.36 & 90.40 \\
        16 & 90.74 & 91.73 & 93.06 & 93.09 & 93.42 & 92.97 & 93.39 & 93.37 \\
        32 & 88.80 & 91.47 & 93.22 & 93.56 & 94.38 & 93.67 & 93.72 & 93.56 \\
        64 & 86.35 & 92.77 & 93.33 & 93.41 & 94.66 & 94.29 & 94.86 & 94.55\\
        128 & 90.10 & 93.11 & 93.65 & 94.15 & 94.58 & 94.80 & 94.99 & 94.97 \\
        255 & 90.05 & 93.11 & 93.80 & 93.88 & 94.28 & 94.75 & 95.43 & \textbf{95.74} \\
        \bottomrule
      \end{tabular}
        \caption{MNIST.}
        \label{tab:shallow_mnist}
    \end{subtable}
    \addtolength{\tabcolsep}{1pt}    
    \caption{SONG as a shallow model (SONG-S). One can observe that the performance increases with the increasing number of nodes and steps for all datasets. We bold the performance reported in the main paper.}
    \label{tab:shallow}
\end{table}

\section{Transition matrices}

In Figures~\ref{fig:M2}-\ref{fig:M6}, we present sample matrices $M^0$ and $M^1$ before and after training. One can observe that at the beginning, there are weak connections between all nodes. However, trained matrices are almost binary and usually contain one value close to $1$ in each column, and all other values are close to $0$.

\begin{figure*}
    \centering
    \begin{subfigure}[b]{0.45\textwidth}
        \centering
        \includegraphics[width=\textwidth]{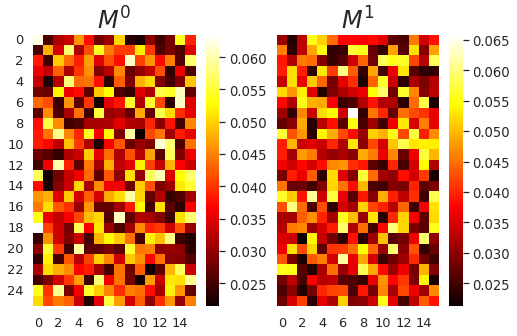}
        \caption{Initial values of $M^0$, $M^1$.}
    \end{subfigure}
    \qquad
    \begin{subfigure}[b]{0.45\textwidth}
        \centering
        \includegraphics[width=.97\textwidth]{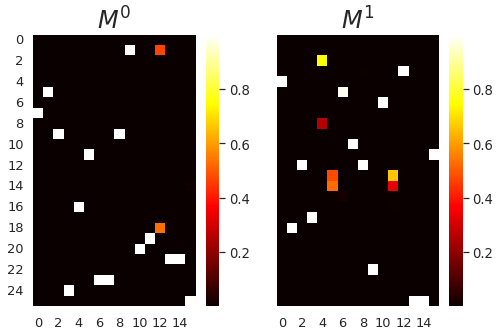}
        \caption{Trained values of $M^0$, $M^1$.}
    \end{subfigure}
    
    \caption{Sample matrices $M^0$ and $M^1$ of the SONG before and after training on the MNIST dataset with $16$ internal nodes.}
    \label{fig:M2}
\end{figure*}

\begin{figure*}
    \centering
    \begin{subfigure}[b]{0.45\textwidth}
        \centering
        \includegraphics[width=\textwidth]{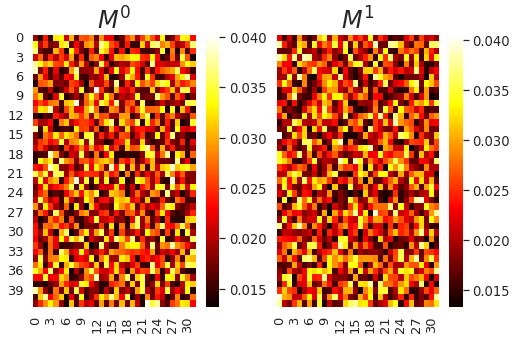}
        \caption{Initial values of $M^0$, $M^1$.}
    \end{subfigure}
    \qquad
    \begin{subfigure}[b]{0.45\textwidth}
        \centering
        \includegraphics[width=.97\textwidth]{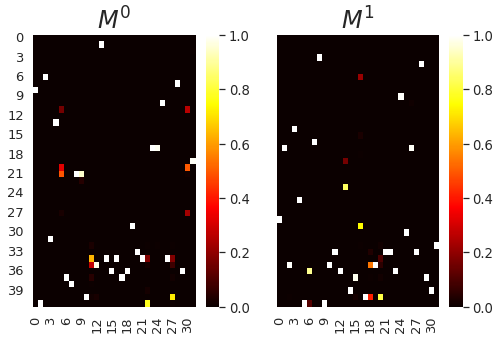}
        \caption{Trained values of $M^0$, $M^1$.}
    \end{subfigure}
    
    \caption{Sample matrices $M^0$ and $M^1$ of the SONG before and after training on the MNIST dataset with $32$ internal nodes.}
    \label{fig:M3}
\end{figure*}

\begin{figure*}
    \centering
    \begin{subfigure}[b]{0.45\textwidth}
        \centering
        \includegraphics[width=\textwidth]{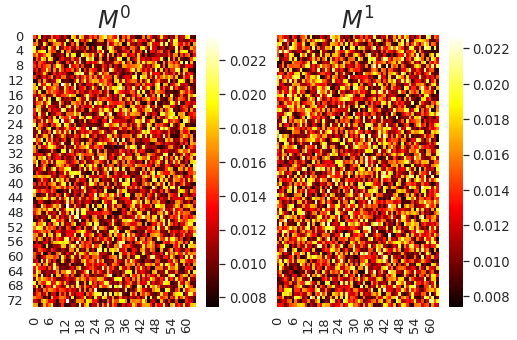}
        \caption{Initial values of $M^0$, $M^1$.}
    \end{subfigure}
    \qquad
    \begin{subfigure}[b]{0.45\textwidth}
        \centering
        \includegraphics[width=.97\textwidth]{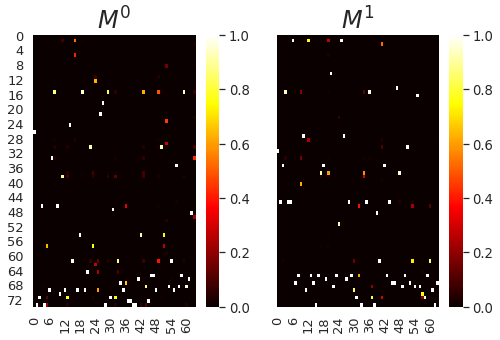}
        \caption{Trained values of $M^0$, $M^1$.}
    \end{subfigure}
    
    \caption{Sample matrices $M^0$ and $M^1$ of the SONG before and after training on the MNIST dataset with $64$ internal nodes.}
    \label{fig:M4}
\end{figure*}

\begin{figure*}
    \centering
    \begin{subfigure}[b]{0.45\textwidth}
        \centering
        \includegraphics[width=\textwidth]{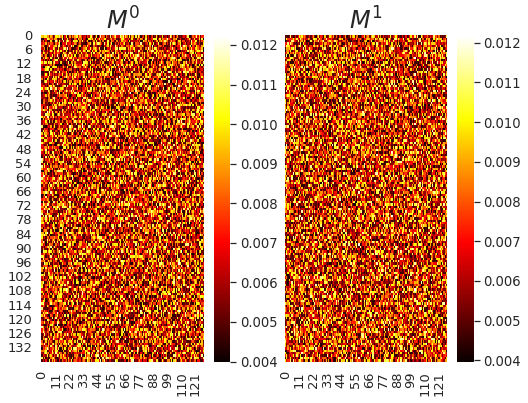}
        \caption{Initial values of $M^0$, $M^1$.}
    \end{subfigure}
    \qquad
    \begin{subfigure}[b]{0.45\textwidth}
        \centering
        \includegraphics[width=.97\textwidth]{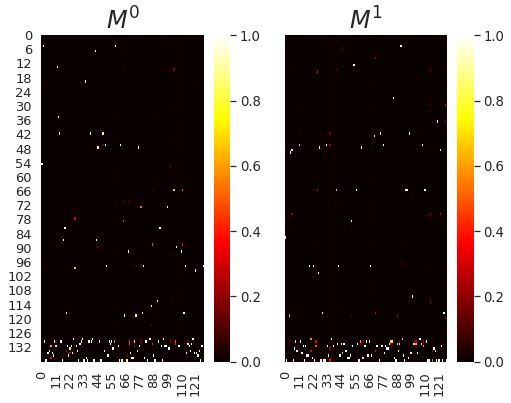}
        \caption{Trained values of $M^0$, $M^1$.}
    \end{subfigure}
    
    \caption{Sample matrices $M^0$ and $M^1$ of the SONG before and after training on the MNIST dataset with $128$ internal nodes.}
    \label{fig:M5}
\end{figure*}

\begin{figure*}
    \centering
    \begin{subfigure}[b]{0.46\textwidth}
        \centering
        \includegraphics[width=\textwidth]{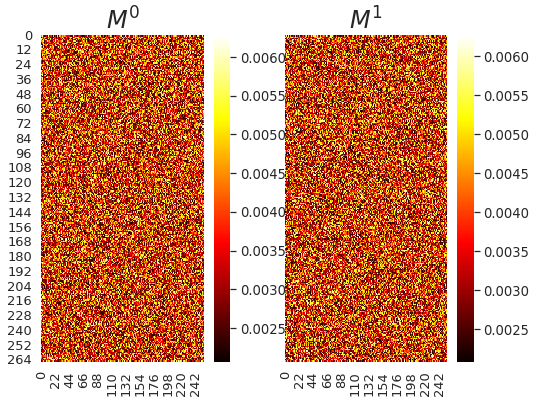}
        \caption{Initial values of $M^0$, $M^1$.}
    \end{subfigure}
    \qquad
    \begin{subfigure}[b]{0.45\textwidth}
        \centering
        \includegraphics[width=.97\textwidth]{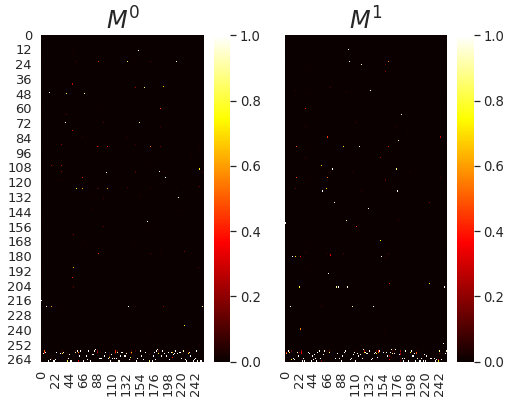}
        \caption{Trained values of $M^0$, $M^1$.}
    \end{subfigure}
    
    \caption{Sample matrices $M^0$ and $M^1$ of the SONG before and after training on the MNIST dataset with $256$ internal nodes.}
    \label{fig:M6}
\end{figure*}

\section{Experimental setup}
\label{sec:exp_setup}

We used the following datasets in our experiments:
\begin{itemize}
    \item Letter (\url{https://archive.ics.uci.edu/ml/datasets/Letter+Recognition}),
    \item Connect4 (\url{http://archive.ics.uci.edu/ml/datasets/connect-4}),
    \item MNIST  (published under CC BY-SA 3.0 license),
    \item CIFAR 10 \& CIFAR 100 (published under MIT license),
    \item TinyImageNet (\url{https://www.kaggle.com/c/tiny-imagenet/data}).
\end{itemize}

Moreover, we consider two types of setups, deep (SONG) and shallow (SONG-S). In SONG, we build neural networks that contain two successive parts, CNN and a graph. For the MNIST dataset, the CNN is built from two convolution layers with $8$ and $16$ filters of size $5\times 5$, each followed by ReLU and $2\times 2$ max pooling. Finally, a linear layer returns representation vectors of dimension 50. For other datasets (CIFAR10, CIFAR100, and TinyImageNet), we use model ResNet18 without the last linear layer. At the same time, for SONG-S, we only flatten the input sample to a one-dimensional vector.

For SONGs, we apply a similar experimental setup as in the state-of-the-art methods to have comparable results. More precisely, we take the previously trained ResNet18 network, remove its last layer, and use the remaining part as a CNN part. For the MNIST data, we train the first part directly using Binary Cross Entropy (BCE) loss. For the remaining datasets, we take a model from \url{github.com/alvinwan/neural-backed-decision-trees} (published under MIT license) trained with Cross Entropy (CE) and finetune it using BCE loss. During training the SONG, weights of CNNs are frozen. Moreover, the following hyper-parameters are considered in the grid-search:
\begin{itemize}
    \item For MNIST and CIFAR10:
    \begin{itemize}
        \item the number of nodes: 9, 16, 32, 64,
        \item the number of steps: 4, 6, 8, 10, 20.
    \end{itemize}
    \item For CIFAR100:
    \begin{itemize}
        \item the number of nodes: 99, 256, 512,
        \item the number of steps: 7, 12, 20, 40.
    \end{itemize}
    \item For TinyImagenet200:
    \begin{itemize}
        \item the number of nodes: 512,
        \item the number of steps: 20, 40.
    \end{itemize}
\end{itemize}
Additionally, we consider a batch size 64 or 128 and the learning rate $0.001$ for all datasets. Finally, when it comes to initialization, $\Mleft$, $\Mright$, and biases in nodes are initialized from a uniform distribution on the interval $[0, 1]$, and the remaining parameters (filters in the nodes) use the Kaiming initialization.

For SONG-S, the following hyper-parameters are considered in the grid-search:
\begin{itemize}
    \item For Letter dataset:
    \begin{itemize}
        \item the number of nodes: 25, 32, 64, 128, 511,
        \item the number of steps: 5, 10, 20, 30, 40, 50.
    \end{itemize}
    \item For Connect4 dataset:
    \begin{itemize}
        \item the number of nodes: 2, 8, 16, 32, 255,
        \item the number of steps: 2, 5, 10.
    \end{itemize}
    \item For MNIST dataset:
    \begin{itemize}
        \item the number of nodes: 9, 16, 32, 64, 128, 256,
        \item the number of steps: 4, 6, 8, 10, 20, 30, 40, 50.
    \end{itemize}
\end{itemize}
The remaining hyper-parameters are similar to the SONG setup.



\section{Computation time and resources}

We have run our experiments on Nvidia V100 32GB GPUs of our internal cluster. For deep setup, we trained 50, 50, 25, and 10 models for MNIST, CIFAR10, CIFAR100, and TinyImageNet, respectively. Each model required around 2, 2, 6, and 10 hours, respectively. For the shallow setup, we trained 60, 30, and 96 models for Letter, Connect4, and MNIST, respectively. In this case, each model required around 5, 2, and 2 hours, respectively.

\end{document}